\documentclass[final]{article}
\usepackage{iclr2022_conference,times}
\iclrfinalcopy

\usepackage[vlined,linesnumbered,ruled,resetcount]{algorithm2e}
\newcommand{\nextnr}{\stepcounter{AlgoLine}\ShowLn}
\SetKwFor{For}{for}{do}{endfor for}%
\usepackage{hyperref} 
    \hypersetup{
    	colorlinks = true,
    	linkcolor = blue,
    	anchorcolor = blue,
    	citecolor = blue,
    	filecolor = blue,
    	urlcolor = blue
    }
\usepackage{url}
\usepackage{booktabs}
\usepackage{nicefrac}
\usepackage{times}
\usepackage{enumitem}
\usepackage{footnote}
\makesavenoteenv{tabular}
\makesavenoteenv{table}
\usepackage{etoolbox}
\AfterEndEnvironment{equation}{\reseteqhint}
\AfterEndEnvironment{equation*}{\reseteqhint}
\AfterEndEnvironment{align}{\reseteqhint}
\AfterEndEnvironment{align*}{\reseteqhint}
\newcounter{hints}
\renewcommand{\thehints}{\alph{hints}}
\newcommand{\eqhint}[2][]{%
  \stepcounter{hints}%
  \if\relax\detokenize{#1}\relax\else\csxdef{hint@#1}{\thehints}\fi
  \mathrel{\overset{\textrm{(\thehints\hspace{0.01em})}}{\vphantom{\le}{#2}}}%
}
\newcommand{\reseteqhint}{\setcounter{hints}{0}}

\makeatletter
 \let\Ginclude@graphics\@org@Ginclude@graphics 
\makeatother
\usepackage{float}
\usepackage{subcaption}
\usepackage{graphicx}
\title{Universal Approximation Under Constraints is Possible with Transformers}
\author{Anastasis Kratsios\thanks{Corresponding authors.}, Tianlin Liu \& Ivan Dokmani\'{c}\\
Universit\"{a}t Basel, \\
Departement Mathematik und Informatik
\\
\texttt{\{firstname.lastname\}@unibas.ch} \\
\And
Behnoosh Zamanlooy$^*$ \\
Universit\"{a}t Z\"{u}rich, \\
Department of Informatics
\\
\texttt{bzamanlooy@ifi.uzh.ch} \\
}

\usepackage[utf8]{inputenc}
\usepackage{appendix}
\usepackage{mathtools}
\usepackage{bbm}
\usepackage{booktabs}
\usepackage{xparse}
\usepackage{sidecap}
\usepackage{float}
\usepackage{mdwlist}
\usepackage[bbgreekl]{mathbbol}
\usepackage{amsfonts}
\usepackage{amscd}
\usepackage{amsmath,amsfonts,mathrsfs}
\usepackage{amsthm}
\usepackage{cleveref}
\usepackage{bm}
\usepackage{latexsym}
\usepackage{mleftright}
\usepackage{color}
\usepackage{xifthen}
\usepackage{makeidx}
\usepackage{amsmath}
\usepackage{amsfonts}
\usepackage{amssymb}
\usepackage{mathrsfs}
\usepackage{tikz,tikz-cd}
\usepackage{tkz-euclide}
\usetikzlibrary{matrix}
\usetikzlibrary{calc}
\usetikzlibrary{shapes.geometric,shapes.misc, shapes.symbols}
\usetikzlibrary{fit}
\usepackage{smartdiagram}
\usepackage{multirow}

\newcommand{\rr}{{\mathbb{R}}}
\newcommand{\pp}{{\mathbb{P}}}
\newcommand{\qq}{{\mathbb{Q}}}
\newcommand{\rrflex}[1]{{\ensuremath{\rr^{#1}
}}}

\newcommand{\rrn}{{\rrflex{n}}}

\newcommand{\rrm}{{\rrflex{m}}}

\newcommand{\xxx}{\mathcal{X}}
\newcommand{\yyy}{\mathcal{Y}}

\newcommand{\ee}{{\mathbb{E}}}
\newcommand{\nn}{{\mathbb{N}}}
\newcommand{\zz}{{\mathbb{Z}}}

\NewDocumentCommand{\ppp}{mo}{
                \ensuremath{
                            \mathcal{P}\IfValueT{#2}{_{{#2}}}\IfValueF{#2}{_1}
                            \left(
                                {#1}
                            \right)
                }
                    }

\newcommand{\www}{{\mathscr{W}}}

\NewDocumentCommand{\NN}{oo}{
    \ensuremath{
        \mathcal{NN}
        \IfValueT{#1}{_{#1}}\IfValueF{#1}{_{d:D}}
        \IfValueT{#2}{^{#2}}\IfValueF{#2}{^{\sigma}}
    }
}
\newtheorem{previewtheorem}{Informal Theorem}[section]
\newtheorem{example}[previewtheorem]{Example}
\newtheorem{remark}[previewtheorem]{Remark}
\newtheorem*{notee}{Note}

\newtheorem{corollary}[previewtheorem]{Corollary}
\newtheorem{theorem}[previewtheorem]{Theorem}
\newtheorem{lemma}[previewtheorem]{Lemma}

\newtheorem{definition}[previewtheorem]{Definition}

\newtheorem{assumption}[previewtheorem]{Assumption}

\let\tmp\newinsert
\let\newinsert\newbox
\let\newinsert\tmp
\usepackage{adjustbox}

\usepackage{xcolor}
\definecolor{TealBlue}{RGB}{54,117,136}
\definecolor{darkcerulean}{rgb}{0.03, 0.27, 0.49}
\definecolor{darkmidnightblue}{rgb}{0.0, 0.2, 0.4}
\definecolor{darkcyan}{rgb}{0.0, 0.55, 0.55}
\definecolor{forestgreen}{RGB}{34,139,34}
\definecolor{darkgreen}{rgb}{0.0, 0.2, 0.13}
\definecolor{deepjunglegreen}{rgb}{0.0, 0.29, 0.29}
\definecolor{BurntOrange}{RGB}{204,85,0}
\definecolor{darkcandyapplered}{rgb}{0.64, 0.0, 0.0}
\definecolor{darkred}{rgb}{0.55, 0.0, 0.0}
\definecolor{darkscarlet}{rgb}{0.34, 0.01, 0.1}
\definecolor{jasper}{rgb}{0.84, 0.23, 0.24}
\definecolor{darkjazzberryjam}{rgb}{0.45, 0.04, 0.37}
\definecolor{darkpurpleheart}{rgb}{0.36, 0.17, 0.56}
\definecolor{lightpurpleheart}{rgb}{0.66, 0.47, 0.86}
\definecolor{MidnightBlue}{RGB}{25,25,112}
\definecolor{MidnightBlueComplementingGreen}{RGB}{25,112,25}
\definecolor{MidnightBlueComplementingPurple}{RGB}{112,25,112}
\definecolor{MidnightBlueComplementingRed}{RGB}{112,25,69}
\begin{document}
\maketitle
\begin{abstract}
Many practical problems need the output of a machine learning model to satisfy a set of constraints, $K$. There are, however, no known guarantees that classical neural networks can exactly encode constraints while simultaneously achieving universality.  We provide a quantitative constrained universal approximation theorem which guarantees that for any convex or non-convex compact set $K$ and any continuous function $f:\mathbb{R}^n\rightarrow K$, there is a probabilistic transformer $\hat{F}$ whose randomized outputs all lie in $K$ and whose expected output uniformly approximates $f$.  Our second main result is a ``deep neural version'' of \citet{BergeMaximumOG_Articel1963}'s Maximum Theorem.  The result guarantees that given an objective function $L$, a constraint set $K$, and a family of soft constraint sets, there is a probabilistic transformer $\hat{F}$ that approximately minimizes $L$ and whose outputs belong to $K$; moreover, $\hat{F}$ approximately satisfies the soft constraints.  Our results imply the first universal approximation theorem for classical transformers with exact convex constraint satisfaction, and a chart-free universal approximation theorem for Riemannian manifold-valued functions subject to geodesically-convex constraints.
\end{abstract}
\hfill\\
%
\textbf{Keywords}:  
Constrained Universal Approximation, Probabilistic Attention, Transformer Networks, Geometric Deep Learning, Measurable Maximum Theorem, Non-Affine Random Projections.
\section{Introduction}\label{s_intro}
In supervised learning, we select a parameterized model $\hat{f}:\rr^n\rightarrow \rr^m$ by optimizing a real-valued loss function%
\footnote{For example, in a regression problem one can set $L(x, y) = \| f(x) - y \|$ for an unknown function $f$ or in regression problems one sets $L(x,y)=\sum_{i=1}^m [C(x)]_i\log(y_i)$ for an unknown classifier $C$.}%
$L$ over training data from an input-output domain $\xxx\times \yyy\subseteq \rr^n\times \rr^m$.  A necessary property for a model class to produce asymptotically optimal results, for any continuous loss $L$, is the {\color{darkcerulean}{universal approximation property}}.  However, often more {\color{darkgreen}{structure}} (beyond vectorial $\rr^m$) is present in a learning problem and this structure must be encoded into the trained model $\hat{f}$ to obtain meaningful or feasible predictions.  This additional structure is typically described by a constraint set $K\subseteq \rr^m$ and the condition $\hat{f}(\xxx)\subseteq K$.  For example, in classification $K=\{y\in [0,1]^m:\,\sum_{i=1}^m y_i =1\}$ \citep{ShalevShai22014UnderstandingML}, in Stackelberg games \citep{holters2018playing,jin2020local,li2021complexity} $K$ is the set of utility-maximizing actions of an opponent, in integer programming $K$ is the integer lattice $\zz^m$ \citep{conforti2014integer}, in financial risk-management $K$ is a set of positions meeting the minimum solvency requirements imposed by international regularity bodies \citep{FRTB,MinCapReq,PaulQRM}, in covariance matrix prediction $K\subseteq \rr^{m\times m}$ is the set of $m\times m$ matrices which are symmetric and positive semi-definite \citep{bonnabel2013rank,BonnabelPSD,baes2021lowrank}, in geometric deep learning $K$ is typically a manifold {\color{black}{
(e.g. a pose manifold in computer vision and robotics \citep{IEEEding2014multilayer} or a manifold of distance matrices \citep{IEEEEuclideanDistanceMatrices})}}, a graph, or an orbit of a group action \citep{bronstein2017geometric,bronstein2021geometric,kratsios2020non}.  Therefore, we ask: 
\[
\textit{Is exact constraint satisfaction possible with universal deep learning models?}
\]

The answer to this question begins by examining the classical universal approximation theorems for deep feedforward networks.   If $L$ and $K$ are mildly regular, the universal approximation theorems of \cite{hornik1989multilayer,Cybenko,pinkus1999approximation,Yarotski,kidger2019universal,park2020minimum} guarantee that for any ``good activation function $\sigma$'' and for every tolerance level $\epsilon>0$, there is a deep feedforward network with activation function $\sigma$, such that $\inf_{y \in K}\, L(x,y)$ and $L(x,\hat{f}(x))$ are uniformly at most $\epsilon$ apart.  Written in terms of the optimality set,
\begin{equation}
\sup_{x\in \mathcal{X}}
\|\hat{f}(x) -
\underset{
{\color{darkgreen}{y \in K}}
}{\operatorname{argmin}}
\,
L(x,y)
\|
{
\color{darkcerulean}{
        \leq 
\epsilon
}
}
\label{eq_motivation_1_constrained_approx}
,
\end{equation}
where the distance of a point $y \in \rr^m$ to a set $A\subseteq \rr^m$ is defined by $\|y-A\|\triangleq \inf_{a\in A}\|y-a\|$.  
Since $\operatorname{argmin}_{y \in K} L(x,y)\subseteq K$, then~\eqref{eq_motivation_1_constrained_approx} only implies that $\|\hat{f}(x)-K\|\leq \epsilon$ and there is no reason to believe that the {\color{darkgreen}{constraint $
\hat{f}(x)\in K
$}} is exactly satisfied, for every $x \in \xxx$.  

This kind of \textit{approximate} constraint satisfaction is not always appropriate.  In the following examples constraint violation causes either practical or theoretical concerns:
\begin{enumerate}[nolistsep]
    \item[(i)] In post-financial crisis risk management, international regulatory bodies mandate that any financial actor should maintain solubility proportional to the risk of their investments \citep{FRTB,MinCapReq}.  To prevent future financial crises, any violation of these risk constraints, no matter the size, incurs large and immediate fines.  
    \item[(ii)] In geometric deep learning, we often need to encode complicated non-vectorial structure present in a dataset, by viewing it as a $K$ valued function \citep{fletcher2011geodesic,BonnabelPSD,baes2021lowrank}.  However, if $K$ is non-convex then \citet{motzkin1935quelques} confirms that there is no unique way to map predictions $\hat{f}(x)\not\in K$ to a closest point in $K$.  Thus, we are faced with the dilemma: either make an ad-hoc \textit{choice} of a $ k$ in $K$ with $k\approx \hat{f}(x)$ (ex.: an arbitrary choice scheme when $K=\zz^m$) or have meaningless predictions (ex: non-integer values to integer programs, or symmetry breaking \citep{weinberg1976implications}\footnote{As discussed in \cite{rosset2021replab} this is problematic since respecting symmetries can often massively reduce the computational burden of a learning task.}).  
\end{enumerate}

Constrained learning was recognized as an effective framework for fairness and robustness by \cite{chamon2020probably} who study empirical risk minimization under constraints. Many emerging topics in machine learning lead to constrained learning formulations. A case in point is model-based domain generalization \citep{robey2021model}.  
Despite the importance of (deep) learning with constraints, there are no related approximation-theoretic results to the best of our knowledge.

In this paper, we bridge this theoretical gap by showing that {\color{darkcerulean}{universal approximation}} with {\color{darkgreen}{exact constraint satisfaction}} is always possible for deep \textit{(probabilistic) transformer networks} with a single attention mechanism as output layer.  Our contribution is three-fold:
\begin{enumerate}[nolistsep]
    \item We derive the first universal approximation theorem with exact constraint satisfaction;
    \item Our transformer network's encoder and decoder adapt to the dimension of the constraint set and thus beat the curse of dimensionality for low-dimensional constraints;
    \item Our models leverage a probabilistic attention mechanism that can encode non-convex constraints. This probabilistic approach is key to bypass the topological obstructions to non-Euclidean universal approximation \citep{kratsios2021_GDL}.
\end{enumerate}
Our analysis provides perspective on the empirical success of attention and adds to the recent line of work on approximation theory for transformer networks, \citep{Yun2020TransformersUniversalSequencesICRL,Yun2020SparseTransformersUniversalNeurIPS2020}, which roughly considers the unconstrained case (with $K$ in~\eqref{eq_motivation_1_constrained_approx} replaced by $\rr^m$) in the special case of $L(x,y)=\|f(x)-y\|$ for a suitable target function $f:\rr^n\rightarrow \rr^m$.  Our probabilistic perspective on transformer networks fits with the representations of \cite{vuckovic2021regularity} and of \cite{kratsios2021_GCDs}.

Our results can be regarded as an approximation-theoretic counterpart to the constrained statistical learning theory of \cite{chamon2020probably}.  Further, they put forward a perspective on randomness in neural networks that is complementary to the work of  \citet{RandomRNN_RandomMatrixPerspective_AnnStat2018,GononJPLyudmila_RandomJMLR2020,GononJPLyudmila_RandomRNNs2021}. We look at the same problem focusing on constraint satisfaction instead of training efficiency.  Finally, our proof methods are novel, and build on contemporary tools from metric geometry \citep{AmbriosioPuglisi2020RandomProjections,Brue2021Extension}.  

\vspace{-.5em}
\subsection{The Probabilistic Attention Mechanism}\label{s_Intro_ss_Attention}
We now give a high-level explanation of our results; the detailed formulations are in Section~\ref{s_Main_Results}.  

Introduced in \citep{Bahdanauetal2015ICLRAttentionIsInvented} and later used to define the transformer architecture \citep{vaswani2017attention}, in the NLP context, \textit{attention} maps a matrix of queries $Q$, a matrix of keys $K$, and a matrix of values $V$ to the quantity $\operatorname{Softmax}(QK^{\top})V$, where the softmax function (defined below) is applied row-wise to $QK^{\top}$. Just as the authors of \citep{PetersenVoigtaender2020Convnetquievalencetoffnn,Zhou2020UniversalConvNetsAppliedHarmonicAnalysis} focus on the simplified versions of practically implementable ConvNets in the study of approximation theory of deep ConvNets (e.g. omitting pooling layers), we find it sufficient to study the following simplified attention mechanism to obtain universal approximation results:
\begin{equation}
    \operatorname{Attention}\left(w,Y\right)\triangleq 
\operatorname{Softmax}_N\left(w\right)^{\top}Y
= \sum_{n=1}^N [\operatorname{Softmax}_N(w)_n]Y_n
\label{eq_definition_attention}
,
\end{equation}
where $w\in \rr^N$, $\operatorname{Softmax}_N:\rr^N\ni w \mapsto (\frac{e^{w_k}}{\sum_{j=1}^N e^{w_j}})_{k=1}^N$, and $Y$ is an $N\times m$ matrix.  The attention mechanism~\eqref{eq_definition_attention} can be interpreted as ``paying attention'' to a set of particles $Y_1,\dots,Y_N\in \rr^m$ defined by $Y$'s rows.  This simplified form of attention is sufficient to demonstrate that transformer networks can approximate a function while respecting a constraint set, $K$, whether convex or non-convex. 
\begin{previewtheorem}[Deep Maximum Theorem for Transformers]\label{previewtheorem_Convex_Case}
If {\color{darkgreen}{$K$ is convex}} and the quantities defining~\eqref{eq_motivation_1_constrained_approx} are regular then, for any $\epsilon\in (0,1]$, there is a feedforward network $\hat{f}$, an ${\xxx_{\epsilon}}\subset \rr^n$ of probability $1\mbox{-}\epsilon$, and a matrix $Y$ such that the transformer $\operatorname{Attention}(\hat{f}(x),Y)$ satisfies:
\begin{enumerate}[nolistsep]
    \item[(i)] \textbf{Exact Constraint Satisfaction:} For each $x\in \rr^n$, $\operatorname{Attention}(\hat{f}(x),Y){\color{darkgreen}{\in K}}$,
    \item[(ii)] \textbf{Universal Approximation:} $\sup_{x\in \xxx_{\epsilon}}\,
\|\operatorname{Attention}(\hat{f}(x),Y) -
\underset{
{\color{darkgreen}{y^{\star}\in K}}
}{\operatorname{argmin}}
\,
L(x,y^{\star})
\|
{
\color{darkcerulean}{
        \leq 
\epsilon
}}
$
\end{enumerate}
\end{previewtheorem}
Informal Theorem~\ref{previewtheorem_Convex_Case} guarantees that simple transformer networks can minimize any loss function while exactly satisfying the set of {\color{darkgreen}{convex constraints}}.  As illustrated by Figure~\ref{fig_convex_classical_transformers} and Figure~\ref{fig_nonconvex_classical_transformers}, $K$'s convexity is critical here, since without it the transformer's prediction may fail to lie in $K$.  This is because any transformer network's output is a convex combinations of the \textit{particles} $Y_1,Y_2,Y_3$; thus, any transformer network's predictions must belong to these particles' convex hull.

\begin{figure}[H]
\centering
\begin{minipage}[b]{0.4\linewidth}
\centering
\includegraphics[width=0.4 \linewidth]{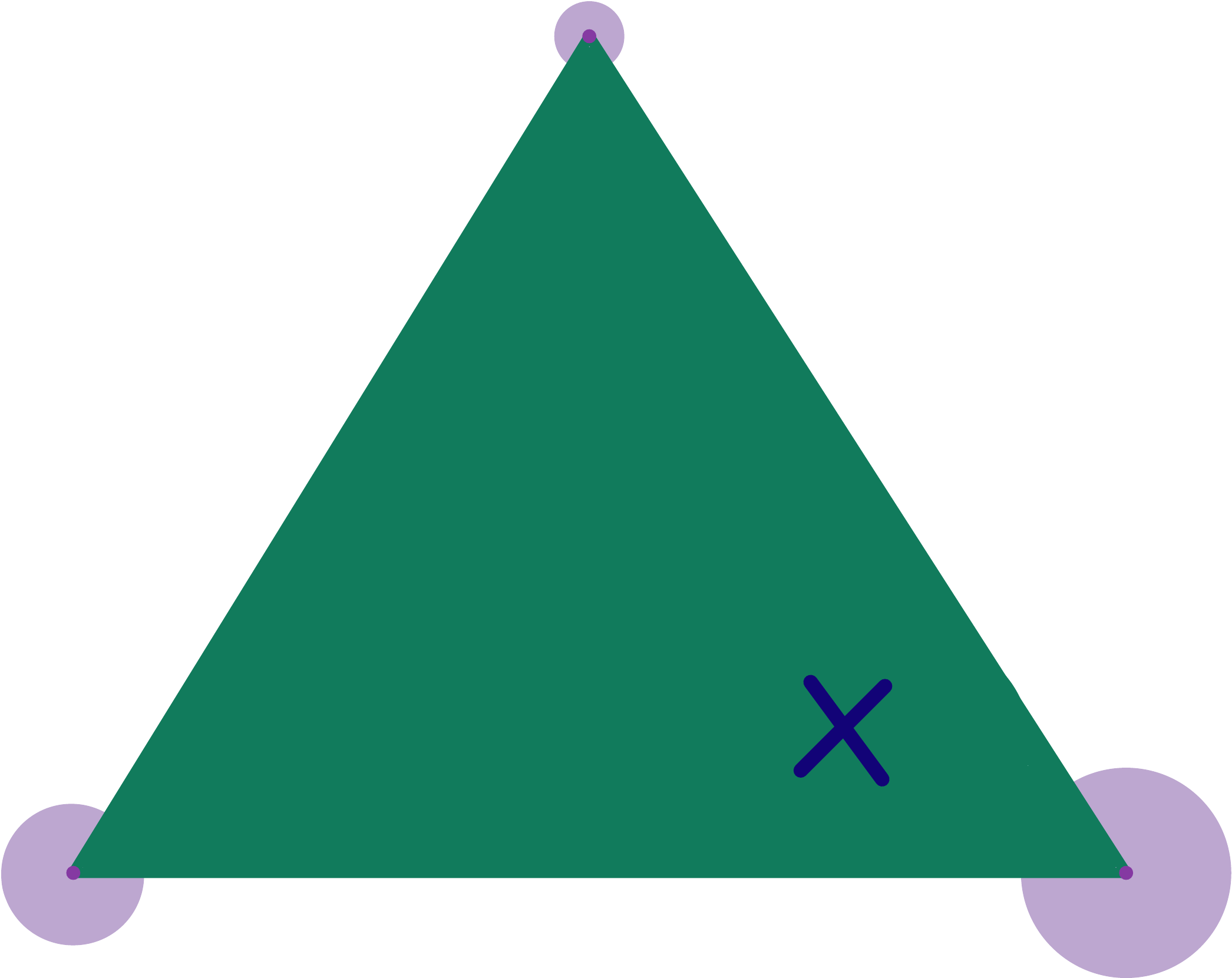}
\captionof{figure}{Convex Constraints}
\label{fig_convex_classical_transformers}
\end{minipage} \hfill
\begin{minipage}[b]{0.5\linewidth}
\centering
\includegraphics[width=0.32 \linewidth]{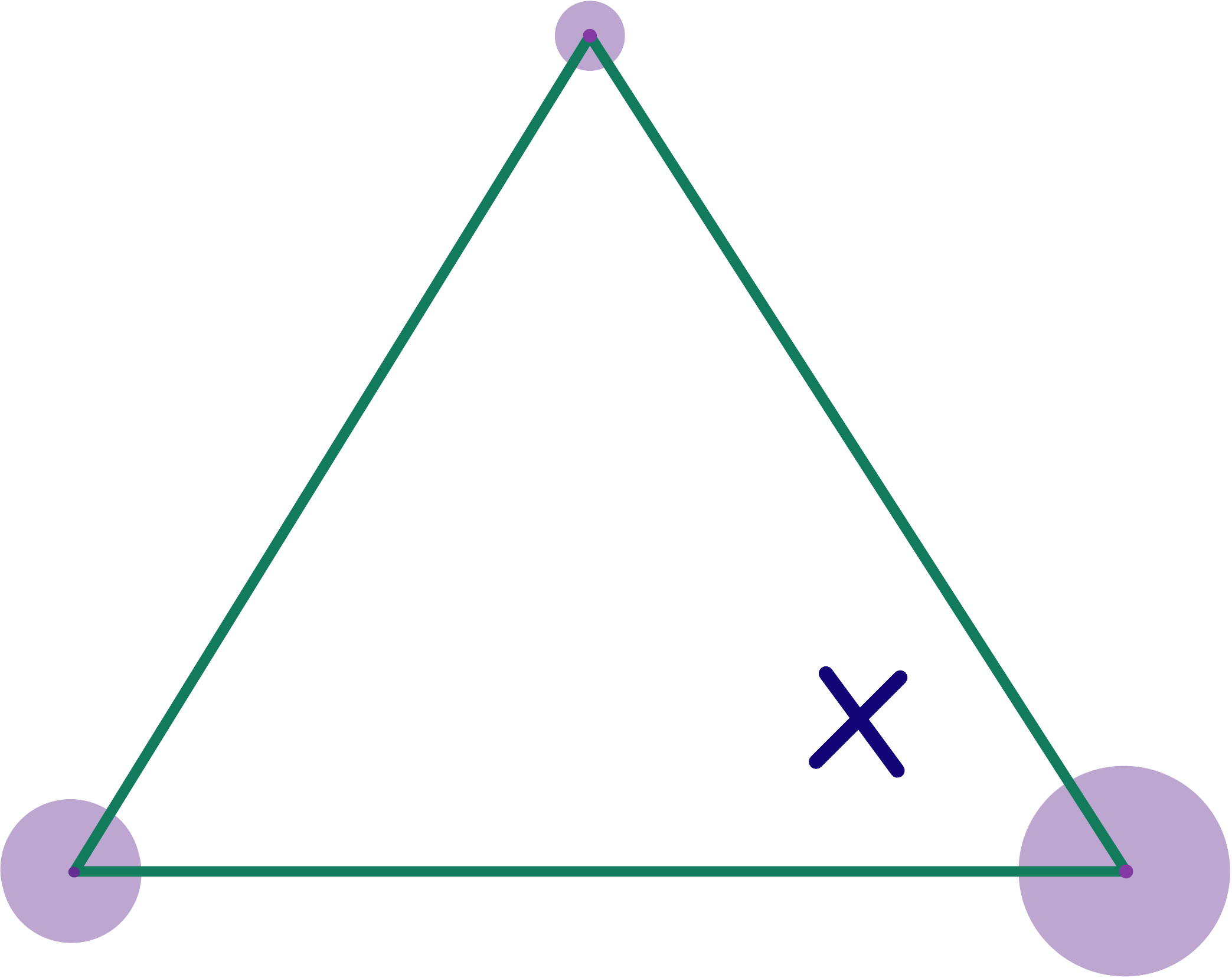}
\captionof{figure}{Non-Convex Constraints}
\label{fig_nonconvex_classical_transformers}
\end{minipage}
\end{figure}
\vspace{-1.5em}
{\color{black}{
In Figures~\ref{fig_convex_classical_transformers} and~\ref{fig_nonconvex_classical_transformers}, $Y$'s columns, i.e. the particles $Y_1,Y_2$, and $Y_3$, are each illustrated by a {\color{darkpurpleheart}{$\bullet$}} at {\color{darkgreen}{the constraint set ($K$)}} vertices.  The {\color{lightpurpleheart}{bubble}} around each each $Y_i$ illustrates the predicted probability, for a given input, that $f(x)$ is nearest to that $Y_i$.  The {\color{blue}{$\boldsymbol{\times}$}} is the transformer's prediction which is, by construction, a convex combination of the $Y_i$ weighted by the aforementioned probabilities and therefore they lie in the {\color{darkgreen}{$K$}} if it is convex (Figure~\ref{fig_convex_classical_transformers}) but not if {\color{darkgreen}{$K$}} is non-convex (Figure~\ref{fig_nonconvex_classical_transformers}).
}}

Naturally, we arrive at the question: 
\textit{How can (i) and (ii) simultaneously hold when $K$ is non-convex?}

Returning to \citet{vaswani2017attention} and using the introduced terminology, we note that the role of the $\operatorname{Softmax}_N$ layer is to rank the importance of the particles $\{Y_n\}_{n=1}^N$ when optimizing $L$, at any given input: the weights $[\operatorname{Softmax}_N(w)]_n$ in~\eqref{eq_definition_attention} can be interpreted as charging their respective \textit{point masses} $\{\delta_{Y_n}\}_{n=1}^N$ with probabilities of being optimal for $L$ (relative to the other particles)%
\footnote{Following \cite{Villani2009optimal}, $\delta_{Y_n}$ is the Borel probability measure on $\rr^m$ assigning full probability to any Borel subset of $\rr^m$ containing the particle $Y_n$ and $0$ otherwise.  }%
.  This suggests the following \textit{probabilistic reinterpretation of attention} {\color{black}{(which we denote by p-attention)}}:
\begin{equation}
    \operatorname{P-attention}(w,Y)
        \triangleq 
    \sum_{n=1}^N [\operatorname{Softmax}_N(w)]_n\delta_{Y_n}
    \label{eq_probabilistic_attention}
    .
\end{equation}
Crudely put, $\operatorname{P-attention}(\cdot,Y)$ \textit{``pays relative attention to the particles''} $Y_1,\dots,Y_n\in \rr^m$.  

A simple computation shows that the mean prediction of our probabilistic attention mechanism, exactly implements ``classical'' $\operatorname{Attention}$ of \cite{vaswani2017attention}, as defined in~\eqref{eq_definition_attention},
\begin{equation}
    \operatorname{Attention}(w,Y)
    =
\mathbb{E}_{X\sim \operatorname{P-attention}(w,Y)}[X]
\label{eq_THE_KEY_IDENTITY___attention_implementation_via_expecation}
,
\end{equation}
where $\mathbb{E}_{X\sim \operatorname{P-attention}(w,Y)}[X]$ denotes the (vector-valued) expectation of a random-vector $X$ distributed according to $\operatorname{P-attention}(w,Y)$.  Hence,~\eqref{eq_probabilistic_attention} is no less general than~\eqref{eq_definition_attention}.  
The advantage of~\eqref{eq_probabilistic_attention} is that, if each particle $Y_n$ belongs to $K$ (even if $K$ is non-convex) then,  any sample drawn from the probability measure $\operatorname{P-attention}(w,Y)$ necessarily belongs to $K$.  
\vspace{-.5em}
\subsection{Qualitative Results: Deep Maximum Theorem}
\label{s_Intro_ss_Qualitative_deep_berge_theorem}
\vspace{-.5em}
Probabilistic attention \eqref{eq_probabilistic_attention} yields the following non-convex generalization of Informal Theorem~\ref{previewtheorem_Convex_Case}.  The result is a \textit{qualitative universal approximation theorem} as well as a deep neural version of the Maximum Theorem%
\footnote{
More precisely, our result is a deep neural version of the measure-theoretic counterpart to Berge's Maximum Theorem; see \citep[(Measurable Maximum Theorem) - Theorem 18.19]{InfiniteHitchhiker2006}.
} \citep{BergeMaximumOG_Articel1963}, which states that under mild regularity conditions, given any well-behaved family of input dependent ``soft constraint sets''  $\{C_x\}_{x\in \rr^n}$ compatible with $K$, there is a measurable function mapping each $x \in \rr^n$ to a minimizer of $L(x,y)$ on $K\cap C_x$.

We use $\www_1$ to denote the Wasserstein-1 distance between probability measures on $K$.  The results also give the flexibility to the user to enforce an input-dependent family of ``soft constraints'' $\{C_x\}_{x\in \rr^n}$ which only need to hold approximately; definitions are provided in Section~\ref{s_Intro_ss_BandNotation}.  
\begin{previewtheorem}[Deep Maximum Theorem: Non-Convex Case]\label{previewtheorem_General}
If the quantities defining~\ref{eq_motivation_1_constrained_approx} are regular, $K$ is a compact set of ``exact constraints'', and $\{C_x\}_{x\in \rr^n}$ a set of ``soft constraints'', then, for any approximation quality $0<\epsilon\leq 1$, there is a deep feedforward network $\hat{f}$ and a matrix $Y$ satisfying:
\begin{enumerate}[nolistsep,leftmargin=2em]
\item[(i)] \textbf{Exact Constraint Satisfaction:} For each $x \in \rr^n$, $\operatorname{P-attention}(\hat{f}(x),Y)$ is supported in $K$,
\item[(ii)] \textbf{Universal Approximation:} 
    $
    \pp(
      \www_1
        (
            \operatorname{P-attention}(\hat{f}(x),Y)
        ,
    \underset{{y^{\star}\in C_x\cap K}}{\operatorname{argmin}}
    \, L(x,y^{\star})
        )
    \leq \epsilon
)\geq 1-\epsilon;
    $
    \end{enumerate}
\vspace{-1em}
where for a probability measure $\pp$ on $\rr^m$ and a $B\subseteq \rr^m$ we define $\www_1(\pp,B)\triangleq \inf_{b\in B}\,\www_1(\pp,\delta_{b})$.
\end{previewtheorem}
\begin{example}[Reduction to Classical Point-to-Set Distance]\label{ex_redux_classical}
In particular, when $\pp$ is a point-mass $\pp=\delta_y$ for some $y\in \rr^m$, then one recovers the familiar \textit{Euclidean distance to the set $B$} via:
$$
\www_1(\delta_y,B)
    \overset{\operatorname{(def)}}{=}
\inf_{b\in B}\, \www_1(\delta_y,\delta_b)
    =
\inf_{b\in B}\,\|y-b\| 
    \overset{\operatorname{(def)}}{=}
\|y-B\|;
$$
where the first and second equality follows from \citep[(5) - page 99]{Villani2009optimal}, and the last equality is the definition of $\|y-B\|$ (as in \citep[Definition 1.1.1]{SetValuedAnalysisAubinFrankowska2009}).
\end{example}
Another  important class of non-convex constraints arising from geometric deep learning where $K$ is a non-Euclidean ball in a Riemannian submanifold of $\rr^m$.  In this broad case, we may extract mean predictions from $\operatorname{P-attention}(\hat{f},Y)$, by applying the \textit{Fr\'{e}chet mean} introduced in \cite{FrechetOGPaperFrechetMeansIntroduced1948}.  Such ``geometric means'' are well-understood theoretically \citep{bhattacharya2003large} and easily handled numerically \cite{JMLR_Manopt_MiolanGuilguiLeBriantetcGeomSTATsPython,lou2020differentiating}.
\subsection{Quantitative Results: Constrained Universal Approximation Theorem}\label{s_Intro_ss_Quantitative_constrained_UAT}
In its current form, the objective function $L$ is too general to derive quantitative approximation rates\footnote{For instance, $L$ can describe anything from a regression, to a clustering problem.}.      
Nevertheless, as with most universal approximation theorems \citep{hornik1989multilayer,pinkus1999approximation,kidger2019universal}, if each soft constraint $C_x$ is set to $\rr^m$ and $L$ quantifies the uniform distance to an \textit{unknown continuous function} $f:\rr^n\rightarrow K$ in the Euclidean sense,
\[
L(x,y)\triangleq \|f(x)-y\|
,
\]
then, Informal Theorem~\ref{previewtheorem_General} reduces to a (qualitative) {\color{darkcerulean}{universal approximation}} for transformer networks with {\color{darkgreen}{exact constraint satisfaction}}.  In fact, this additional structure is enough for us to derive quantitative versions of the aforementioned results.  
We permit ourselves the general situation, where $K$ is contained in an unknown \textit{$d$-dimensional} submanifold (where $d\in \Theta(m^{\frac1{s}})$ for some $s>0$).  Our approximation rates scale favourably in the ratio $s\approx \frac{\log(m)}{\log(d)}$; i.e., we avoid the curse of dimensionality for low-dimensional constraint sets.  This additional structure translates into the familiar encoder-decoder structure deployed in most transformer network implementations.  

\begin{figure}[ht]
    \centering
\begin{minipage}[b]{0.4\linewidth}
\centering
\includegraphics[width=0.4 \linewidth]{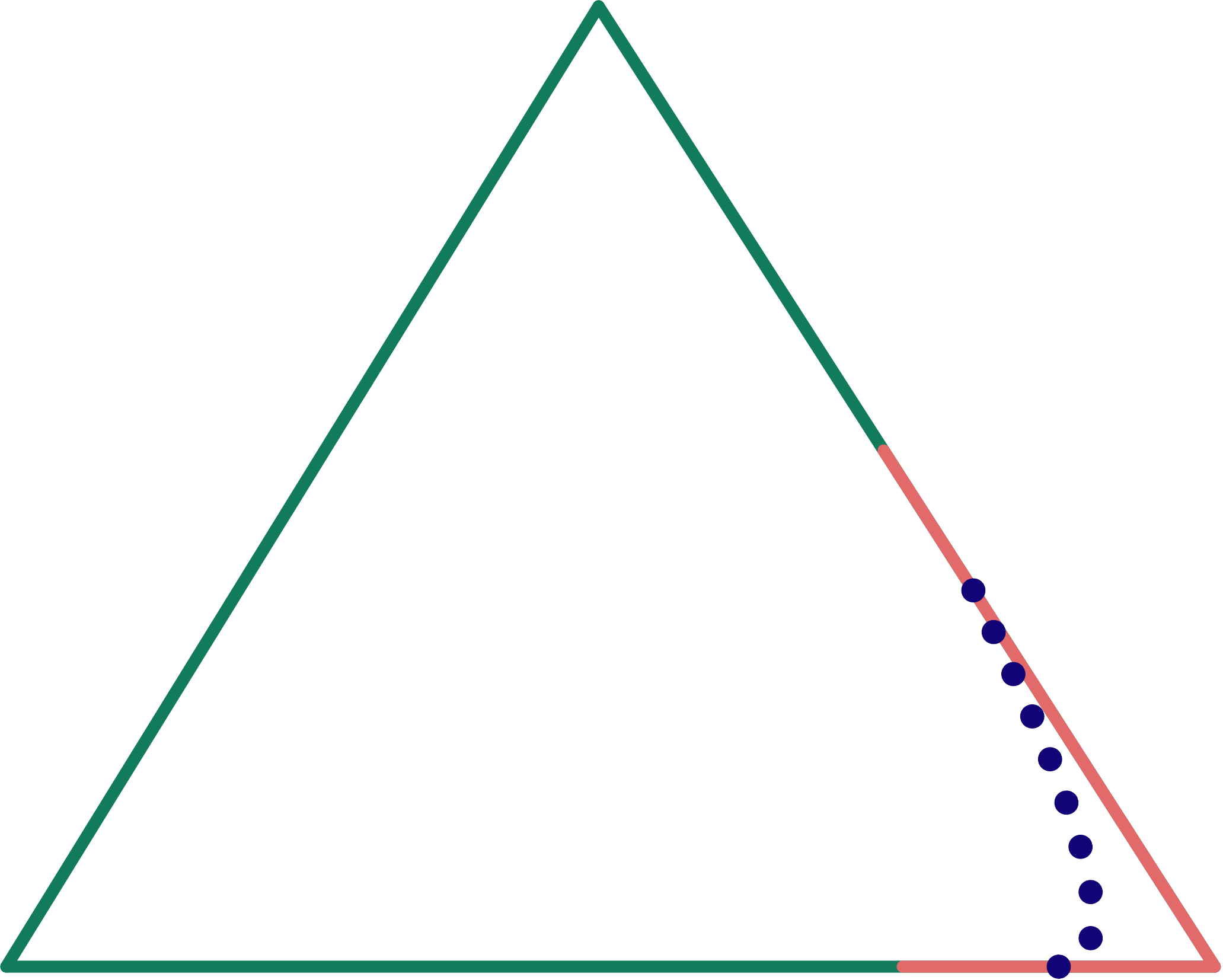}
\captionof{figure}{
{\color{orange}{
    Encoder
}}
: $\approx$ {\color{blue}{$f$}}}
\label{fig_encoder}
\end{minipage} \hfill
\begin{minipage}[b]{0.5\linewidth}
\centering
\includegraphics[width=0.32 \linewidth]{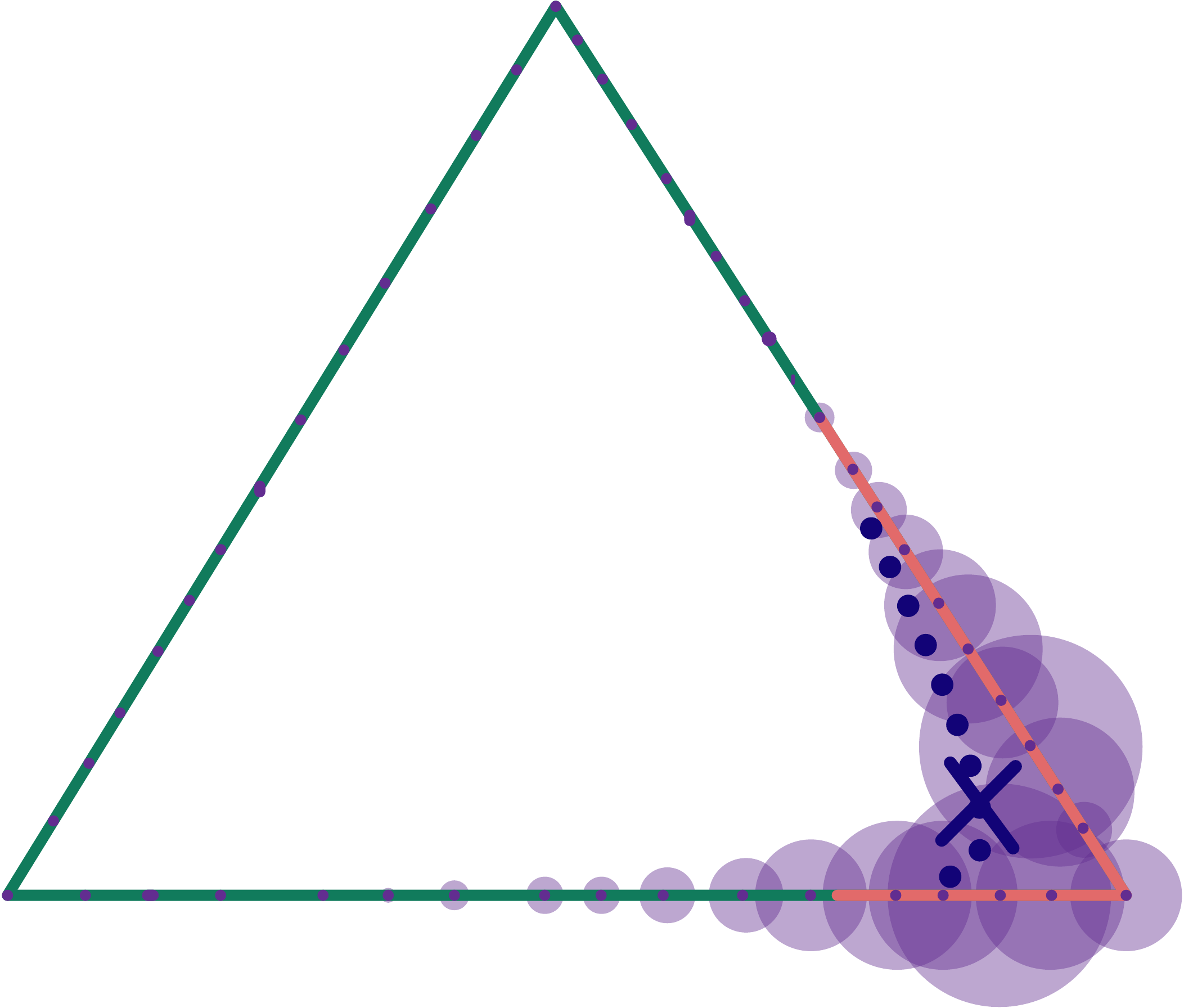}
\captionof{figure}{
{\color{darkpurpleheart}{Decoder}}
: $\approx$ {\color{darkgreen}{Random Projection to $K$}}}
\label{fig_decoder}
\end{minipage}
\end{figure}
\vspace{-1em}
Figure~\ref{fig_encoder} illustrates the {\color{orange}{\textit{encoder network} $\hat{\mathcal{E}}:\rr^n\rightarrow \rr^m$}}, whose role is to perform a (classical) unconstrained approximation of the target function, $f$.  {\color{black}{Since $\hat{\mathcal{E}}$ is a classical feedforward network then its {\color{orange}{approximation}} of the {\color{blue}{target function}} can be arbitrarily close to {\color{darkgreen}{the constraint set $K$}} but it need not lie in it.  The next step is to ``map the {\color{orange}{\textit{encoder network}}}'s output onto {\color{darkgreen}{$K$}} with low distortion.''}} 

\vspace{-.5em}
The role of the {\color{darkpurpleheart}{\textit{decoder network} $\hat{\mathcal{D}}$}} is to correct any constraint violation made by {\color{orange}{\textit{encoder network}}} by ``projecting them back on to {\color{darkgreen}{$K$}}''.  However, such a projection does not exist if {\color{darkgreen}{$K$}} is not convex since there must be more than one closest point in $K$ to some $y\in \rr^m$ \citep{motzkin1935quelques}.  Nevertheless, if the ``projection'' were capable of mapping any $y\in \rr^m$ to \textit{multiple} points on $K$, ranked by their proximity to $y$, then there would be no trouble.  
The {\color{darkpurpleheart}{\textit{decoder network}}} accomplishes precisely this, as illustrated in Figure~\ref{fig_decoder}, where the {\color{darkpurpleheart}{bubbles}} illustrate the probability of any particle in $K$ being closest to ${\color{blue}{y}}$, illustrated by the size of the {\color{darkpurpleheart}{bubbles}} in Figure~\ref{fig_decoder}.   Mathematically,\footnote{$\mathcal{P}_1(K)$ denotes the Wasserstein space on $K$, and is defined formally below.} {\color{darkpurpleheart}{$\hat{\mathcal{D}}:\rr^m\rightarrow \mathcal{P}_1(K)$}} approximates a \textit{(non-affine) random projection}, in the sense of \citet{Ohta2009ExtendingLipandHold,AmbriosioPuglisi2020RandomProjections,Brue2021Extension}; i.e.: a $1$-Lipschitz map $\Pi:\rr^m\rightarrow \mathcal{P}_1(K)$ satisfying the \textit{random projection property}: \textit{for all $y \in K$}
\[
\Pi_y = \delta_y
.
\]
Thus, $\Pi$'s random projection property means that it fixes any output already satisfying the constraint $K$, and its Lipschitz regularity implies that it is stable. Thus, sampling from $\Pi(y_1) $ is similar to sampling from $\Pi(y_2)$ whenever the points $y_1,y_2\in \rr^m$ are near to one another.  
\begin{remark}
\label{remark_NLP_randP}
Random projections are closely tied to the (random) partitions of unity of \cite{LeeNaor2005RandomProjectionsforLipExtensions} (see \citep[Theorem 2.8]{AmbriosioPuglisi2020RandomProjections}).  These  random projections generalize the random projections of \cite{JohnsonLindenstraussLemma1984}, beyond the case where $K$ is affine.  
\end{remark}
\begin{remark}
The special case of random projections onto affine spaces has recently been used when constructing universal neural models \citep{cuchiero2020discretetime,puthawala2020globally}.  
\end{remark}
\vspace{-.5em}
We record the complexity of both the decoder and encoder networks constructed in our quantitative results in Table~\ref{tab_Big_O_Rates}.  Here $A,B,C,D\geq 0$ are constants independent of $\epsilon$ and $k$, where $k \in \nn_+$ is the number of continuous derivatives which $f$ admits (when viewed as a function into $\rr^m$).  
\vspace{-.5em}
\begin{table}[h]
    \centering
    \begin{tabular}{c|cc}
\toprule
        Network & $\hat{\mathcal{E}}$ & $\hat{\mathcal{D}}$
\\
\midrule
        \hline
        Depth & 
        $\mathscr{O}(
            m^{\frac1{s}}(
                    1+\epsilon^{\frac{2n}{3(kn+1)}-\frac{2n}{kn+1}}
                )
        )$
        & 
        $
        \mathscr{O}\left(
        (N^{\frac{3}{2}}(A + 2\epsilon)(4-\epsilon^{-1})^2)^{\frac{2m}{s}}
        \right)
        $
         \\
        Width 
        & $m^{\frac1{s}}(4n+10)$ & $m^{\frac1{s}}+N+2$ \\
        $N$ & 
        - & $
        \mathscr{O}\left(
        (\epsilon^{-1}A + B)^{\frac{m}{2}}
    \right)$ \\
    $Q$ & - & $\mathscr{O}\left(
    \epsilon^{\frac{-m}{s}}
    \right)$\\
\bottomrule
    \end{tabular}
    \vspace{-.5em}
    \caption{Complexity of simple transformer network $\hat{f}=\hat{\mathcal{D}}\circ \hat{\mathcal{E}}$ approximating $f$.}
    \label{tab_Big_O_Rates}
\end{table}
\vspace{-1em}
From Table~\ref{tab_Big_O_Rates}, we see that if $m^{\frac{1}{s}}\ll m$ then, $s>0$ is large; hence, $\epsilon^\frac{m}{s}$, $(1-4\epsilon^{-1})^{\frac{2m}{s}}$, and $N^{\frac{m}{s}}$ are small.  
\vspace{-1em}
\subsection{Notation and Background}\label{s_Intro_ss_BandNotation}
\vspace{-.5em}
\paragraph{Optimal Transportd} 
Given any non-empty subset $K\subseteq \rr^m$, the set of all Borel probability measures $\pp$ on $K$ with a finite mean; i.e.:
$
  \mathbb{E}_{X\sim \pp}[\|X\|]<\infty
$, is denoted by $\mathcal{P}_1(K)$.   \textit{Wasserstein distance} $\www_1$ is defined for any $\pp,\qq\in \mathcal{P}_1(K)$ by the minimal energy needed to transport all mass from $\pp$ to $\qq$.  Following \cite{Villani2009optimal}, $\www_1(\pp,\qq)$ is defined by:
\[
\www_1(\pp,\qq)\triangleq \inf_{\pi} \mathbb{E}_{(X_1,X_2)\sim\pi}[\|X_1-X_2\|],
\]
where the infimum is taken over all Borel probability measures $\pi$ on $K^2$ with marginals $\pp$ and $\qq$.  The metric space $(\ppp{K},\www_1)$ is named the \textit{Wasserstein space over $K$}; we abbreviate it by $\ppp{K}$.  

\vspace{-1em}
\paragraph{Smooth Function Spaces} 
The set of real-valued continuous functions on $\rr^n$ is denoted by $C(\rr^n)$.  
Let $k\in \nn_+$ and $\xxx\subseteq [0,1]^n$ be non-empty.  The set of functions $f:\xxx\rightarrow K$ for which there is a $k$-times continuously differentiable $\boldsymbol{f}:\rr^n\rightarrow \rr^m$ extending $f$; i.e.: $\boldsymbol{f}|_{\xxx}=f$, is denoted by $C_{tr}^k(\xxx,K)$.  Our interest in $C_{tr}^k(\xxx,K)$ does not stem from the fact that it contains all smooth functions mapping $[0,1]^n$ to $K$, but rather that it allows us to speak about the uniform approximation of discontinuous $K$-valued functions on regions in $[0,1]^n$ where they are ``regular''.  This is noteworthy for pathological constraint sets, such as integer constraints%
\footnote{
For example, there is no non-constant continuous function $f:[0,1]\rightarrow \zz$.  However, for any $\lambda\in (0,\frac1{2})$ and any $y_1,y_2\in \zz$, $f=y_1 I_{[0,\lambda]} + y_2I_{[\lambda + \frac1{2},1]} $ belongs to $C_{tr}^k([0,\lambda]\cup [\lambda + \frac1{2},1],\zz)$ for each $k\in \nn_+$.  
}%
.
For details on $C_{tr}^k(\xxx,K)$, see \citep{Brudnyix2_Monograph_WhitneyProblem_and_Traceclasses_Vol1_2021,Brudnyix2_Monograph_WhitneyProblem_and_Traceclasses_Vol2_2021} and the extension theorems of \cite{ExtensionTheorem_Whitney_1934,Fefferman2005_Breakthrough_WhitneyTheoremSolved_AnnMath}.   

\vspace{-1em}
\paragraph{Neural Networks}
It has recently been observed that deep feedforward networks with multiple activation functions, or more generally parametric families of activation functions, achieved significantly more efficient approximation rates than classical feedforward networks with a single activation function
\citep{Yarotsky2020NEuripsPhaseDiagram,Yarotsky2021SuperPressiveICML,Shen2021ThreeLayersSuperExpressive,ShenYangZhang2021}.  Practically deployed examples of parametric activation functions are the PReLU activation function of \cite{PreLU_he2015delving}, the Sigmoid-weighted Linear Unit (SiLU) of \cite{elfwing2018sigmoid}, and the Swish activation function of \cite{Swish2018ICLR}.  We also observe a similar phenomenon, and therefore our quantitative results consider deep feedforward networks whose activation functions belongs to a  $1$-parameter family $\sigma_{\star}\triangleq \{\sigma_t\}_{t\in [0,1]}\subseteq C(\rr)$.  The set of all such networks is denoted by $\NN[n,N][\sigma_{\star}]$ and it includes all $\hat{f}:\rr^n\rightarrow \rr^N$ with iterative representation:
\begin{equation}
        \hat{f}(x)\triangleq A^{(J)} x^{(J)}
,
\qquad
x^{(j+1)}_{i_j}\triangleq \sigma_{t_{i_j}}((A^{(j)}x)_{i_j} + b^{(j)}_{i_j})
,
\qquad
x^{(0)}\triangleq x
\label{eq_definition_deepNN_iterative_representation}
,
\end{equation}
where $x\in \rr^n$, $j=1,\dots,J-1$, each $A^{(j)}$ is a $d_j\times d_{j+1}$-matrix, each $b^{(j)}\in \rr^{d_{j+1}}$, $d_{J+1}=N$, $d_1=0$, $t_{1,1},\dots,t_{J,N_J}\in [0,1]$, for each $j$.  
The integer $J$ is $\hat{f}$'s \textit{depth} and $\underset{j=1,\dots,{J+1}}{\max} d_j$ is $\hat{f}$'s \textit{width}.  
\vspace{-.5em}
\begin{example}[Networks with Untrainable Nonlinearity]\label{remark_nonparametric_activation}
Denote $\sigma\triangleq \sigma_0$.  The subset of classical feedforward networks consisting of all $\hat{f}\in \NN[n,N][\sigma_{\star}]$ with each $\sigma_{t_{i_j}}=\sigma$ in~\eqref{eq_definition_deepNN_iterative_representation} is denoted $\NN[n,N]$.  
\end{example}
\vspace{-.5em}
It is approximation theoretically advantageous to generalize the proposed definition of probabilistic attention in the introduction~\eqref{eq_probabilistic_attention} by replacing $Y$ with a $3$-dimensional array (elementary $3$-tensor).  
\begin{definition}[Probabilistic Attention]
Let $N,Q,m\in \nn_+$, and $Y$ be an $N\times Q\times m$-array with $Y_{n,q}\in K$ for $n=1,\dots,N$, $q=1,\dots,Q$.  Probabilistic attention is the function:
\[
\rr^n\ni w\mapsto \operatorname{P-attention}(w,Y)\triangleq 
\frac1{Q}\sum_{n=1}^N\sum_{q=1}^{Q} \operatorname{Softmax}_N(w)_n \delta_{Y_{n,q}}
\in \ppp{K}
.
\]
\end{definition}
\vspace{-1em}
If $Y$ is an $N\times m$-matrix, as in~\eqref{eq_probabilistic_attention}, then we identify $Y$ as the $N\times m\times 1$-array in the obvious manner.  
\vspace{-2em}
\paragraph{Set-Valued Analysis:} A family of  non-empty subsets $\{C_x\}_{x\in \rr^n}$ of $K$ is said to be a \textit{weakly measurable correspondence}, denoted $C:\rr^n \rightrightarrows \rr^m$, if for every open subset%
\footnote{
Since $K$ is equipped with its subspace topology, then an open subset $U$ of $K$ is any subset of $\rr^m$ of the form $U=U_1\cap K$ where $U_1$ is an open subset of $\rr^m$ (see \citep[Lemma 16.1]{munkres2014topology} for further details).
}%
$U\subseteq K$, $\{
x\in \rr^n:\, C_x\cap U\neq \emptyset
\}$ is a non-empty Borel subset of $\rr^n$ \citep[pages 557, 592]{InfiniteHitchhiker2006}.  
\vspace{-1em}
\section{Main Results}
\label{s_Main_Results}
\vspace{-1em}
We now present our main results in detail.  All proofs are relegated to the paper's appendix.  
\vspace{-1em}
\subsection{Qualitative Approximation: Deep Maximum Theorem}\label{s_Main_ss_Deep_Maximum_Principle}
Our main qualitative result is the following deep neural version of \cite{BergeMaximumOG_Articel1963}'s Maximum Theorem where, the measurable selector is approximately implemented by a probabilistic transformer network.  We first present the general qualitative result which gives a concrete description of a measurable selector of~\eqref{eq_motivation_1_constrained_approx}, with high-probability, which has the key property that all its predictions satisfy the required constraints defined by $K$.  
\begin{assumption}[{\cite{kidger2019universal}}]\label{ass_KL}
$\sigma:\rr\rightarrow\rr$ is continuous, $\sigma$ is differentiable at some $x_0\in \rr$, and its derivative satisfies $\sigma'(x_0)\neq 0$.
\end{assumption}
\begin{theorem}[Deep Maximum Theorem]\label{theorem_DeepBMT}
Let $\sigma$ satisfy Assumption~\ref{ass_KL}.  
Let $K\subseteq \rr^n$ be a non-empty compact set, $C:\rr^n\rightrightarrows \rr^m$ be a weakly-measurable correspondence with closed values such that $C_x\cap K\neq \emptyset$ for each $x \in \rr^n$, $L\in C(\rr^m)$, and $\mathbb{P}$ be a Borel probability measure on $\rr^n$.  For each $0< \epsilon \leq 1$, there is an $N\in\nn_+$, an $\hat{f}\in \NN[n,N]$ of width at most $2+n+N$, and an $N\times m$-matrix $Y$ such that:
\begin{equation}
\smash{
        \hat{F}:\mathbb{R}^n\ni x\mapsto 
        \operatorname{P-attention}\left(
        \hat{f}(x),Y
        \right)
        \in \mathcal{P}_1(\rr^m)
    ,
}
\label{eq_theorem_deepBMT_Model_form}
\end{equation}
\vspace{-.25em}
satisfies the following:
\begin{enumerate}[nolistsep,leftmargin=2em]
    \item[(i)] \textbf{Exact Constrain Satisfaction:} $\cup_{x\in \rr^n}\, \operatorname{supp}(\hat{F}(x))\subseteq K$,
    \item[(ii)] \textbf{Probably Approximately Optimality:} There is a compact ${\xxx_{\epsilon}}\subseteq \rr^n$ satisfying:
    \begin{enumerate}[nolistsep,leftmargin=2em]
    \item[(a)] $\underset{x\in {\xxx_{\epsilon}}}{\max}\,
    \www_1(
        \hat{F}(x)
            ,
        \underset{{y\in C_x\cap K}}{\operatorname{argmin}}
    \, L(x,y)
    )
    \leq \epsilon,
    $
    \item[(b)] $1-\pp({\xxx_{\epsilon}})
    \leq \epsilon.$
    \end{enumerate}
\end{enumerate}
\end{theorem}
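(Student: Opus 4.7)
The approach combines four ingredients: the Measurable Maximum Theorem, Lusin's theorem, a discretization of $K$ into a finite $\epsilon$-net, and the universal approximation theorem of \citet{kidger2019universal} applied with an output dimension equal to the cardinality of the net. First, since $L\in C(\rr^m)$ and the correspondence $x\mapsto C_x\cap K$ is weakly measurable with non-empty compact values, the Measurable Maximum Theorem \citep[Theorem 18.19]{InfiniteHitchhiker2006} yields a Borel-measurable selector $f^\star:\rr^n\rightarrow K$ of the argmin correspondence $\Phi(x)\triangleq\operatorname{argmin}_{y\in C_x\cap K} L(x,y)$. By Lusin's theorem together with inner regularity of $\pp$ I extract a compact ${\xxx_{\epsilon}}\subseteq \rr^n$ with $\pp({\xxx_{\epsilon}})\geq 1-\epsilon$ on which $f^\star$ restricts to a continuous map; this already establishes condition (ii)(b). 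Tietze's extension theorem then provides a continuous extension $\tilde f\in C(\rr^n,\rr^m)$ with $\tilde f|_{{\xxx_{\epsilon}}}=f^\star|_{{\xxx_{\epsilon}}}$.

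Next, by compactness of $K$ I select a finite $\tfrac{\epsilon}{3}$-net $Y_1,\dots,Y_N\in K$ collected into the $N\times m$ matrix $Y$. For a scale $\lambda>0$ to be tuned, define the continuous target $g:\rr^n\rightarrow \rr^N$ by $g_n(x)\triangleq -\lambda\|\tilde f(x)-Y_n\|$, so that as $\lambda\to\infty$ the vector $\operatorname{Softmax}_N(g(x))$ concentrates on any index $n^\star(x)$ attaining $\min_n\|\tilde f(x)-Y_n\|$, uniformly in $x\in {\xxx_{\epsilon}}$ by compactness. For any $\delta>0$ I may thus choose $\lambda$ so that $\|\operatorname{Softmax}_N(g(x))-e_{n^\star(x)}\|_1\leq \delta$ on ${\xxx_{\epsilon}}$. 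Under Assumption~\ref{ass_KL}, the Kidger--Lyons theorem then supplies, for any preassigned $\delta'>0$, an $\hat f\in \NN[n,N]$ of width at most $n+N+2$ with $\sup_{x\in {\xxx_{\epsilon}}}\|\hat f(x)-g(x)\|\leq \delta'$.

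Because $\operatorname{Softmax}_N$ is Lipschitz with some constant $L_\sigma$, the resulting probabilistic attention output
\[
\hat F(x)=\operatorname{P-attention}(\hat f(x),Y)=\sum_{n=1}^N [\operatorname{Softmax}_N(\hat f(x))]_n\,\delta_{Y_n}
\]
is automatically supported in $\{Y_1,\dots,Y_N\}\subseteq K$, giving (i). For (ii)(a), convexity of $\www_1$ in its first argument together with the net property gives
\[
\www_1\bigl(\hat F(x),\delta_{f^\star(x)}\bigr) \;\leq\; \sum_{n=1}^N [\operatorname{Softmax}_N(\hat f(x))]_n\,\|Y_n-f^\star(x)\| \;\leq\; \tfrac{\epsilon}{3}+\operatorname{diam}(K)\bigl(\delta+L_\sigma\delta'\bigr),
\]
so shrinking $\delta$ and $\delta'$ forces the right-hand side below $\epsilon$; since $f^\star(x)\in \Phi(x)\subseteq C_x\cap K$ this yields (ii)(a). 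The main technical obstacle is the joint calibration of the triple $(N,\lambda,\delta')$: the sharpening parameter $\lambda$ must dominate the inner-network approximation error $\delta'$, while $N$ must be large enough to resolve a fine net yet small enough to respect the claimed width bound $n+N+2$. Uniformity in $x$ is purchased through compactness of ${\xxx_{\epsilon}}$, without which softmax concentration would only be pointwise.
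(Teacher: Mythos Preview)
Your proof is correct in outline and shares its first half with the paper: both invoke the Measurable Maximum Theorem \citep[Theorem 18.19]{InfiniteHitchhiker2006} to obtain a Borel selector of the argmin correspondence, then apply Lusin's theorem to produce the compact set ${\xxx_{\epsilon}}$ on which the selector is continuous, yielding (ii)(b). The divergence is in the second half. The paper does not build the approximant by hand; instead it composes the continuous selector with the isometric embedding $y\mapsto\delta_y$ into $\mathcal{P}_1(K)$ and then invokes \citep[Theorem~3]{kratsios2021_GCDs} as a black box, which directly supplies a network $\hat f$ and particles $\{y_{n,q}\}$ so that $\hat F(x)=\sum_n[\operatorname{Softmax}_N(\hat f(x))]_n\,Q^{-1}\sum_q\delta_{y_{n,q}}$ is uniformly $\epsilon$-close in $\www_1$ to $\delta_{S(x)}$ on ${\xxx_{\epsilon}}$. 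Your route---finite $\epsilon/3$-net of $K$, a softmax temperature $\lambda$ to concentrate mass on the nearest particle, then Kidger--Lyons to approximate the resulting continuous $g$---is essentially a self-contained reconstruction of what that cited theorem does internally. Your argument buys transparency and avoids an external dependency; the paper's buys brevity.

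One small repair is needed in your concentration step. The claim that $\operatorname{Softmax}_N(g(x))$ is uniformly $\delta$-close in $\ell_1$ to a \emph{single} basis vector $e_{n^\star(x)}$ fails at points $x$ where two or more particles are nearly equidistant from $\tilde f(x)$, and compactness of ${\xxx_{\epsilon}}$ does not rescue this. The fix is to avoid concentration on a single index altogether: split the sum $\sum_n p_n\|Y_n-f^\star(x)\|$ into ``near'' indices ($\|Y_n-f^\star(x)\|\le 2\epsilon/3$) and ``far'' indices ($>2\epsilon/3$). The near part contributes at most $2\epsilon/3$ regardless of how mass is shared among near particles, while each far index has softmax weight at most $e^{-\lambda\epsilon/3}$ uniformly in $x$ (since its distance exceeds the minimum by at least $\epsilon/3$), so the far part contributes at most $N\operatorname{diam}(K)e^{-\lambda\epsilon/3}$. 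Choosing $\lambda$ large and then $\delta'$ small completes (ii)(a) exactly as you intended.
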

\vspace{-.5em}
Theorem~\ref{theorem_DeepBMT} implies that for any random field $(Y^x)_{x \in \rrn}$ on $\rrm$ (i.e. a family of $\rrm$-valued random vectors indexed by $\rr^n$) with $Y^x\sim \hat{F}(x)$:
1. samples drawn from $Y^x$ are in $K$ (by (i)) 
and 
2. samples drawn from each $Y^x$ are near to the optimality set $\operatorname{argmin}_{y\in C_x\cap K}L(x,y)$ (by (ii)).
\begin{corollary}[{$\hat{F}$'s Mean Prediction}]\label{cor_simplified_version_of_theorem_DeepBMT}
Assume the setting of Theorem~\ref{theorem_DeepBMT}. Let $\{Y^x\}_{x\in \rr^n}$ be a $K$-valued random field with $Y^x\sim \hat{F}(x)$ for each $x \in \rr^n$ then, $1-\pp({\xxx_{\epsilon}})\leq \epsilon$ and
\[
    \underset{x\in {\xxx_{\epsilon}}}{\max}\,
      \mathbb{E}_{
      }
        [
        \|
            Y^x
    -
    \underset{{y^{\star}\in C_x\cap K}}{\operatorname{argmin}}
    \, L(x,y^{\star})
        \|
        ]
    \leq  \epsilon
    .
\]
\end{corollary}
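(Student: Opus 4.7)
The plan is to derive this as a direct consequence of Theorem~\ref{theorem_DeepBMT}(ii) by unwinding the definition of the Wasserstein distance to a set and then exchanging an infimum with an expectation in the favourable direction. The key observation is that $\www_1$ between a measure and a Dirac mass collapses to an ordinary expected distance, so the set-valued Wasserstein bound in Theorem~\ref{theorem_DeepBMT}(ii)(a) will upper bound the expected set-distance of a sample from $\hat{F}(x)$.

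First, I would fix the ingredients: under the hypotheses, $K$ is compact, each $C_x$ is closed with $C_x\cap K\neq \emptyset$, and $L(x,\cdot)$ is continuous, so
\[
B_x \triangleq \underset{y \in C_x \cap K}{\operatorname{argmin}}\, L(x,y)
\]
is a non-empty closed (in fact compact) subset of $K \subseteq \rr^m$. Apply Theorem~\ref{theorem_DeepBMT} at the target tolerance $\epsilon$ to produce $\hat{f}$, $Y$, and the compact set $\xxx_{\epsilon}\subseteq \rr^n$ with $1-\pp(\xxx_{\epsilon})\leq \epsilon$ and
\[
\underset{x\in \xxx_{\epsilon}}{\max}\, \www_1\bigl(\hat{F}(x), B_x\bigr) \leq \epsilon,
\]
where by definition $\www_1(\hat{F}(x),B_x) = \inf_{b\in B_x} \www_1(\hat{F}(x), \delta_b)$.

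Next, for any fixed $b\in B_x$, I would invoke the identity from \citep[(5) - page 99]{Villani2009optimal} (used already in Example~\ref{ex_redux_classical} with the roles of measure and point exchanged) to obtain
\[
\www_1(\hat{F}(x), \delta_b) = \mathbb{E}_{Y^x \sim \hat{F}(x)}[\|Y^x - b\|],
\]
since the only coupling between $\hat{F}(x)$ and $\delta_b$ is the product measure. Combining this with the set-distance definition $\|Y^x - B_x\| = \inf_{b\in B_x} \|Y^x - b\|$, the pointwise bound $\|Y^x - B_x\| \leq \|Y^x - b\|$ (valid for every $b\in B_x$) gives, after taking expectations and then the infimum over $b\in B_x$,
\[
\mathbb{E}\bigl[\|Y^x - B_x\|\bigr] \leq \inf_{b\in B_x} \mathbb{E}\bigl[\|Y^x - b\|\bigr] = \inf_{b\in B_x} \www_1(\hat{F}(x),\delta_b) = \www_1(\hat{F}(x), B_x).
\]
Chaining this with the bound from Theorem~\ref{theorem_DeepBMT}(ii)(a) and taking the maximum over $x\in \xxx_{\epsilon}$ yields the desired inequality, while $1-\pp(\xxx_{\epsilon})\leq \epsilon$ is inherited verbatim.

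I do not expect any deep obstacle; the only bookkeeping point is to ensure that $y\mapsto \|y - B_x\|$ is Borel measurable so that $\mathbb{E}[\|Y^x - B_x\|]$ makes sense. This is standard: since $B_x$ is non-empty and closed, $y\mapsto \|y-B_x\|$ is $1$-Lipschitz continuous on $\rr^m$, hence Borel. The argument is genuinely just the Kantorovich representation of $\www_1(\cdot,\delta_b)$ together with monotonicity of the set-distance, so once the notation is aligned the derivation is a three-line chain of (in)equalities.
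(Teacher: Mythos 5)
Your proposal is correct and follows essentially the same route as the paper's proof: both reduce $\www_1(\hat{F}(x),\delta_b)$ to $\mathbb{E}[\|Y^x-b\|]$ via the unique product coupling (Lemma~\ref{lem_closedform_wasserstein_dirac}) and then pass from an infimum of expectations to an expectation of a set-distance before chaining with Theorem~\ref{theorem_DeepBMT}(ii). The one small improvement is at the interchange step, where you use plain monotonicity of integration ($\|Y^x-B_x\|\leq\|Y^x-b\|$ pointwise, take expectation, then infimum over $b$) whereas the paper invokes Fatou's lemma together with an essential infimum, which is both unnecessary and slightly misapplied (the infimum is over the index set $B_x$, not over the sample space).
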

\vspace{-1em}
Appendix~\ref{a_ss_Additional_Results} contains additional consequences of the Deep Maximum Theorem, such as the special case of classical transformers when $K$ is convex. 
Next, we complement our qualitative results by their quantitative analogues, within the context of \textit{universal approximation} under \textit{constraints}.  
\vspace{-1em}
\subsection{Quantitative Approximation: Constrained Universal Approximation}\label{s_Main_ss_Quantitative_Results}
\vspace{-.5em}
In order to derive a \textit{quantitative} constrained universal approximation theorem, we require the loss function to be tied to the Euclidean norm in the following manner.  
\begin{assumption}[Norm-Controllable Loss]\label{ass_normcontrolled_loss}
There is a continuous $f:\rr^n\rightarrow \rr^m$ with $f(\rr^n)\subseteq K$ and a continuous $l:[0,\infty)\rightarrow [0,\infty)$ with $l(0)=0$, satisfying:
$
L(x,y)\leq l(\|f(x)-y\|)
.
$
\end{assumption}
\vspace{-.5em}
Just as with transformer networks, our ``constrained universal approximation theorem'' approximates a suitably regular function $f:\rr^n\rightarrow K\subseteq \rr^m$ while exactly respecting the constraints $K$ by implementing an \textit{encoder-decoder} network architecture.  Thus, our model is a composition of an \textit{encoder network} $\hat{\mathcal{E}}:\rr^n\rightarrow \rr^d$ whose role is to approximate $f$ in a classical ``unconstrained fashion'' and a \textit{decoder network} (with probabilistic attention layers at its output) $\hat{\mathcal{D}}:\rr^d\rightarrow \mathcal{P}_1(K)$ whose role is to enforce the constraints $K$ while preserving the approximation performed by $\hat{\mathcal{E}}$, where $d\lll m$.

To take advantage of the encoder-decoder framework present in most transformer networks,  we formalize what is often called a \textit{``latent low-dimensional manifold''} hypothesis.  Briefly, this means that, the hard constraints in set $K$ are \textit{contained} in a ``low dimensional'' subspace.  
\begin{assumption}[Low-Dimensional Manifold]\label{ass_low_dimensional_mainfold}
There is an $0<s$ and a smooth bijection $\Phi$ from $\rr^n$ to itself with smooth inverse\footnote{
Here smooth means that $\Phi$ is continuously differentiable any number of times.  NB, smooth bijections with smooth inverses are called \textit{diffeomorphisms} in the differential geometry and differential topology literature.  
}, such that
$\Phi(K)\subseteq \rr^d$; where $2\leq d$ and $d \in \Theta(m^{\frac1{s}})$.
\end{assumption}
Assumption~\ref{ass_low_dimensional_mainfold} does not postulate that $K$ is itself a single-chart low-dimensional manifold, or even a manifold.  Rather, $K$ need only be contained in a low-dimensional manifold.  For the fast rates we use activation functions generalizing the swish function \citep{Swish2018ICLR} as follows. 
\begin{assumption}[Swish-Like Activation Function]\label{ass_PWLA}
The map $\sigma:[0,1]\times \rr\ni (\alpha,t)\mapsto \sigma_{\alpha}(t)\in \rr$ is continuous; $\sigma_0$ is non-affine and piecewise-linear, and $\sigma_1$ is smooth\footnote{
Following \cite{jost2008riemannian}, a function $\sigma:\rr\rightarrow \rr$ is called smooth (or class $C^{\infty}$) if $\partial^k \sigma$ exists for each $k \in \nn_+$.  
} and non-polynomial.
\end{assumption}
\vspace{-0.5em}
\begin{theorem}[Constrained Universal Approximation]\label{thrm_quantitativer_version_NonConvex}
Let $k\in \nn_+$ and $\xxx\subseteq [0,1]^n$ be non-empty.  Suppose that $\sigma$ satisfies~\ref{ass_PWLA}, $L$ satisfies Assumption~\ref{ass_normcontrolled_loss}, $K\subseteq \rr^n$ is non-empty, compact and satisfies Assumption~\ref{ass_low_dimensional_mainfold}.  For any $f\in C_{tr}^k(\xxx,K)$, every constraining quality $\epsilon_K>0$, and every approximation error $\epsilon_f>0$, there exist $N,Q\in \nn_+$, an encoder $\hat{\mathcal{E}}\in \NN[n,d]$, and a decoder:
\begin{equation}
\hat{\mathcal{D}}:\rr^d\ni x \mapsto \sum_{k=1}^N
\operatorname{P-attention}\left(
    \hat{D}(x)
,
Y
 \right)\in \mathcal{P}_1(K)
\label{eq_decoder_network_definition_for_parameter_count}
\end{equation}
where $\hat{D}\in \NN[d,N]$ and $Y$ is an $N\times Q\times m$-array with $Y_{1,1},\dots,Y_{N,Q}\in K$ such that:
\begin{enumerate}[nolistsep,leftmargin=2em]
    \item[(i)] \textbf{Exact Constrain Satisfaction:} For each $x \in \rr^n$:
    $
    \operatorname{supp}(\hat{\mathcal{D}}\circ \hat{\mathcal{E}}(x))\subseteq K
    ,
    $
    \item[(ii)] \textbf{Universal Approximation:} The estimate holds
    \footnote{
        In fact, we actually prove that the slightly stronger statement: $
        \sup_{x \in [0,1]^n}\,
            \www_1\left(
            \hat{\mathcal{D}}\circ \hat{\mathcal{E}}(x)
                    ,
                \delta_{f(x)}
            \right)
            \leq 
            \epsilon_K
            +
            k \operatorname{Lip}(\Phi^{-1})d
            \epsilon_f
        $.  Both formulations align when $l$ has a unique minimum at $0$, as is the case when $L(x,y)=\|f(x)-y\|_{\star}$ and $\|\cdot\|_{\star}$ is any norm on $\rr^m$.
    }
    :
\[
\sup_{x \in [0,1]^n}\,
    \www_1
    (
    \hat{\mathcal{D}}\circ \hat{\mathcal{E}}(x)
            ,
        \underset{y\in K}{\operatorname{argmin}}\,
        L(x,y)
    )
    \leq 
    \epsilon_K
    +
    k \operatorname{Lip}(\Phi^{-1})d
    \epsilon_f
    ;
\]
where,
$0<k$ is an absolute constant independent of $n$, $m$, $d$,
$f$, and of $\epsilon$ and $\operatorname{Lip}(\Phi^{-1})$ denotes the Lipschitz constant of $\Phi^{-1}$ on the compact set $\{z\in \rr^d:\, \|z-\Phi(K)\|\leq \epsilon_f\}$.  
\end{enumerate}
Furthermore, the ``complexities'' of $\hat{\mathcal{D}}$ and $\hat{\mathcal{E}}$ are recorded in Table%
\footnote{
Explicit constants are recorded in 
Table~\ref{tab_model_complexities} within the paper's appendix; there, $\epsilon_K$ and $\epsilon_f$ may differ.
}
~\ref{tab_Big_O_Rates} for $\frac{\epsilon}{2}=\epsilon_k=\epsilon_f$.
\end{theorem}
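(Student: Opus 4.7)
The plan is to realize the encoder--decoder split advertised in Section~\ref{s_Intro_ss_Quantitative_constrained_UAT}: the encoder $\hat{\mathcal{E}}$ performs a classical unconstrained approximation of $f$ in the low-dimensional coordinates supplied by $\Phi$, while the decoder $\hat{\mathcal{D}}$ implements a discretized random projection onto $K$ via probabilistic attention against a particle cloud $\{Y_{n,q}\}\subseteq K$. Conclusion~(i) is then automatic: since every $Y_{n,q}$ already lies in $K$, the support of $\operatorname{P-attention}(\hat{D}(\hat{\mathcal{E}}(x)),Y)$ is contained in $K$ regardless of the softmax weights. For conclusion~(ii) I would track two independent $\www_1$-errors---an encoder error of size $\epsilon_f$ in $\rr^d$ (amplified by $\operatorname{Lip}(\Phi^{-1})$ on the way back to $\rr^m$) and a decoder error of size $\epsilon_K$ in $\ppp{K}$---and combine them through the triangle inequality in $\www_1$.

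\noindent\textbf{Step 1 (low-dimensional encoder).} Assumption~\ref{ass_low_dimensional_mainfold} makes $\Phi\circ f$ a $C^k_{tr}(\xxx,\rr^d)$ map with the same smoothness as $f$. I would invoke a Yarotsky/Shen--Yang--Zhang-type quantitative universal approximation theorem for the parametric family $\sigma_\star$ of Assumption~\ref{ass_PWLA} (piecewise-linear $\sigma_0$ together with smooth non-polynomial $\sigma_1$ unlocks the super-expressive rates) to produce $\hat{\mathcal{E}}\in\NN[n,d]$, of the depth and width recorded in the encoder column of Table~\ref{tab_Big_O_Rates}, achieving $\sup_{x\in[0,1]^n}\|\hat{\mathcal{E}}(x)-\Phi(f(x))\|\leq\epsilon_f$. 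Since the target space is $\rr^d$ rather than $\rr^m$, the curse of dimensionality is indexed by $d\in\Theta(m^{1/s})$ instead of by $m$, which is the origin of every $m^{1/s}$ factor appearing in the table.

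\noindent\textbf{Step 2 (random projection and attention decoder).} I would next invoke the non-affine extension theorems of Ohta and of Ambrosio--Puglisi--Br\'ue to construct a $1$-Lipschitz random projection $\Pi:\rr^d\to\ppp{\Phi(K)}$ with $\Pi_y=\delta_y$ for every $y\in\Phi(K)$; pushing forward by $\Phi^{-1}$, which is Lipschitz on the tube $\{z\in\rr^d:\|z-\Phi(K)\|\leq\epsilon_f\}$ with constant $\operatorname{Lip}(\Phi^{-1})$, yields a stable map $\tilde{\Pi}:\rr^d\to\ppp{K}$. I would then discretize $\tilde{\Pi}$ by choosing a particle cloud $\{Y_{n,q}\}_{n\leq N,\,q\leq Q}\subseteq K$ that is a fine enough $\delta$-net of $K$ to push the discretization $\www_1$-error below $\epsilon_K/2$; the pair $(N,Q)$ is determined by covering numbers of $K$ weighed against the quantitative Br\'ue-type estimates. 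At each point of $\rr^d$ the weights assigned to the particles can be written as the image under $\operatorname{Softmax}_N$ of a continuous logit map $\psi:\rr^d\to\rr^N$, to which I would apply Kidger--Lyons universal approximation (valid under Assumption~\ref{ass_PWLA}) to obtain $\hat{D}\in\NN[d,N]$ with $\operatorname{P-attention}(\hat{D}(\cdot),Y)\approx\tilde{\Pi}(\cdot)$ to within $\epsilon_K/2$ in $\www_1$. Global Lipschitzness of $\operatorname{Softmax}_N$ ensures that pointwise $\rr^N$-control of $\hat{D}$ transfers to pointwise $\www_1$-control of the output measure, whose support stays in $K$ by construction.

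\noindent\textbf{Assembly and main obstacle.} Setting $\hat{f}=\hat{\mathcal{D}}\circ\hat{\mathcal{E}}$ and applying the $\www_1$-triangle inequality combines a decoder error of at most $\epsilon_K$ with an encoder error of at most $k\operatorname{Lip}(\Phi^{-1})d\,\epsilon_f$ (the factor $d$ comes from comparing Euclidean to coordinate norms on $\rr^d$, and $k$ absorbs the absolute constants from the Lipschitz extension and discretization steps); Assumption~\ref{ass_normcontrolled_loss} then upgrades closeness to $\delta_{f(x)}$ into closeness to $\operatorname{argmin}_{y\in K}L(x,y)$, giving exactly the bound in the statement. The hard step is Step~2: simultaneously (a) realizing the image of a random projection as a convex combination of Diracs supported on a particle cloud in $K$, (b) keeping the accompanying logit map $\operatorname{Softmax}_N$-friendly and of modest modulus so that classical universal approximation can reach it, and (c) trading off $N$, $Q$, and $\epsilon_K$ to match the rows of Table~\ref{tab_Big_O_Rates}. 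Essentially all the quantitative content of the theorem lives in this combinatorial-geometric bookkeeping.
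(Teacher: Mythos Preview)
Your plan matches the paper's proof almost step for step: encode via $\hat{\mathcal E}\approx p_d^m\circ\Phi\circ f$ using the piecewise-linear/smooth activation rates, build a random projection $\Pi:\rr^d\to\ppp{p_d^m(\Phi(K))}$ via the Bru\'e--Ambrosio--Puglisi extension theorem, approximate $\Pi$ by a P-attention network on the compact tube $\Phi(K)_{\epsilon_f}$, push forward by $\Phi^{-1}\circ\iota_d^m$, and assemble with a three-term $\www_1$ triangle inequality (decoder error, encoder error propagated through $\Pi$, and a term that vanishes by the random-projection property $\Pi_y=\delta_y$ on $\Phi(K)$).

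Two points of divergence are worth flagging. First, the factor $d$ in the error bound does \emph{not} come from ``comparing Euclidean to coordinate norms on $\rr^d$'': the random projection supplied by \cite[Theorem~3.2]{Brue2021Extension} is \emph{not} $1$-Lipschitz in general but satisfies $\operatorname{Lip}(\Pi)\le k\log_2\lambda(p_d^m(\Phi(K)))\le k\log_2(2^d)=kd$, where $\lambda$ is the doubling constant of the ambient $\rr^d$; this Lipschitz constant is exactly what multiplies the encoder error $\epsilon_f$ (together with $\operatorname{Lip}(\Phi^{-1})$ from the push-forward). If $\Pi$ were $1$-Lipschitz as you assert in Step~2, the final bound would be strictly sharper than the theorem claims. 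Second, the ``hard step'' you isolate---discretizing $\Pi$ into a softmax-weighted particle cloud with quantitative $N,Q$ and then approximating the logit map---is not carried out by hand in the paper but is outsourced wholesale to \cite[Theorem~3]{kratsios2021_GCDs}, a ready-made quantitative universal approximation theorem for $\ppp{K}$-valued maps by P-attention networks (applied with the smooth activation $\sigma_1$); Kidger--Lyons alone is only qualitative and would not deliver the decoder column of Table~\ref{tab_Big_O_Rates}.
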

\vspace{-.75em}
In practice, we can only \textit{sample} from each measure $\hat{\mathcal{D}}\circ \hat{\mathcal{E}}(x)$.  In this case, we may ask how the typical sample drawn from a random-vector $Y^x$ distributed according to our learned measure $\hat{\mathcal{D}}\circ \hat{\mathcal{E}}(x)$ performs when minimizing $L(x,y)$.  The next result relates the estimates in Theorem~\ref{thrm_quantitativer_version_NonConvex} (ii) to the typical (in $Y^x$) worst-case (in $x$) gap between a sample from $Y^x$ and $f(x)$, as quantified by $L(x,\cdot)$. 
\begin{corollary}[{Average Worst-Case Loss}]\label{cor_random_uniform_UAT}
Assume the setting of Theorem~\ref{thrm_quantitativer_version_NonConvex} and suppose that the ``modulus'' $l$ in Assumption~\ref{ass_normcontrolled_loss} is strictly increasing and concave.  Let $\hat{\mathcal{D}}$ and $\hat{\mathcal{E}}$ be as in Theorem~\ref{thrm_quantitativer_version_NonConvex} and let $\{Y^x\}_{x\in \xxx}$ be an $\rr^m$-valued random field with $Y^x\sim  \hat{\mathcal{D}}\circ \hat{\mathcal{E}}(x)$, for each $x \in \rr^n$.  
Then:
\[
    \underset{x\in \xxx}{\max}\,
      \mathbb{E}_{
      Y^x \sim \hat{\mathcal{D}}\circ \hat{\mathcal{E}}(x)
      }
        \left[
        L(x,Y^x)
    \right]
        \leq 
    l\left(
         \epsilon_K
        +
        k \operatorname{Lip}(\Phi^{-1})d
        \epsilon_f
    \right)
    .
\]
\end{corollary}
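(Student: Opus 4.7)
The plan is to chain four short inequalities: the pointwise loss bound of Assumption~\ref{ass_normcontrolled_loss}, Jensen's inequality for the concave modulus $l$, the identification of an expected Euclidean distance with a Wasserstein-$1$ distance to a point mass, and finally the strengthened estimate recorded in the footnote to Theorem~\ref{thrm_quantitativer_version_NonConvex}(ii). None of these steps is deep individually; the point is the packaging.

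First I would fix $x \in \xxx$ and apply Assumption~\ref{ass_normcontrolled_loss} pathwise to the random vector $Y^x$, obtaining $L(x,Y^x) \leq l(\|f(x)-Y^x\|)$. Taking expectations and then using concavity of $l$ together with Jensen's inequality gives
\[
\mathbb{E}[L(x,Y^x)] \leq \mathbb{E}[l(\|f(x)-Y^x\|)] \leq l\bigl(\mathbb{E}[\|f(x)-Y^x\|]\bigr).
\]
Finiteness of these expectations is automatic: the Exact Constraint Satisfaction conclusion of Theorem~\ref{thrm_quantitativer_version_NonConvex}(i) confines $Y^x$ to the compact set $K$, and continuity of $\|\cdot\|$ and of $l$ then yields boundedness on this compact range.

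Next, I would identify $\mathbb{E}[\|f(x)-Y^x\|]$ with $\www_1(\delta_{f(x)}, \hat{\mathcal{D}}\circ\hat{\mathcal{E}}(x))$. Since $\delta_{f(x)}$ is a Dirac, the product measure is the unique coupling with the prescribed marginals, so the expectation coincides with the $\www_1$ infimum; this is the same elementary computation already used in Example~\ref{ex_redux_classical}, now with one non-degenerate marginal. Substituting the footnote's strengthened form of Theorem~\ref{thrm_quantitativer_version_NonConvex}(ii), namely $\sup_{x \in [0,1]^n}\www_1(\hat{\mathcal{D}}\circ\hat{\mathcal{E}}(x),\delta_{f(x)}) \leq \epsilon_K + k\operatorname{Lip}(\Phi^{-1})d\,\epsilon_f$, and invoking monotonicity of $l$ yields
\[
\mathbb{E}[L(x,Y^x)] \leq l\bigl(\epsilon_K + k\operatorname{Lip}(\Phi^{-1})d\,\epsilon_f\bigr)
\]
uniformly in $x \in \xxx$; taking the maximum over $\xxx$ finishes the proof.

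The argument has no single hard step; the only items that need care are (a) invoking the \emph{footnote} version of Theorem~\ref{thrm_quantitativer_version_NonConvex}(ii), which bounds $\www_1$ to $\delta_{f(x)}$ rather than to the argmin set used in the main statement, and (b) tracking why both hypotheses on $l$ are needed: concavity for the Jensen step and (strict) monotonicity for the final substitution. If $l$ were not concave, one would lose the ability to exchange expectation with $l$ and would need to bound $\mathbb{E}[l(\|f(x)-Y^x\|)]$ directly from the $\www_1$ bound, which requires a Lipschitz-type control on $l$ rather than a modulus-type control.
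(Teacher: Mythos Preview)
Your proposal is correct and follows essentially the same route as the paper's own proof: apply Assumption~\ref{ass_normcontrolled_loss} pathwise, use Jensen's inequality via the concavity of $l$, identify $\mathbb{E}[\|f(x)-Y^x\|]$ with $\www_1(\delta_{f(x)},\hat{\mathcal{D}}\circ\hat{\mathcal{E}}(x))$ via the closed-form expression for the Wasserstein distance to a point mass, and then invoke the strengthened (footnote) estimate together with the monotonicity of $l$. Your remarks on finiteness of the expectations and on why both concavity and monotonicity of $l$ are needed are accurate and, if anything, slightly more explicit than the paper's version.
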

\vspace{-1em}
Corollary~\ref{cor_random_uniform_UAT} quantifies the expected performance of a sample from our probabilistic transformer model, as expressed by $L$, whereas Theorem~\ref{thrm_quantitativer_version_NonConvex} (ii) quantifies the difference from the transformer's prediction to the optimal prediction value.  Next, we consider implications of our main results.  
\vspace{-1em}
\subsection{Applications}\label{s_Applications}
\vspace{-.5em}
We apply our theory to obtain a {\color{darkcerulean}{universal approximation}} theorem for classical transformer networks with {\color{darkgreen}{exact convex constraint satisfaction}} and to derive a version of the non-Euclidean {\color{darkcerulean}{universal approximation}} theorems of \cite{kratsios2020non,kratsios2021_GDL} for {\color{darkgreen}{Riemannian-manifold}} valued functions which does not need explicit charts.  As with most quantitative (uniform) universal approximation theorems \citep{Yarotski,kidger2019universal,Shen2021ThreeLayersSuperExpressive}, we henceforth consider $L(x,y)=\|f(x)-y\|$.  We also fix $f\in C_{tr}^k([0,1]^n,K)$.
\vspace{-.5em}
\subsubsection{Transformers are Convex-Constrained Universal Approximators}
\vspace{-.5em}
\label{s_Applications_ss_Convex_Optim}
We return to the familiar transformer networks of \cite{vaswani2017attention}.  The next result shows that transformer networks can balance {\color{darkcerulean}{universal approximation}} and {\color{darkgreen}{exact convex constraint satisfaction}}.  
This is because when $K$ is convex, then the mean of the random field $\{Y^x\}_{x\in \rr^n}$ of Corollary~\ref{cor_simplified_version_of_theorem_DeepBMT} must belong to $K$.  Consequently, the identity~\eqref{eq_THE_KEY_IDENTITY___attention_implementation_via_expecation} implies that $\operatorname{Attention}(\hat{\mathcal{D}}\circ\hat{\mathcal{E}}(\cdot),Y)\approx f$.
\begin{corollary}[Constrained Universal Approximation: Convex Constraints]\label{cor_convex_case}
Consider the setting and notation of Corollary~\ref{cor_random_uniform_UAT}.  Suppose that $K$ is convex and let $L(x,y)=\|f(x)-y\|$.  Then:
\begin{equation}
    \smash{
        \rr^n\ni x \mapsto \mathbb{E}[Y^x] = \operatorname{Attention}(\hat{\mathcal{D}}\circ\hat{\mathcal{E}}(x),Y) \in K
        ;
    }
\label{cor_Deep_BMT_Convex_Case}
\end{equation}
\begin{enumerate}[nolistsep,leftmargin=2em]
  \item[(i)]\textbf{Exact Constraint Satisfaction:} $\mathbb{E}_{Y^x\sim \hat{\mathcal{D}}\circ \hat{\mathcal{E}}(x)}[Y^x]\in K$, for each $x\in \rr^n$,
    \item[(ii)]\textbf{Universal Approximation:}
$\sup_{[0,1]^n}\, \|f(x)-\mathbb{E}_{Y^x\sim \hat{\mathcal{D}}\circ \hat{\mathcal{E}}(x)}[Y^x]\|<
\epsilon_K
    +
    k d
    \epsilon_f
    .
$
\end{enumerate}
\vspace{-.5em}
The ``complexities'' of the networks $\hat{\mathcal{D}}$ and $\hat{\mathcal{E}}$ are recorded in Table
\footnote{
Explicit constants are recorded in 
Table~\ref{tab_model_complexities} within the paper's appendix; there, $\epsilon_K$ and $\epsilon_f$ may differ.
}
~\ref{tab_Big_O_Rates} for $\frac{\epsilon}{2}=\epsilon_k=\epsilon_f$.
\end{corollary}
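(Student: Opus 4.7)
The plan is to apply Theorem~\ref{thrm_quantitativer_version_NonConvex} directly with the chosen loss $L(x,y)=\|f(x)-y\|$ (which satisfies Assumption~\ref{ass_normcontrolled_loss} with the modulus $l(t)=t$) and then extract the three conclusions using linearity of expectation, convexity of $K$, and Jensen's inequality. Let $\hat{\mathcal{E}}$, $\hat{D}$, $Y$ and $\hat{\mathcal{D}}$ be the networks and the $N\times Q\times m$-array produced by that theorem, and let $Y^x\sim \hat{\mathcal{D}}\circ\hat{\mathcal{E}}(x)$. Since $\hat{\mathcal{D}}\circ\hat{\mathcal{E}}(x)\in\ppp{K}$ and $K$ is compact, $Y^x$ is bounded and therefore has a well-defined mean.

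\textbf{Identity~\eqref{cor_Deep_BMT_Convex_Case} and conclusion (i).} Unpacking the definition of P-attention for the $N\times Q\times m$-array $Y$,
\[
\mathbb{E}[Y^x]=\int z\,d\bigl(\hat{\mathcal{D}}\circ\hat{\mathcal{E}}(x)\bigr)(z)=\sum_{n=1}^N[\operatorname{Softmax}_N(\hat{D}(\hat{\mathcal{E}}(x)))]_n\cdot\frac{1}{Q}\sum_{q=1}^Q Y_{n,q}.
\]
Setting $\tilde{Y}_n\triangleq\frac{1}{Q}\sum_{q=1}^QY_{n,q}$, each $\tilde{Y}_n$ is a finite convex combination of points of $K$, and hence lies in $K$ by convexity. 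The display above coincides with the classical $\operatorname{Attention}(\hat{D}(\hat{\mathcal{E}}(x)),\tilde{Y})$ of~\eqref{eq_definition_attention} via the reduction~\eqref{eq_THE_KEY_IDENTITY___attention_implementation_via_expecation}, which proves~\eqref{cor_Deep_BMT_Convex_Case}. Since the softmax weights are nonnegative and sum to one, $\mathbb{E}[Y^x]$ is itself a convex combination of the $\tilde{Y}_n\in K$; convexity of $K$ delivers~(i).

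\textbf{Conclusion (ii).} Jensen's inequality applied to the convex map $\|\cdot\|$, together with the standard closed-form expression for the Wasserstein distance against a point mass (cf.\ Example~\ref{ex_redux_classical}), yields
\[
\|f(x)-\mathbb{E}[Y^x]\|=\bigl\|\mathbb{E}[f(x)-Y^x]\bigr\|\leq \mathbb{E}\|f(x)-Y^x\|=\www_1\bigl(\hat{\mathcal{D}}\circ\hat{\mathcal{E}}(x),\delta_{f(x)}\bigr).
\]
The stronger form of Theorem~\ref{thrm_quantitativer_version_NonConvex}~(ii) recorded in its footnote then bounds the right-hand side uniformly over $x\in[0,1]^n$ by $\epsilon_K+k\operatorname{Lip}(\Phi^{-1})d\,\epsilon_f$, with the Lipschitz factor absorbed into the constant $k$ of the corollary's statement. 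The network-complexity table is inherited verbatim from Theorem~\ref{thrm_quantitativer_version_NonConvex}.

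\textbf{Anticipated difficulty.} The main obstacle is notational rather than mathematical: one must carefully align the $N\times Q\times m$-array convention of probabilistic attention with the $N\times m$ convention of the classical $\operatorname{Attention}$ in~\eqref{eq_definition_attention}, verifying that averaging over the $Q$-axis amounts to replacing the raw particles by their centroids $\tilde{Y}_n$, and then checking that these centroids remain in $K$ so that the softmax-weighted average also lies in $K$. Once this bookkeeping is done, convexity of $K$ closes the argument and Jensen's inequality converts the Wasserstein bound of Theorem~\ref{thrm_quantitativer_version_NonConvex} into the Euclidean bound in~(ii).
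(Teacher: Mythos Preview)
Your proof is correct and follows essentially the same route as the paper: apply Theorem~\ref{thrm_quantitativer_version_NonConvex}, use convexity of $K$ to place the mean in $K$, and use Jensen's inequality to pass from the Wasserstein bound to the Euclidean one. The only cosmetic difference is that the paper packages the Jensen step and the constraint-satisfaction step into Lemma~\ref{lem_convex_barycenter_map} (the barycenter map is $1$-Lipschitz and $K$-valued), whereas you carry both out inline via the explicit finite convex combinations and the closed form $\www_1(\cdot,\delta_{f(x)})=\mathbb{E}\|f(x)-\cdot\|$; you also bypass the paper's detour through $\operatorname{argmin}_{y\in K}L(x,y)=\{f(x)\}$ by invoking the footnote bound on $\www_1(\hat{\mathcal D}\circ\hat{\mathcal E}(x),\delta_{f(x)})$ directly.
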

\vspace{-1em}
\subsubsection{Chart-Free Riemannian Manifold-Valued Universal Approximation}
\label{s_Applications_ss_GDL}
\vspace{-.5em}
We explore how additional non-convex structure of the constraint set $K$ can be encoded by the \textit{probabilistic transformer networks} of Theorems~\ref{theorem_DeepBMT} and~\ref{thrm_quantitativer_version_NonConvex} and be used to build new types of (deterministic) transformer networks.  These results highlight that the standard transformer networks of~\eqref{cor_Deep_BMT_Convex_Case} are specialized for convex constraints and that by instead using an intrinsic variant of expectation, we build can new types of ``geometric transformer networks'' customized to $K$'s geometry.  This section makes use of Riemannian geometry; for an overview see \cite{jost2008riemannian}.

Let $(M,g)$ be a connected $d$-dimensional Riemannian submanifold of $\rr^m$ with distance function by $d_g$. We only require the following mild assumption introduced in \cite{Afsari_RiemannianCentersOfMassAMS2011}.  We recall that the \textit{injectivity radius} at $y_0$, denoted by $\operatorname{inf}_g(y_0)$,  (see \citep[Definition 1.4.6]{jost2008riemannian}) is the minimum length of a \textit{geodesic} (or minimal length curve) in $M$ with starting point $y_0$.  We also recall that the \textit{sectional curvature} (see \citep[Definition 4.3.2]{jost2008riemannian} for a formal statement) quantifies the curvature of $(M,g)$ as compared the geometry of its flat counterpart $\rr^d$.  We focus on a broad class of non-convex constrains, namely \textit{geodesically convex constraints}, which generalize convex constraint and have received recent attention in the optimization literature \citep{SurvitCOLT2016_GeodesicConvexityOptimization,GeodesicConvexityOptimizationAcceleration}. 
\begin{assumption}[Geodesically Convex Constraints]\label{ass_geodesic_general}
The Riemannian manifold $(M,g)$ is connected, it is complete as a metric space, and all its sectional curvatures of $(M,g)$ are all bounded above by a constant $C\geq 0$.  The non-empty constrain set $K$ satisfies:
\vspace{-.25em}
\begin{enumerate}[nolistsep,leftmargin=2em]
    \item $K$ is contained in the geodesic ball $B(y_0,\rho)\triangleq \{
y \in M:\, d_g(y_0,y)< \rho
\}$ for some point $y_0\in M$ and some radius $\rho$ satisfying\footnote{
Following \cite{Afsari_RiemannianCentersOfMassAMS2011}, we make the convention that if $C\leq 0$ then, $\frac1{\sqrt{C}}$ is interpreted as $\infty$.
}:
$
0 <\rho < 2^{-1}\min\{
    \operatorname{inj}_{g}(y_0)
        ,
    \frac{\pi}{\sqrt{C}}
\}
,
$
\item For each $y_0,y_1\in K$ there exists a unique geodesic $\gamma:[0,1]\rightarrow K$ joining $y_0$ to $y_1$.
\end{enumerate}
\end{assumption}
\vspace{-.5em}
Our latent probabilistic representation grants us the flexibility of replacing the usual ``extrinsic mean'' used in~\eqref{cor_Deep_BMT_Convex_Case} to extract deterministic predictions from our probabilistic transformer networks via an additional \textit{Fr\'{e}chet mean} layer at their readout.  This intrinsic notion of a mean, was introduced independently in \cite{FrechetOGPaperFrechetMeansIntroduced1948} and in \cite{KarcherOGPaperKarcherMeans1977}, and is defined on any $\mathbb{P}\in \mathcal{P}_1(K)$ by:
\vspace{-.5em}
\begin{equation}
    \bar{\mathbb{P}}\triangleq 
    \underset{
        k \in K
    }{
        \operatorname{argmin}
        }\,
    \int\, d_g^2(k,u)\, \mathbb{P}(du)
.
\label{eq_frechet_mean_def}
\end{equation}

\vspace{-1em}
With this ``geometric readout layer'' added to our model, we obtain the following variants of our main results in this non-convex, but geometrically regular, setting.
\begin{corollary}[Constrained Universal Approximation: Riemannian Case]\label{cor_nonconvex_geometric_case}
Consider the setting and notation of Corollary~\ref{cor_random_uniform_UAT}.  Let $L(x,y)=\|f(x)-y\|$.  If Assumption~\ref{ass_geodesic_general} holds then:
\begin{equation}
    \rr^n\ni x \mapsto  
    \overline{
        \hat{\mathcal{D}}\circ
        \hat{\mathcal{E}}(x)
    }
    \in K
    \label{cor_Deep_BMT_Non_Convex_and_Geometric_Case}
    ,
\end{equation}
is a well-defined Lipschitz-continuous function, and the following hold:
\begin{enumerate}[nolistsep,leftmargin=2em]
  \item[(i)]\textbf{Exact Constraint Satisfaction:} $
  \overline{ 
    \hat{\mathcal{D}}\circ \hat{\mathcal{E}}(x)
  }
  \in K$, for each $x\in \xxx$,
    \item[(ii)]\textbf{Universal Approximation:}
$\sup_{\xxx}\, d_g(f(x)
    ,
 \overline{ 
    \hat{\mathcal{D}}\circ \hat{\mathcal{E}}(x)
  }
)<
\epsilon_K
    +
    k d
    \epsilon_f
    .
$
\end{enumerate}
The ``complexities'' of $\hat{\mathcal{D}}$ and $\hat{\mathcal{E}}$ are recorded in Table%
\footnote{
Explicit constants are recorded in 
Table~\ref{tab_model_complexities} within the paper's appendix; there, $\epsilon_K$ and $\epsilon_f$ may differ.
}
~\ref{tab_Big_O_Rates} for $\frac{\epsilon}{2}=\epsilon_k=\epsilon_f$.
\end{corollary}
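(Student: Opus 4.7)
The plan is to reduce Corollary~\ref{cor_nonconvex_geometric_case} to Theorem~\ref{thrm_quantitativer_version_NonConvex} by post-composing with the Fréchet-mean map and invoking \citet{Afsari_RiemannianCentersOfMassAMS2011}'s theory of Riemannian centers of mass. I would first apply Theorem~\ref{thrm_quantitativer_version_NonConvex} with the loss $L(x,y)=\|f(x)-y\|$, which satisfies Assumption~\ref{ass_normcontrolled_loss} with $l(t)=t$. This produces an encoder $\hat{\mathcal{E}}$ and a decoder $\hat{\mathcal{D}}$ such that $\operatorname{supp}(\hat{\mathcal{D}}\circ\hat{\mathcal{E}}(x))\subseteq K$ and, by the footnote version of (ii) in that theorem,
\[
\sup_{x\in [0,1]^n}\www_1\bigl(\hat{\mathcal{D}}\circ\hat{\mathcal{E}}(x),\delta_{f(x)}\bigr)\leq \epsilon_K+k\,\operatorname{Lip}(\Phi^{-1})\,d\,\epsilon_f,
\]
where the Wasserstein distance on the right is defined via the Euclidean metric of $\rr^m$.

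Next, under Assumption~\ref{ass_geodesic_general}, Afsari's theorem guarantees that every Borel probability measure supported in the geodesic ball $B(y_0,\rho)$ admits a unique Fréchet mean lying in $B(y_0,\rho)$, and that the barycenter operator $\mathbb{P}\mapsto\overline{\mathbb{P}}$ is Lipschitz from $(\mathcal{P}_1(B(y_0,\rho)),\www_1^{d_g})$ into $(M,d_g)$, with a constant $L_F$ depending only on $C$ and $\rho$. Since $K\subseteq B(y_0,\rho)$ is geodesically convex, the minimization in~\eqref{eq_frechet_mean_def}, which is restricted to $k\in K$, coincides with the unrestricted one on $B(y_0,\rho)$, so $\overline{\hat{\mathcal{D}}\circ\hat{\mathcal{E}}(x)}\in K$; this gives (i) and the well-definedness of the map in~\eqref{cor_Deep_BMT_Non_Convex_and_Geometric_Case}. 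Continuity of the composite $x\mapsto \overline{\hat{\mathcal{D}}\circ\hat{\mathcal{E}}(x)}$ then follows by stacking the Lipschitz continuity of the neural networks $\hat{\mathcal{E}},\hat{D}$, the Lipschitz continuity of $\operatorname{P-attention}(\cdot,Y)$ into $(\mathcal{P}_1(K),\www_1)$, and the Lipschitz continuity of the barycenter operator.

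For (ii), I would apply the barycenter Lipschitz bound to $\delta_{f(x)}$ (whose Fréchet mean is $f(x)$) and to $\hat{\mathcal{D}}\circ\hat{\mathcal{E}}(x)$:
\[
d_g\bigl(f(x),\overline{\hat{\mathcal{D}}\circ\hat{\mathcal{E}}(x)}\bigr)\leq L_F\,\www_1^{d_g}\bigl(\delta_{f(x)},\hat{\mathcal{D}}\circ\hat{\mathcal{E}}(x)\bigr).
\]
Because $K$ is compact and embedded smoothly in $\rr^m$, the intrinsic and extrinsic distances are Lipschitz equivalent on $K$ with some constant $C_K$; both measures being supported in $K$, this transfers the Euclidean $\www_1$-estimate from Step~1 to the intrinsic one at the cost of a factor $C_K$. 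Absorbing $L_F$, $C_K$, and $\operatorname{Lip}(\Phi^{-1})$ into a single absolute constant (still called $k$) produces the estimate $\epsilon_K+kd\epsilon_f$ claimed in (ii). Complexity counts for $\hat{\mathcal{D}}$ and $\hat{\mathcal{E}}$ are inherited verbatim from Theorem~\ref{thrm_quantitativer_version_NonConvex}.

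The main obstacle is extracting the three properties of the barycenter — existence, uniqueness, and Lipschitz stability — from Afsari's framework; the radius constraint $\rho<\tfrac12\min\{\operatorname{inj}_g(y_0),\pi/\sqrt{C}\}$ is exactly what makes the Fréchet energy strongly geodesically convex, and the geodesic convexity of $K$ (the second clause of Assumption~\ref{ass_geodesic_general}) is what allows the restricted minimizer to agree with the ambient one and hence lie in $K$. A secondary subtlety is the careful bookkeeping when passing between $\www_1^{d_g}$ and the Euclidean $\www_1$ under which Theorem~\ref{thrm_quantitativer_version_NonConvex} is stated; compactness of $K$ handles this cleanly but the equivalence constant must be tracked in the final bound.
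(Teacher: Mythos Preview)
Your approach is essentially the paper's: invoke Theorem~\ref{thrm_quantitativer_version_NonConvex} for the $\www_1$-bound to $\delta_{f(x)}$, then post-compose with the Fr\'echet-mean operator and use its Lipschitz stability together with $\overline{\delta_{f(x)}}=f(x)$. The paper packages the Afsari result you cite (plus a Sturm-type convexity argument) into Lemma~\ref{lem_cbm}, which yields the barycenter map as $1$-Lipschitz rather than a generic $L_F$, and records the inclusion $\overline{\mathbb{P}}\in K$ for finitely supported $\mathbb{P}$ as~\eqref{eq_lem_cbm_inclusion}; your direct appeal to Afsari reproduces exactly these ingredients.

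You are in fact more careful than the paper on one point: you distinguish the intrinsic Wasserstein distance $\www_1^{d_g}$ (with respect to which the barycenter is non-expansive) from the Euclidean $\www_1$ in which Theorem~\ref{thrm_quantitativer_version_NonConvex} is stated, and bridge them via bi-Lipschitz equivalence of $d_g$ and $\|\cdot\|$ on the compact $K$. The paper's proof applies Lemma~\ref{lem_cbm} and Theorem~\ref{thrm_quantitativer_version_NonConvex} in succession without commenting on this passage, so your bookkeeping with $C_K$ is a genuine refinement rather than a deviation.
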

\vspace{-1.5em}
\section{Discussion}\label{s_Discussion}
\vspace{-1em}
In this paper, we derived the first \textit{constrained universal approximation} theorems using probabilistic reformation of \cite{vaswani2017attention}'s transformer networks.  The results assumed both a quantitative form (Theorem~\ref{thrm_quantitativer_version_NonConvex}) and a qualitative form in the more general case of an arbitrary loss functions $L$ and additional compatible soft constraints in (Theorem~\ref{theorem_DeepBMT}).  Our results provide (generic) direction to end-users designing deep learning models processing non-vectorial structures and constraints.  

As this is the first approximation theoretic result in this direction, there are naturally as many questions raised as have been answered.  In particular, it is natural to ask: ``\textit{Are the probabilistic transformer networks trainable in practice; especially when $K$ is non-convex%
?''}.  In Appendix~\ref{a_Training_the_Prob_Transformer}, we show that the answer is indeed: \textit{``Yes!''}, by proposing a training algorithm in that direction and showing that we outperform an MLP model and a classical transformer network in terms of a joint MSE and distance to the constraint set.  The evaluation is performed on a large number of randomly generated experiments, whose objective is to reduce the MSE to a randomly generated function mapping a high-dimensional Euclidean space to there sphere $\rr^3$ with outputs constrained to the sphere.
\section*{Acknowledgments}
Anastasis Kratsios and Ivan Dokmani\'{c} were supported by the European Research Council (ERC) Starting Grant 852821---SWING.  The authors thank Wahid Khosrawi-Sardroudi, Phillip Casgrain, and Hanna Sophia Wutte from ETH Z\"{u}rich, Valentin Debarnot from the University of Basel for their helpful feedback, and Sven Seuken from the University of Z\"{u}rich for his helpful feedback in the rebuttal phase.
\bibliography{iclr2021_conference}
\bibliographystyle{iclr2022_conference}
\section{Precise Transformer Complexities and Approximation Rates}
\label{a_Precise_rates}
This section records the exact approximation rates, or equivalently the precise model complexities, of the transformer networks implemented in our quantitative constrained universal approximation results.  The rates are simply those recorded in Table~\ref{tab_Big_O_Rates} but with explicit constants.  

Table~\ref{tab_model_complexities} makes use of the following notation.  
We denote the diameter of the compact set $K$ by $\operatorname{diam}(K)\triangleq \max_{y_1,y_2\in K}\, \|y_1-y_2\|$.  Furthermore, $k>0$ in Table~\ref{tab_model_complexities} is a universal constant independent of $\epsilon_K$, $\epsilon_f$, $f$, $K$, $n$, $m$, and of $d$.  The big $\mathscr{O}$ notation used in Table~\ref{tab_model_complexities} masks any constants not depending on these quantities.  
\begin{table}[h]
    \centering
    \begin{adjustbox}{width=\columnwidth,center}
    \begin{tabular}{c|cc}
\toprule
        Network & $\hat{\mathcal{E}}$ & $\hat{\mathcal{D}}$\\
\midrule    
        \hline
       Depth 
        & 
        $
        \mathscr{O}(
                m^{\frac1{s}}(
                  1
                    +
                \epsilon_f^{\frac{2n}{3(kn+1)}-\frac{2n}{kn+1}}
                )
            )
        $ 
        & 
        $
        \mathscr{O}\left(
        (
m^{\frac{1}{s}} N^{\frac{3}{2}}( \operatorname{Lip}(\Phi)\operatorname{diam}(K) + 2\epsilon_f) (1-\frac{\epsilon_K^{-1}}{4})^2 (1+\frac{m^{\frac{1}{s}}}{4})
)^{\frac{2m}{s}}
        \right)
        $
        \\
        Width  
        & $\mathscr{O}(m^{\frac1{s}}(4n+10))$ 
        & $
        \mathscr{O}(m^{\frac1{s}}+N+2)$
        \\
        $N$  & - 
        & $
        \mathscr{O}\left(
          \left(
    \frac{k m^{\frac{2}{s}}
    2^{\frac{9}{2}}
    \operatorname{Lip}(\Phi)(\operatorname{diam}(K)+\epsilon_f)
    }{
    \sqrt{m^{\frac1{s}}+1}\epsilon_K
    }
    \right)^{\frac{m}{s}}
    \right)$
        \\
    $Q$  & -
    & $\mathscr{O}\left(
    (\epsilon_K^{-1} \operatorname{Lip}(\Phi)\operatorname{diam}(K)m^{\frac{5}{2s}})^{\frac{m}{s}}
    \right)$
    \\
\bottomrule
    \end{tabular}
    \end{adjustbox}
    \caption{Complexities of encoder-decoder Network with P-attention for $0<\epsilon_K\leq 1$.}
    \label{tab_model_complexities}
\end{table}
\begin{remark}
If $\epsilon_K>1$ then the above rates hold but the term $(1-\frac{\epsilon_K^{-1}}{4})^2$ in $\hat{\mathscr{D}}$'s depth estimate must be replaced by $(1-\frac{\epsilon_K^{-1}}{4})(1-\epsilon_K^{-1})$ in order for the rates to remain valid.
\end{remark}


\section{Can the Probabilistic Transformer Network be Trained?}
\label{a_Training_the_Prob_Transformer}
The purpose of this appendix is to answer the following questions:
\begin{enumerate}
    \item[(i)] \textit{Are our probabilistic transformer networks trainable and, if so, how?}
    \item[(ii)] \textit{How do probabilistic transformer networks perform on a toy non-convex problem?}
\end{enumerate}
We first affirm (i) by describing a potential training algorithm for our model.  Then, we address (ii) on a toy non-convex problem whose objective is to learn (randomly generated) functions taking values on the standard $2$-sphere in $\rr^3$.  Our code is available at code is available at~\cite{TransformerCode}.

\subsection{A Training Algorithm}\label{a_Training_the_Prob_Transformer___aa_training_algo}
Our analysis only has practical implications as we can affirmatively answer the following question:
\[
\mbox{\textit{``Is the probabilistic transformer network $\hat{F}$ of Theorem~\ref{thrm_quantitativer_version_NonConvex} trainable?"}}
\]
Our theoretical analysis motivates the following training procedure, whose steps we briefly explain.

\paragraph{Step 1 - Get Particles:} We assume that the user is has access to a subroutine $\operatorname{Generate}$ which generates particles on $K$.  This is always possible, for example, by randomly by sampling the available training outputs $\{y_t\}_{t=1}^T$; for example, this is what is done in our toy implementations.  However, if one has access to additional structure, such as a $K$-supported probability measure \citep{JMLR_Manopt_MiolanGuilguiLeBriantetcGeomSTATsPython} then, samples can be drawn therefrom, or if $K$ is equipped with a meaningful metric, then the randomized partition procedure such as \citep{Bartalmetricapprox,MaxflowmMincutTheoremImprovedPulat1989} can be deployed.  

\paragraph{Step 2 - Train Model:}
When $m>1$, the Wasserstein distance is costly to evaluate numerically \citep{Computational_optimal_transport_IEEE_PeleWerman,Computational_optimal_transport_NIPS2013_Cuturi,Computational_optimal_transport_NeurIPS_KolourNadjahiSimselkiBadeaRohde,Computational_optimal_transport_JMLR2019SommerfeldScrieverZemelMunk}.  A variety of approximate or regularized transport distances have been introduced to manage this problem but only approximately.  However, in the context of Theorem~\ref{thrm_quantitativer_version_NonConvex} and Algorithm~\ref{algo_train} we are always interested in distances to the pointmass and therefore, $\www_1$ has the following exceptional \textit{closed-form} expression which bypasses these computational issues:
\begin{equation}
    \sum_{t=1}^T \mathcal{W}_1(\delta_{f(x_t)},\operatorname{P-attention}(\hat{f}(x_t),Y))
=
\sum_{t=1}^T \sum_{n=1}^N
\left\|
y_t
-
Y_n
    \right\|
    [\operatorname{Softmax}_N\circ \hat{f}(x_t)]_n
\label{eq_closed_form_wasserstein}
.
\end{equation} 
\begin{remark}[{Exceptional Closed-Form for $\mathcal{W}_1(\delta_{f(x_t)},\operatorname{P-attention}(\hat{f}(x_t),Y))$ in~\eqref{eq_closed_form_wasserstein}}]\label{remark_derivation}
A derivation of the closed-form identity~\ref{eq_closed_form_wasserstein} is in Lemma~\ref{lem_closedform_wasserstein_dirac}.
\end{remark}

\paragraph{Step 2 - Prediction:} In the case where $K$ is a Riemannian manifold, 
Since Fr\'{e}chet means are readily implemented in a variety of packages \citep{JMLR_Manopt_MiolanGuilguiLeBriantetcGeomSTATsPython}, we assume that the user has access to a subroutine $\operatorname{Fr\mbox{\'{e}}chet\,Mean}$ which takes an $N\times Q$-matrix of weights $(w_{n,q})_{n,q=1}^{N,Q}$ and an $N\times Q\times 1$-array and computes the Fr\'{e}chet mean~\eqref{eq_frechet_mean_def}.  
By Corollary~\ref{cor_nonconvex_geometric_case}, once the network $\hat{F}$ is trained, its outputs generate points on $K$ via the Fr\'{e}chet mean~\eqref{eq_frechet_mean_def}; i.e.:
$
\overline{\hat{F}(x)}
= 
\operatorname{argmin}_{y \in K} \sum_{n,q=1}^{N,Q} w_{n,q} d_g^2(y,y_{n,q})
.
$

\SetAlgoNoLine
\LinesNotNumbered
\begin{algorithm}[ht]
\DontPrintSemicolon
\KwIn{Training Data $\{(x_t,y_t)\}_{t=1}^T\subseteq \rr^n\times K$
}
\KwOut{
Probabilistic Transformer Network:
$
\hat{F}\triangleq 
\sum_{n=1}^N \sum_{q=1}^Q [\operatorname{Softmax}_N\circ f(\cdot)]_n 
w_{n,q}
    \delta_{y_{n,q}}
$
}
\nextnr
\textbf{Get Particles:} 
Use $\operatorname{Generate}$ $K$ to generate $y_1,\dots,y_S\in K$
\;
\For{$n=1,\dots,N$}{
    $\{s_q\}_{q=1}^Q 
        \leftarrow
         \operatorname{argsort}_Q \,\{\|y_s-Y_n\|\}_{s=1}^S$
    \\
    $
    \{y_{n,q}\}_{q=1}^Q 
        \leftarrow
        \{y_{s_q}\}_{q=1}^Q
    $
    }
\nextnr
\textbf{Train Model:}\;Get Labels:
\For{$t\leq T$}{
\For{$n\leq N$}{
$(L_t)_n \leftarrow I(\|y_t - Y_n\|\leq \min_{m=1,\dots,N}\|y_t-Y_m\|$)
}
}
\nextnr
$
\hat{f}
    \leftarrow
\operatorname{argmin}_{\hat{f}}
\sum_{t=1}^T \sum_{n=1}^N
\|(L_t)_n 
    -[\operatorname{Softmax}_N\circ \hat{f}(x_t)]_n
\|^2
$
\;
\Return{
$
\hat{F}(\cdot)
    \triangleq 
    \sum_{n=1}^N
    \operatorname{Softmax}_N\circ \hat{f}(\cdot)_n\delta_{Y_n}
$
}
\label{algo_train}
\caption{Training Probabilistic Transformers for Exact Constraint Satisfaction}
\end{algorithm}
\begin{remark}[Prediction]\label{remark_prediction}
Predictions can be made using $\hat{F}$ by either applying an expectation, in which case classical tranformer networks of \cite{vaswani2017attention} are recovered, using the Fr\'{e}chet mean as a final layer if $K$ is a geodesically convex subset of a Riemannian submanifold of $\mathbb{R}^m$, or taking the most-likely particle if noting more is known of $K$ other than its point-set.  
\end{remark}

We now address question (ii).  
\subsection{Performance on a Toy Non-Convex Problem}
\label{a_Training_the_Prob_Transformer___aa_Performance_on_Toy_Example}
Let $K\subseteq \rr^{2}$ be a $2$-dimensional sphere in $\rr^{3}$.  
Let $a,b,c$ be independently drawn from a uniform distribution on $[0,1]$ and let $A$ be a $2\times {10^3}$ random matrix with i.i.d. standard Gaussian entries.  Let
$f=\tilde{f}(Ax)$ be the random $K$-valued function where
$u_i=a x^2_i+b x_i +c$, for $i=1,2$, and 
$\tilde{f}(u)\triangleq (
  \cos(\tilde{f}(u)_1)\sin(\tilde{f}(u)_2)),
  \sin(\tilde{f}(u)_1)\sin(\tilde{f}(u)_2)),
  \cos(\tilde{f}(u)_1)\cos(\tilde{f}(u)_2))
  )
$.  
Therefore, $A$ projects $\rr^{{10^3}}$ onto the latent low-dimensional space $\rr^2$ and $\tilde{f}$ sends data in $\rr^2$ to a point on the sphere obtain by a random polynomial transformation of its spherical coordinates (which is a non-convex constraint set).  

We independently repeat this experiment $500$-times, generating a random $f$ each time and generating $1k$ training inputs $\{x_t\}_{t=1}^{{10^3}} \subseteq [0,1]^{{10^3}}$ (resp. $100$ testing inputs) by independently and uniformly sampling from $[0,1]^{{10^3}}$ and producing $1k$ corresponding training (resp. $100$ testing) outputs $\{f(x_t)\}_{t=1}^{{10^3}}\subset K$.   For each independent experiment, a probabilistic transformer network {\color{forestgreen}{$\pp\mbox{-Trans.}$}}) is trained using Algorithm~\ref{algo_train}, and benchmarked against a deep feedforward network (MLP) and a classical transformer network ({\color{BurntOrange}{Trans.}}).  Table~\ref{tab_experiments} reports the average and standard deviation, across all experiments, of the test-set MSE and the distance to the constraint set ($d_K$) of the test-set predictions for each learning model.  

Figure~\ref{fig_image} shows that, high emphasis is placed on constraint satisfaction ($\lambda\in [0,0.75]$) then the $\pp$-Trans.+Fr\'{e}chet  model outperforms the benchmark models.  As the emphasis parameter $\lambda$ approaches the critical value of $\approx.75$ then, the MSE dominates the constraint satisfaction metric $d_K$ and the $\pp$-Trans.+Fr\'{e}chet's larger average test MSE is larger than that of the MLP and Trans. models.  This validates the error terms $\epsilon_K$ and the factor $k\operatorname{Lip}(\Phi^{-1})d$ in Theorem~\ref{thrm_quantitativer_version_NonConvex} (ii), reflected in Table~\ref{tab_experiments}, which is due to the decoder network $\hat{\mathcal{D}}$ in $f$ approximating a random projection of $\rr^{3}$ onto $K$.  

\begin{figure}[ht]
\centering
\includegraphics[width=0.5\linewidth]{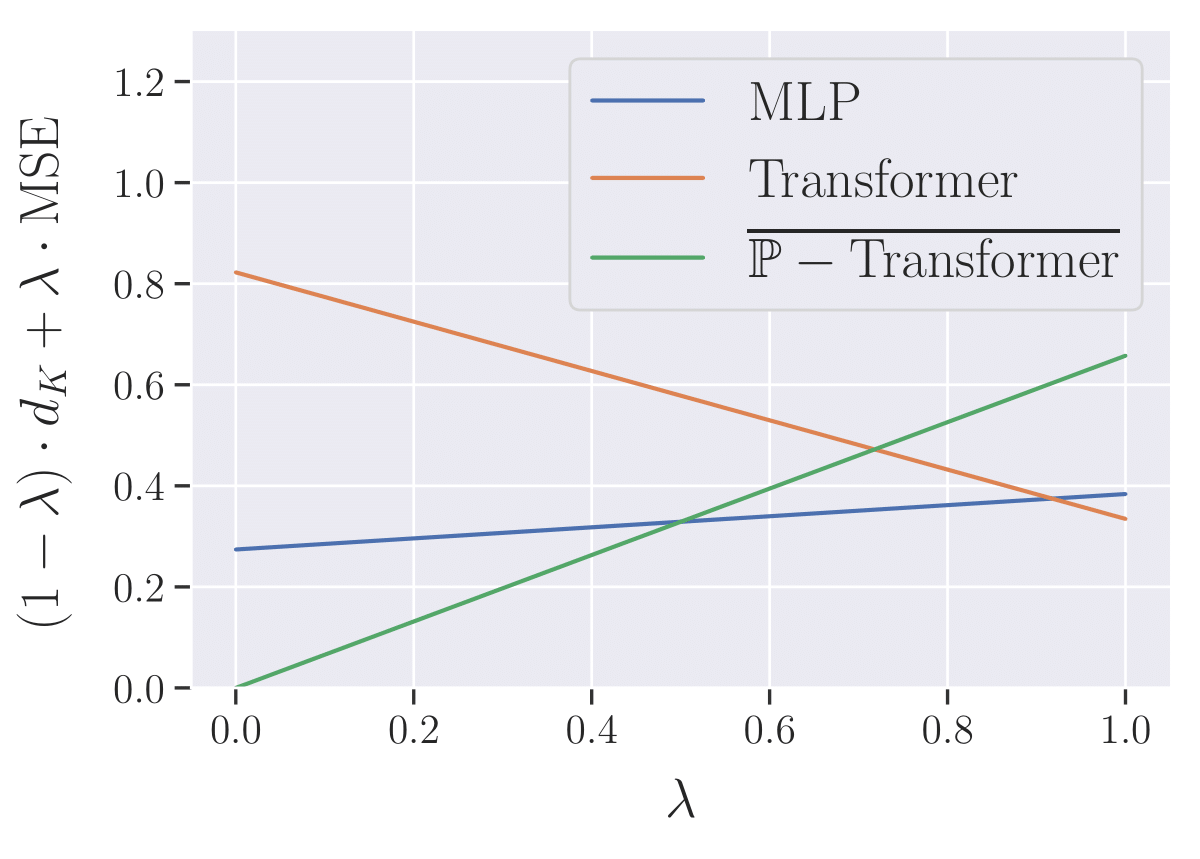}
\caption{Performance for varying importance on constraint satisfaction vs. MSE.}
\label{fig_image}
\end{figure}

Therefore, we find that our probabilistic transformer network is both implementable, and that, as expected, it offers good predictive performance even while enforcing non-convex constrained.  In other words, we have obtained positive answers to the natural questions (i) and (ii) posed at the start of Appendix~\ref{a_Training_the_Prob_Transformer}.  

\begin{table}[ht]
\centering
\begin{tabular}{lrrrr}
\toprule
& \multicolumn{2}{c}{$d_K$} & \multicolumn{2}{c}{MSE} \\ \cmidrule(r){2-3} \cmidrule(r){4-5}
&  Mean  &  Std & Mean & Std \\
\midrule
{\color{darkcerulean}{MLP}}                &  0.274 &   0.106 &   0.384 & 0.034 \\
{\color{BurntOrange}{Transformer}}             &   0.822 &   0.100   &   0.334  &0.009  \\
{\color{forestgreen}{$\overline{\pp\mbox{-Transformer}}$}} &   0.000 &  0.000 &   0.657 &  0.059 \\
\bottomrule
\end{tabular}
\caption{Performance metrics across all $500$ experiments.}
\label{tab_experiments}
\end{table}

Table~\ref{tab_experiments} emphasizes that classical transformer networks are not built to handle non-convex constraints.  Indeed, the poor performance of the transformer network, with respect to the $d_K$, is due to most its predictions lying inside the sphere (which is hollow).

Further study into training algorithms for our model, and detailed ablation of the model parameters are topics of focus in forthcoming research.  Nevertheless, we have obtained an affirmative answer to both our questions (i) and (ii).  Namely, we have shown that our probabilistic transformer morel is trainable via a simple procedure such as Algorithm~\ref{algo_train}.

{\color{black}{
\subsection{Examining the impact of {\color{darkgreen}{$K$}}'s Geometry on Transformer Networks}
\label{a_Training_the_Prob_Transformer__ss_Further_Ablation}
To gain further insight into how probabilistic transformers encode geometric priors, we will examine the impact of perturbations to {\color{darkgreen}{$K$}} on the probabilistic transformer's approximation capabilities.  We consider toy illustrations beginning with the convex setting before moving on to the fully non-convex setting where no projections, charts, or even a Riemannian structure is available.  

We further underline that step $3$ in Algorithm~\ref{algo_train} may be performed in a variety of ways and, unlike the previous experiments, all networks in this section are obtained by randomizing their hidden weights and only training their final layer.  Theoretical guarantees for this approach has become relatively well understood \citep{RandomRNN_RandomMatrixPerspective_AnnStat2018,GononJPLyudmila_RandomJMLR2020,GononJPLyudmila_RandomRNNs2021}.  Implicitly, our examples also show that probabilistic transformers can equally be integrated into domains where randomized models such as extreme learning machines (ELMs) are typical; e.g. in the reservoir computing \cite{LUKOSEVICIUS2009127,JPLG2018,JPLG2019}. 

This section's primary goal is to experimentally validate the main quantitative claim made implicitly in our main result; i.e. Theorem~\ref{thrm_quantitativer_version_NonConvex}.  Namely, we verify that: 
\[
\mbox{
\textit{``The model complexities in Table~\ref{tab_Big_O_Rates} are independent of {\color{darkgreen}{$K$}}'s geometry.'' }
}
\]
That is, the approximation quality of any optimized probabilistic transformer network only depends on the involved dimensions.  Expressed another way, we empirically validate our result that the probabilistic transformer networks can encode any geometric prior with the model complexity agnostic of {\color{darkgreen}{$K$}}'s geometry.  

Accordingly, all model architectures' hyperparameters are kept fixed across all experiments.  Each experiment reports the probabilistic transformer's MSE relative to the benchmark MLP model {\tiny{$\left(\frac{\mbox{MSE}}{\mbox{MSE-MLP}}\right)$}}.  

Our result is validated upon observing that the probabilistic transformer model's {\tiny{$\frac{\mbox{MSE}}{\mbox{MSE-MLP}}$}} is of the same order across all experiments.  In other words: \textit{probabilistic transformers can approximate a {\color{darkgreen}{$K$}}-valued function while simultaneously encoding {\color{darkgreen}{$K$}}'s geometry with the same efficiency as an MLP trained only to approximate $f$ that ignores {\color{darkgreen}{$K$}}'s geometry.  }  

Each toy experiment is trained on a dataset of $900$ instances and tested on a dataset of $100$ instances.  We maintain the coloring scheme of Figure~\ref{fig_image} in all our subsequent plots, namely the {\color{darkcerulean}{MLP is colored in blue}}, the
{\color{BurntOrange}{Transformer is colored in orange}}, and the {\color{forestgreen}{$\pp\mbox{-Transformer}$ is colored in green.}}

\subsubsection{Convex Constraints}
\label{a_Training_the_Prob_Transformer__ss_Further_Ablation____sss_Convex_Constraints}

Our first set of examples concern the case where $K$ is a \textit{convex} constraint set, as studied in Corollaries~\ref{cor_convex_case}.  In each instance, we generate a random target function $f$, mapping $\rr^2$ to $K$.  The function $f$ is defined via the following two-step procedure:
\[
f : \mathbb{R} \overset{\tilde{f}}{\rightarrow} \mathbb{R}^2 \overset{P_K}{\rightarrow} {\color{darkgreen}{K}}
;
\]
where $\tilde{f}:\rr^2\ni x\mapsto Ax$ is a random rotation matrix re-scaled by a factor of $1.5$ with the angle sampled uniformly from $[0,2\pi]$ and where $P_K:\rr^2\ni x\mapsto \operatorname{argmin}_{y\in K}\,\|y-x\|$ is the \textit{metric projection} onto the $K$, which exist in this context by \citep{motzkin1935quelques}.  Figures~\ref{fig_x_fx_Square} and ~\ref{fig_x_fx_Circle} illustrate this two step transformation by first representing the uniformly generated input data in {\color{violet}{violet}} then, illustrating their images under $\tilde{f}$ in {\color{forestgreen}{light green}}, and finally plotting their value under $f$ in {\color{darkgreen}{dark green}}.

We perform our illustrations in the visualizable two dimensional case where $K$ is either the square $[-1,1]^2$ or the disk $\{y\in \rr^2:\, \|y\|\leq 1\|$.  This is because the projection operators ($P_K$) are readily interpretable from their closed-form formulations \citep{CombettesBauschkeConvexMontoneCMS2011}.  Respectively, these are given by $P_K(x) =(\min\{\max\{x_i,-1\},1\})_{i=1}^2$ and $P_K(x):=\frac{x}{\max\{1,\|x\|\}}$.

Figures~\ref{fig_Geometry_and_Particels_Square} and~\ref{fig_Geometry_and_Particels_Circle} demonstrate the constraint set ($K$) in {\color{darkgreen}{green}} and the particles, which populate $Y$'s rows, in {\color{violet}{violet}}.  These are generated randomly by first sampling uniformly from $[-2,2]^2$ and then projecting each sample onto $K$ via $P_K$.  Figures~\ref{fig_Geometry_and_Particels_Square} and~\ref{fig_Geometry_and_Particels_Circle} illustrate the role of the particles defining the probabilistic attention mechanism, defined in~\eqref{eq_probabilistic_attention}; namely, they identify the points in $K$ on which any output may \textit{approximately lie}.  Thus, the role of the encoder and decoder networks can be summarized as learning to classify which input is closest to which {\color{violet}{particle}}.

{\centering
\begin{minipage}{0.5\textwidth}
  \centering
  \centerline{\includegraphics[scale=0.15]{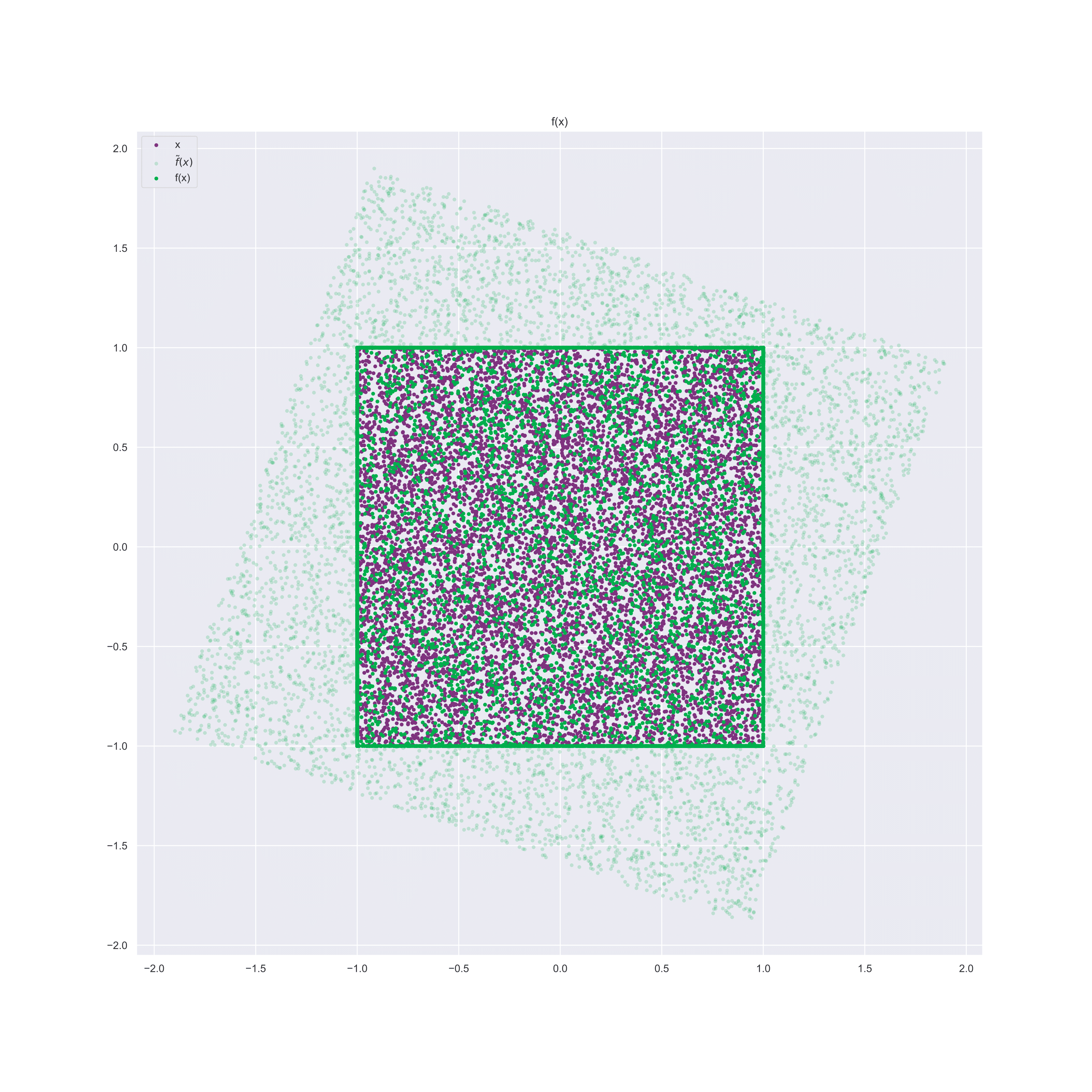}}
  \captionof{figure}{$x\mapsto f(x)$.}
  \label{fig_x_fx_Square}
\end{minipage}
\begin{minipage}{0.5\textwidth}
  \centering
   \centerline{\includegraphics[scale=0.15]{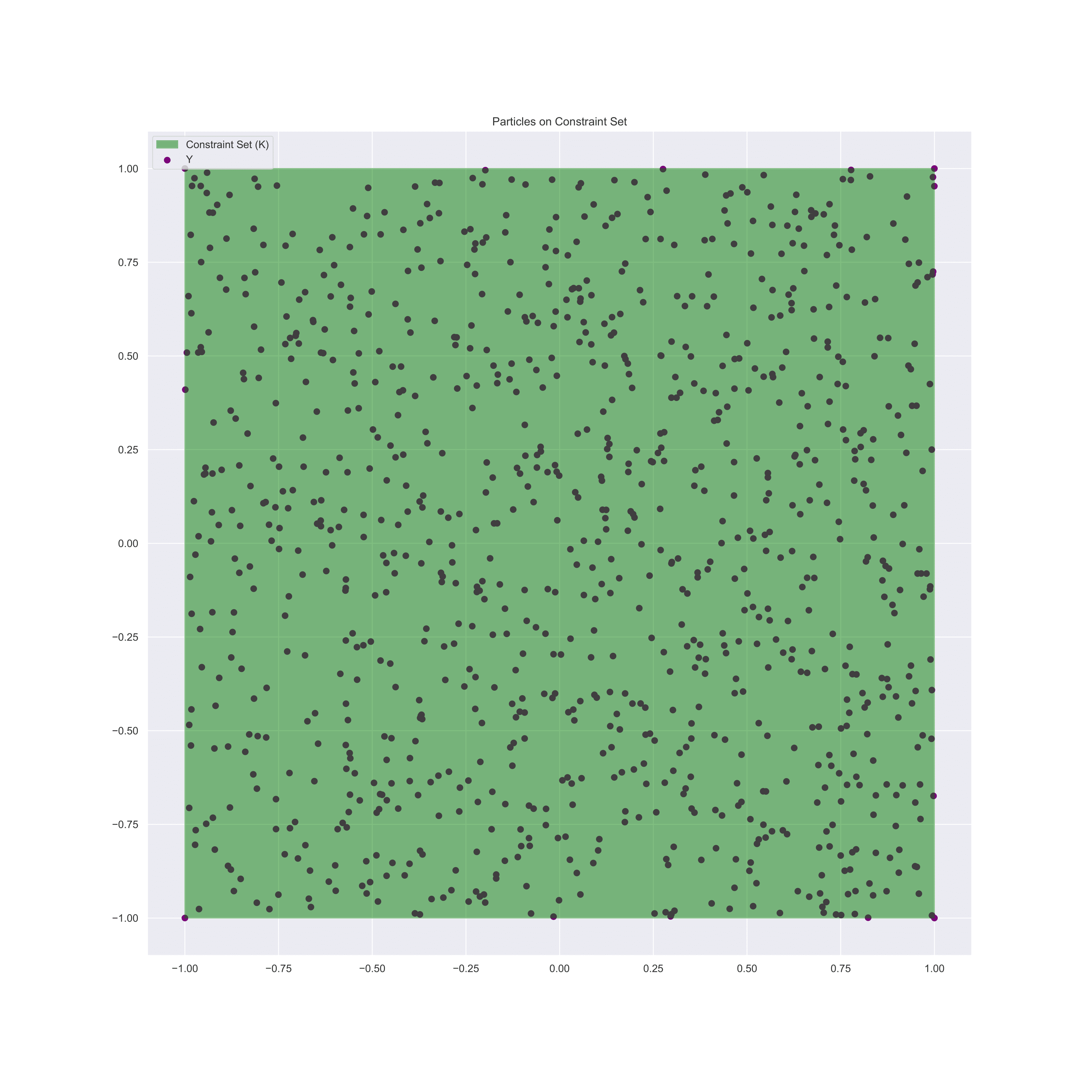}}
  \captionof{figure}{Particles $(Y)$ on Constraint Set $(K)$.}
  \label{fig_Geometry_and_Particels_Square}
\end{minipage}
}


Therefore, at high-level, the probabilistic attention mechanism~\ref{eq_probabilistic_attention} quantizes the constraint set {\color{darkgreen}{$K$}}.  The (simplified) classical Attention mechanism of~\eqref{eq_THE_KEY_IDENTITY___attention_implementation_via_expecation} implements a (convex) interpolation between the {\color{violet}{particles}} quantizing {\color{darkgreen}{$K$}} and an analogous statement is true of Riemannian analogue (Section~\ref{s_Applications_ss_GDL}).  For general {\color{darkgreen}{$K$}}, however, such interpolations within {\color{darkgreen}{$K$}} can be impossible or unclear how to implement them.  Nevertheless, the probabilistic attention mechanism never faces such a difficulty since it explicitly ``interpolates'' in $\mathcal{P}_1(K)$ and not on {\color{darkgreen}{$K$}} directly.


{\centering
\begin{minipage}{0.45\textwidth}
  \centering
  \centerline{\includegraphics[scale=0.11]{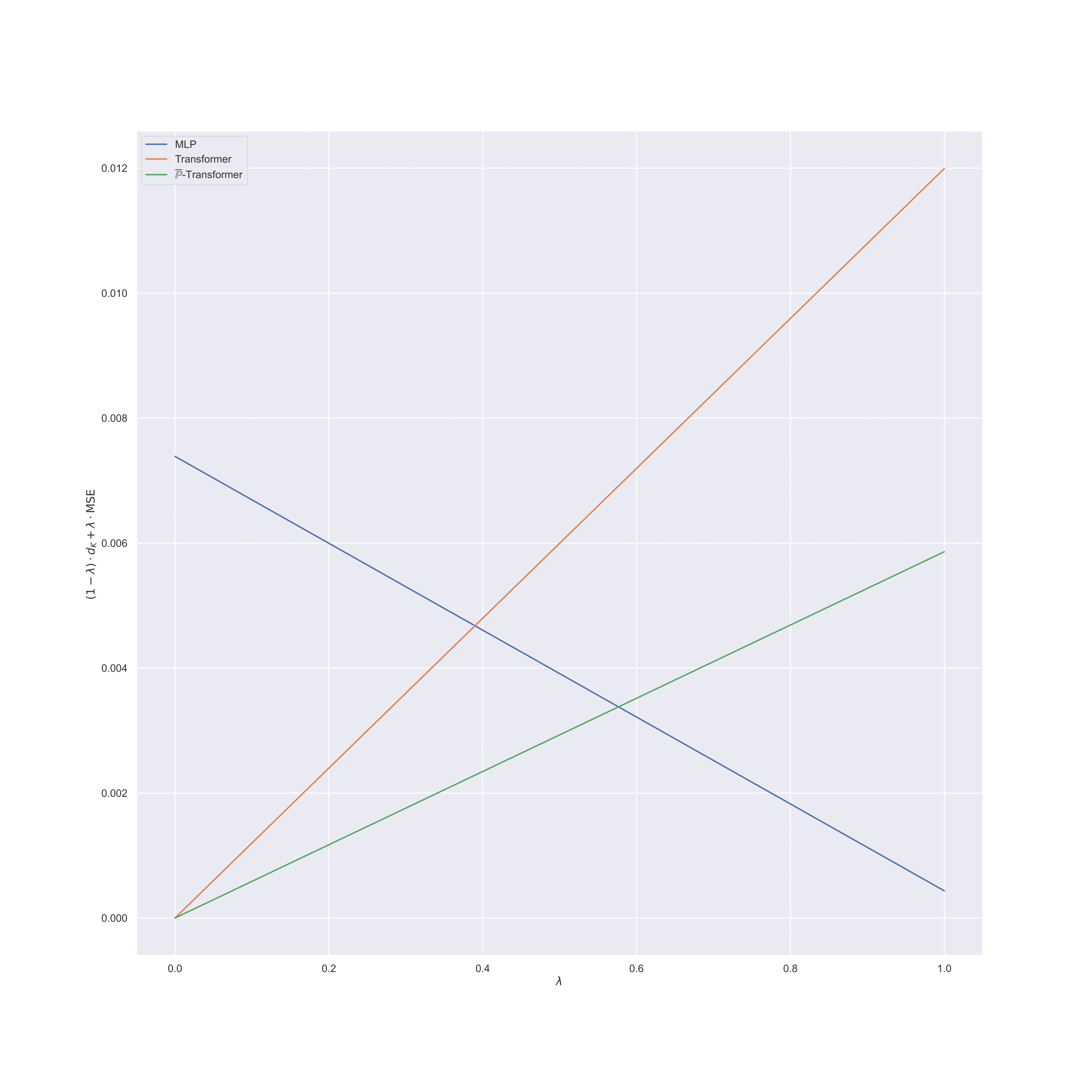}}
  \captionof{figure}{Performance: MSE vs. $d_K$.}
  \label{fig_Frontier_Square}
\end{minipage}
\begin{minipage}{0.55\textwidth}
  \centering
  \begin{adjustbox}{width=\columnwidth,center}
  \begin{tabular}{lrrr}
    \toprule
    {} &      $\frac{\mbox{MSE}}{\mbox{MSE MLP}}$ &      MSE &      $d_K$ \\
    \midrule
    {\color{darkcerulean}{MLP}}        & 1 & 4.35e-04 & 7.39e-03 \\
    {\color{BurntOrange}{Transformer}}     & 4.81 & 1.20e-02 & 0.00e+00 \\
    {\color{forestgreen}{$\overline{\pp\mbox{-Transformer}}$}} & 4.01 & 5.86e-03 & 0.00e+00 \\
    \bottomrule
    \end{tabular}
    \label{tab_Square}
    \end{adjustbox}
  \captionof{table}{Performance Metrics}
\end{minipage}
}

We obtain analogous results to the $500$ experiments performed in the case where $K$ is geodesically-convex in Section~\ref{a_Training_the_Prob_Transformer___aa_Performance_on_Toy_Example}.  Just as in Figure~\ref{fig_image}, Figures~\ref{fig_Frontier_Square} and~\ref{fig_Frontier_Circle} show that the transformer can simultaneously encode $K$ and approximate $f$, wheras the MLP cannot.  More precisely, in each case, if at-least roughly equal importance is placed on predictive accuracy (MSE) and constraint satisfaction ($d_K$) then, the transformer models offer the best performance.  This is equally reflected in the test set performance metrics of Tables~\ref{tab_Circle} and~\ref{tab_Square} which are consistent with the findings of Table~\ref{tab_experiments}.  

{\centering
\begin{minipage}{0.5\textwidth}
  \centering
  \centerline{\includegraphics[scale=0.15]{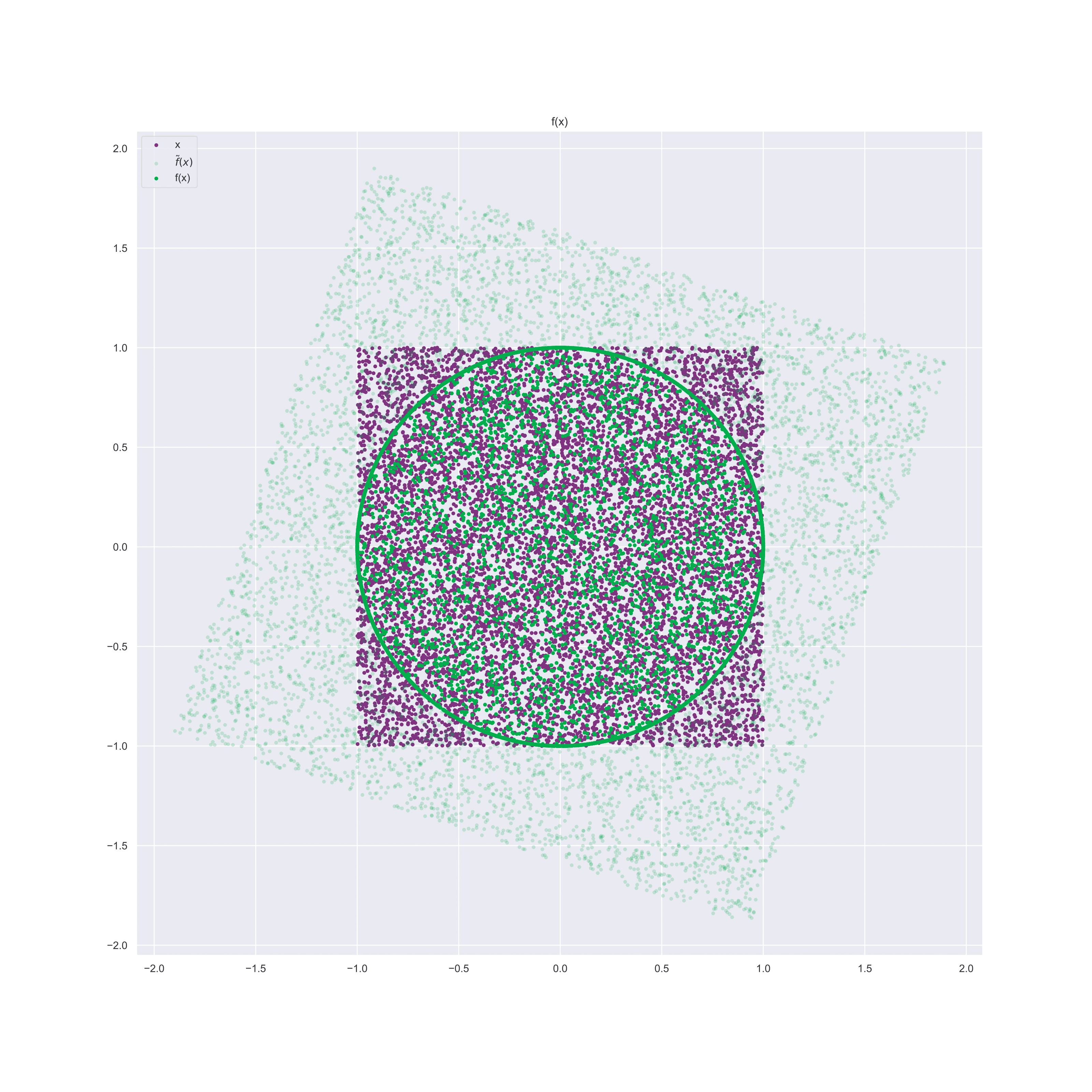}}
  \captionof{figure}{$x\mapsto f(x)$.}
  \label{fig_x_fx_Circle}
\end{minipage}
\begin{minipage}{0.5\textwidth}
  \centering
   \centerline{\includegraphics[scale=0.15]{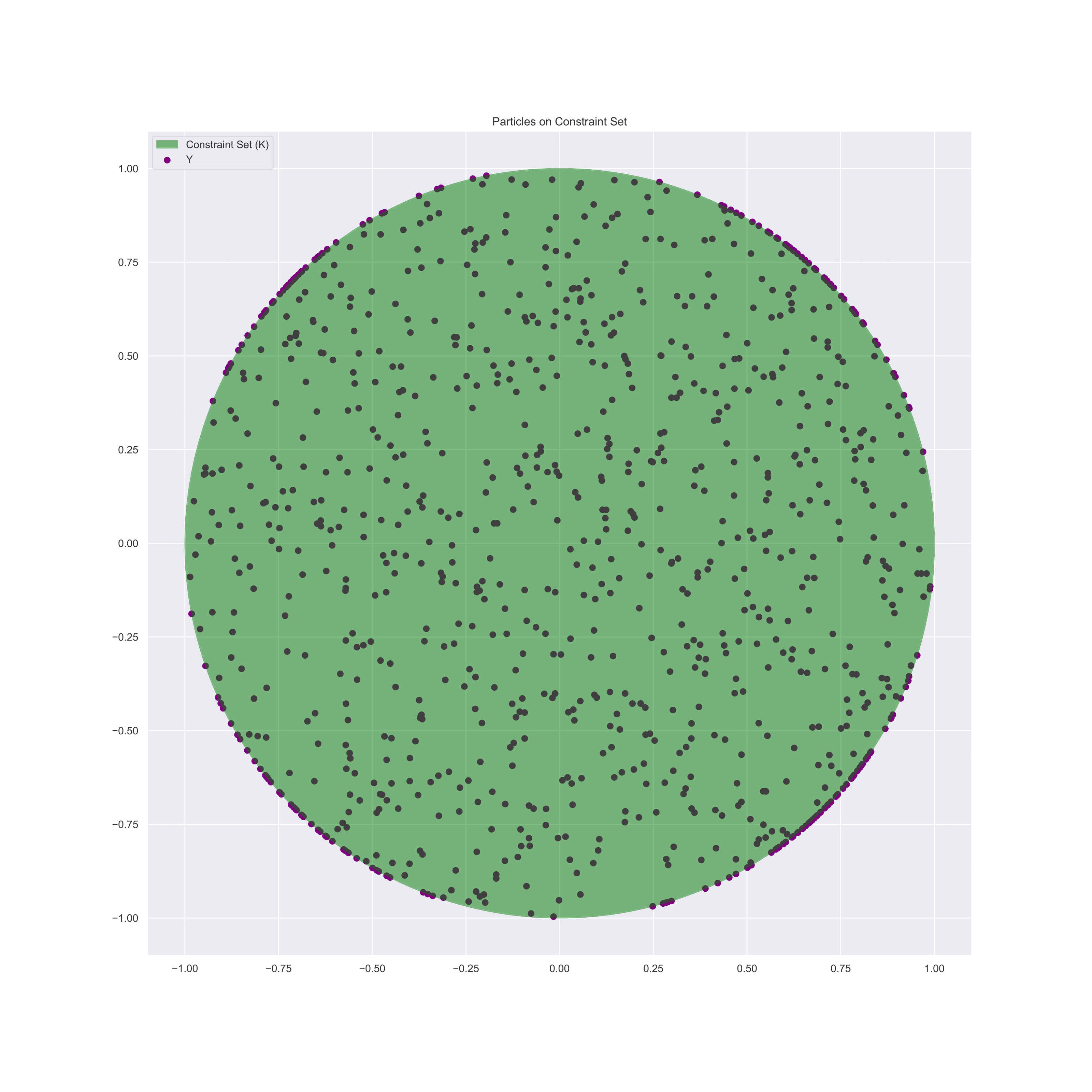}}
  \captionof{figure}{Particles $(Y)$ on the constraint set $(K)$.}
  \label{fig_Geometry_and_Particels_Circle}
\end{minipage}
}

At times, when $K$'s geometric is sufficiently simple we the transformer can outperform the probabilistic transformer.  This is not surprising, since  Corollary~\ref{cor_convex_case} guaranteed that the transformer universal in this setting an exactly implements the {\color{darkgreen}{$K$}}'s geometry.  Nevertheless, in both instances, the MLP cannot compete when encoding the geometric prior defined by the constraint set {\color{darkgreen}{$K$}}.

{\centering
\begin{minipage}{0.45\textwidth}
  \centering
  \centerline{\includegraphics[scale=0.11]{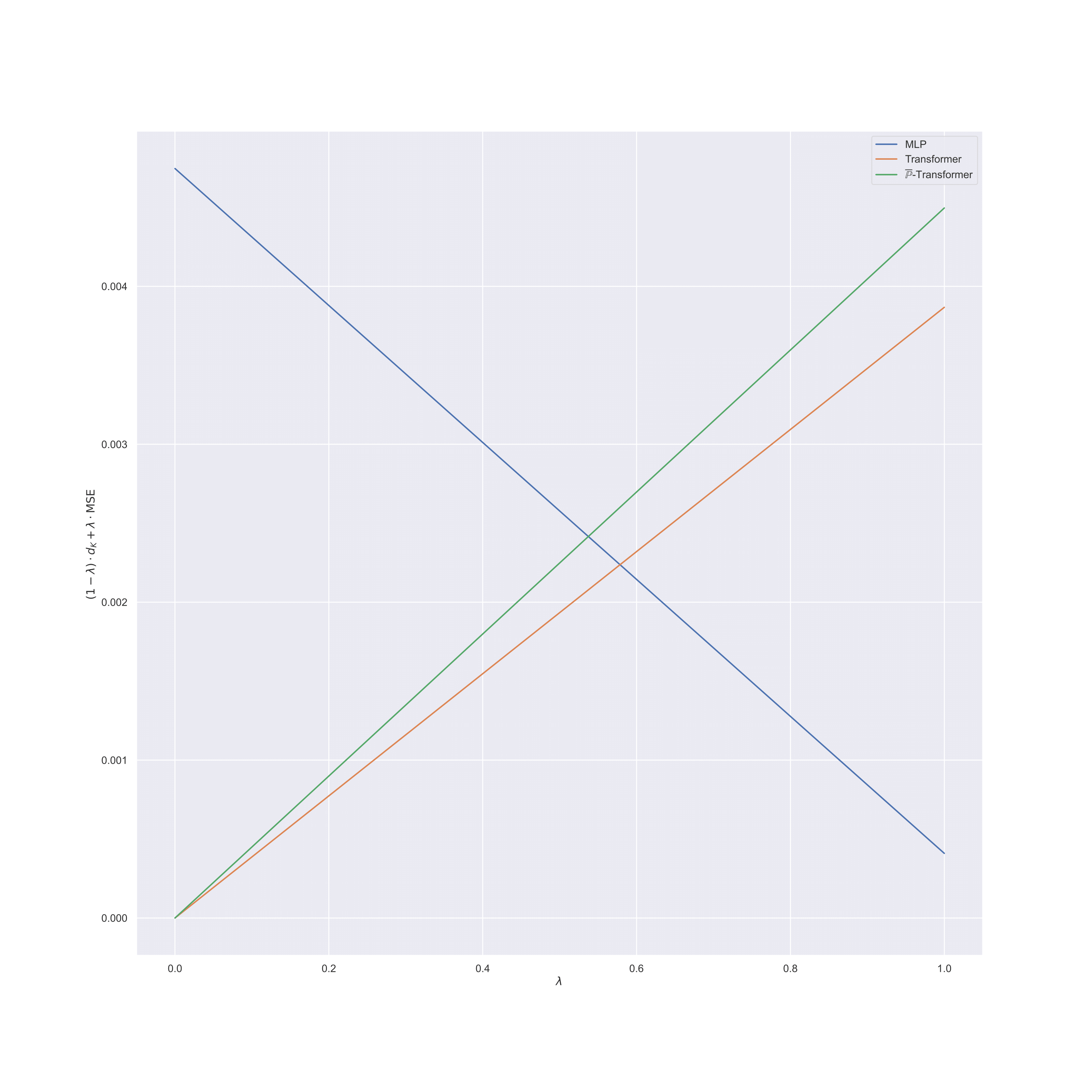}}
  \captionof{figure}{Performance: MSE vs. $d_K$.}
  \label{fig_Frontier_Circle}
\end{minipage}
\begin{minipage}{0.55\textwidth}
  \centering
  \begin{adjustbox}{width=\columnwidth,center}
  \begin{tabular}{lrrr}
    \toprule
    {} &      $\frac{\mbox{MSE}}{\mbox{MSE MLP}}$ &      MSE &      $d_K$ \\
    \midrule
    {\color{darkcerulean}{MLP}}        & 1 & 4.09e-04 & 4.75e-03 \\
    {\color{BurntOrange}{Transformer}}     & 2.98 & 3.87e-03 & 0.00e+00 \\
    {\color{forestgreen}{$\overline{\pp\mbox{-Transformer}}$}} & 3.41 & 4.50e-03 & 6.11e-19 \\
    \bottomrule
    \end{tabular}
    \label{tab_Circle}
    \end{adjustbox}
  \captionof{table}{Performance Metrics}
\end{minipage}
}

\subsubsection{Fully Non-Convex Constraints}
\label{a_Training_the_Prob_Transformer__ss_Further_Ablation____sss_Non_convex_Constraints}
Let us study the milieu in which probabilistic transformer is the only universal approximator capable of constraint satisfaction (unlike the case where {\color{darkgreen}{$K$}} is convex and we showed that the transformer filled this role).  Specifically, we consider the case where $K$ does not admit a single chart (since it has non-trivial homotopy), nor is there a well-defined projection operator of some $\rr^m$ onto {\color{darkgreen}{$K$}}.

Analogously to the convex situation investigated in Section~\ref{a_Training_the_Prob_Transformer__ss_Further_Ablation____sss_Convex_Constraints}, we define
\[
f : \mathbb{R} \overset{\tilde{f}}{\rightarrow} \mathbb{R} \overset{\rho}{\rightarrow} {\color{darkgreen}{K}}
;
\]
where $\tilde{f}(x) \triangleq \sum_{i=0}^5 \beta_i x^i $ is a (random) quintic polynomial function with $\beta_i\sim N(0,1)$, and the constraint set's geometry is defined by {\color{darkgreen}{$K$}}; where $\rho:\rr\rightarrow \rr^2$.
In this experiment, we also allow the training data to be perturbed by multivariate Gaussian noise with variance $10^{-1}$.  

Similarly to Figures~\ref{fig_x_fx_Square} and~\ref{fig_x_fx_Circle}, in Figures~\ref{fig_x_fx_NC_Rose} and~\ref{fig_x_fx_NC_Variety}, we use a color coded visualization method to understand $f$.  Sample points from $[-10,10]$ and label them with a color gradient ranging from {\color{pink}{pink}} to {\color{black}{blue}} such that {\color{pink}{pinkish}} points are close to $-10$ and {\color{black}{blueish}} points are a nearer to $10$.  The image ($f(x)$ of each input ($x$) on {\color{darkgreen}{$K$}} is illustrated using the same colour as $x$ did.  This coloring helps us visualize the winding of $f$ around $K$.

Nevertheless, as in the convex case, we can generate {\color{purple}{particles}} on {\color{darkgreen}{$K$}} by first sampling from $[-10,10]$ and then mapping them onto $K$ using $\rho$.  Thus, even if no chart or projection operator is available, we can easily build probabilistic attention mechanisms.

{\centering
\begin{minipage}{0.5\textwidth}
  \centering
  \centerline{\includegraphics[scale=0.15]{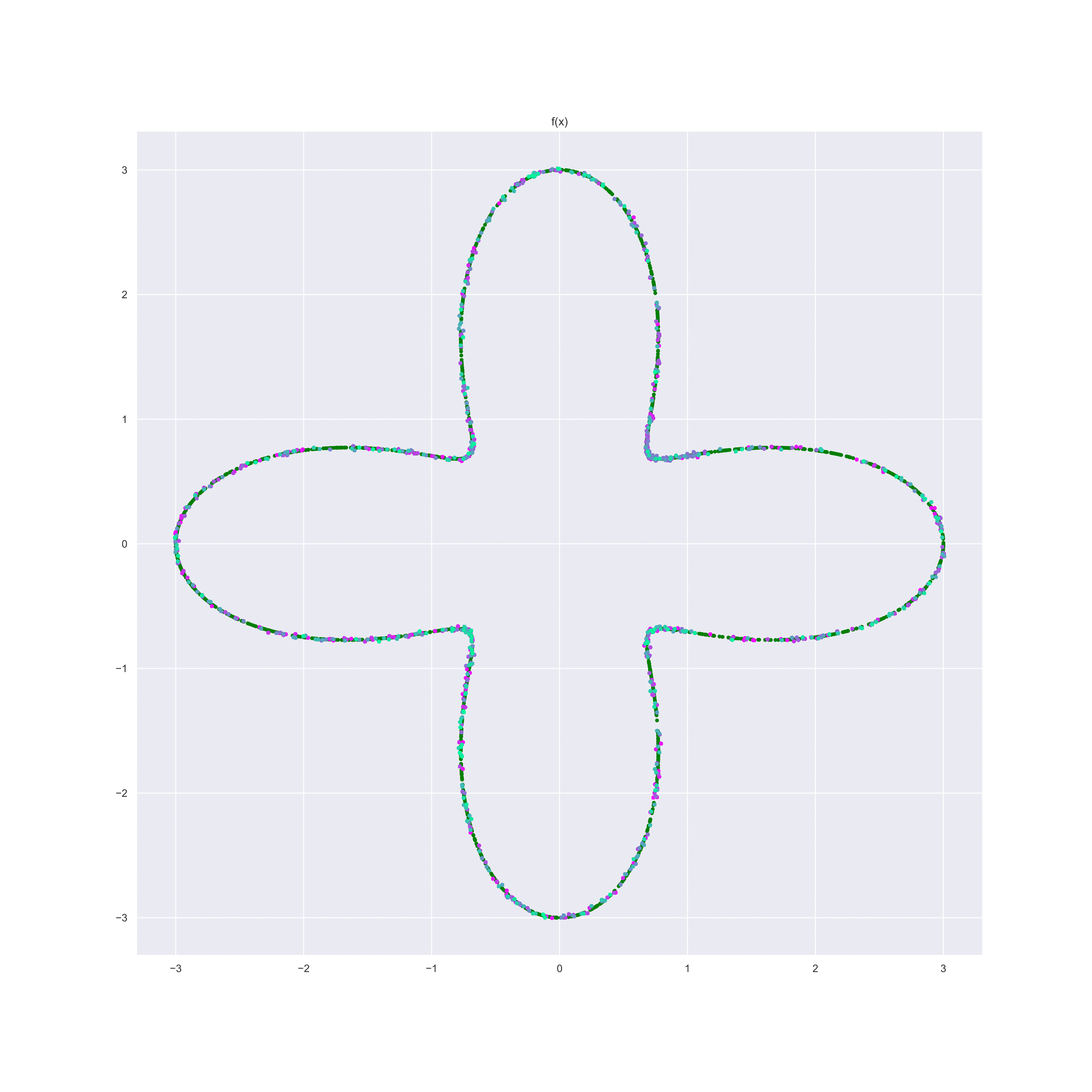}}
  \captionof{figure}{$x\mapsto f(x)$.}
  \label{fig_x_fx_NC_Rose}
\end{minipage}
\begin{minipage}{0.5\textwidth}
  \centering
  \centerline{\includegraphics[scale=0.15]{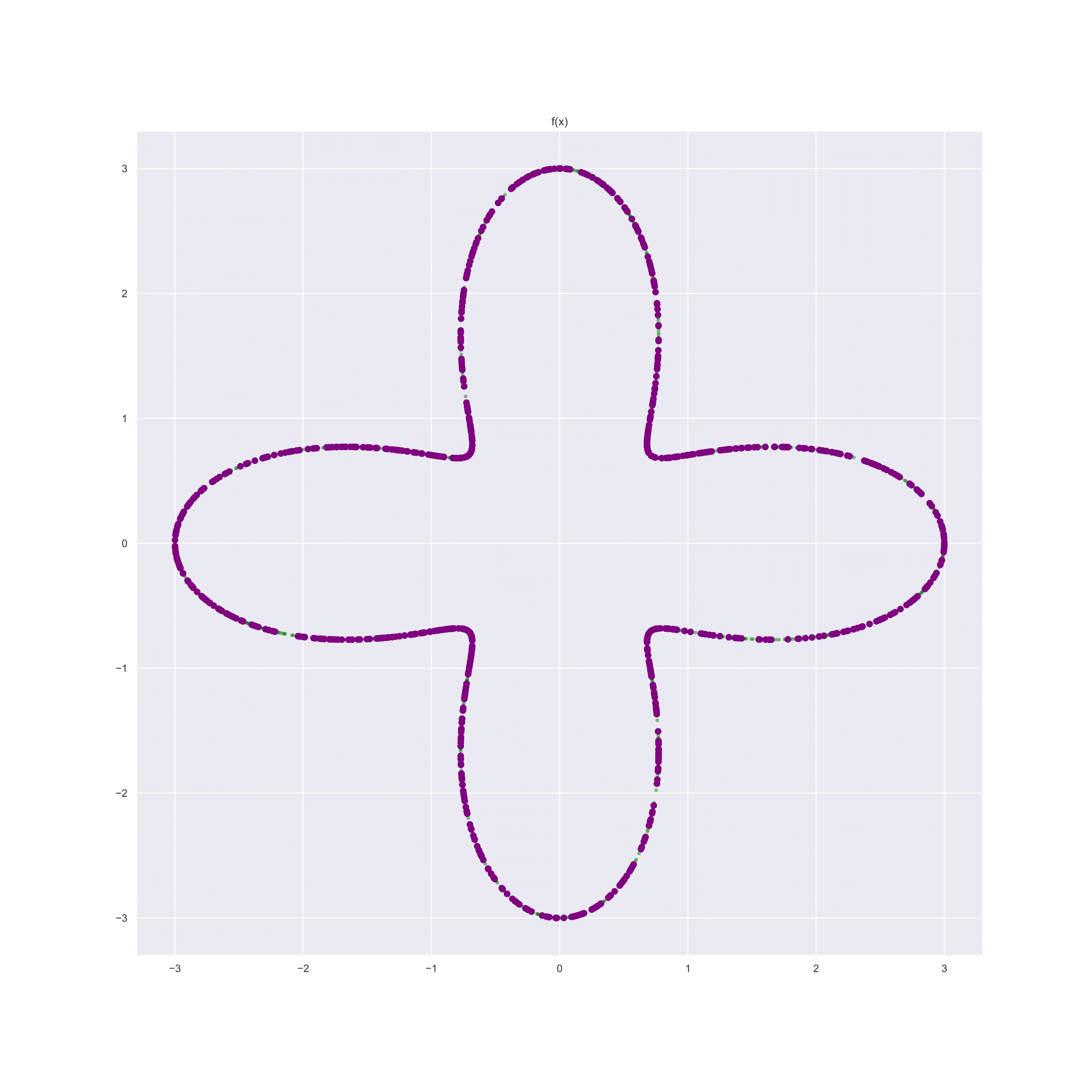}}
  \captionof{figure}{Particles $(Y)$ on Constraint Set $(K)$.}
  \label{fig_Geometry_and_Particels_NC_Rose}
\end{minipage}
}

In Figures~\ref{fig_x_fx_NC_Rose} and~\ref{fig_Geometry_and_Particels_NC_Rose}, the map defining {\color{darkgreen}{$K$}}'s geometry is $\rho(y)\triangleq   (2\cos(y)^2+1)\cdot (\cos(y/3),\sin(y/3))$.  Figure~\ref{fig_Frontier_NC_Rose} and Table~\ref{tab_NC_Rose} show that our probabilistic transformer network's performance is ``robust to changes of geometric priors'', in the sense that the relative performance of our models is entirely analogous to the above experiments where {\color{darkgreen}{$K$}} was convex or it was a geodesically convex patch on a Riemannian manifold.

{\centering
\begin{minipage}{0.45\textwidth}
  \centering
  \centerline{\includegraphics[scale=0.11]{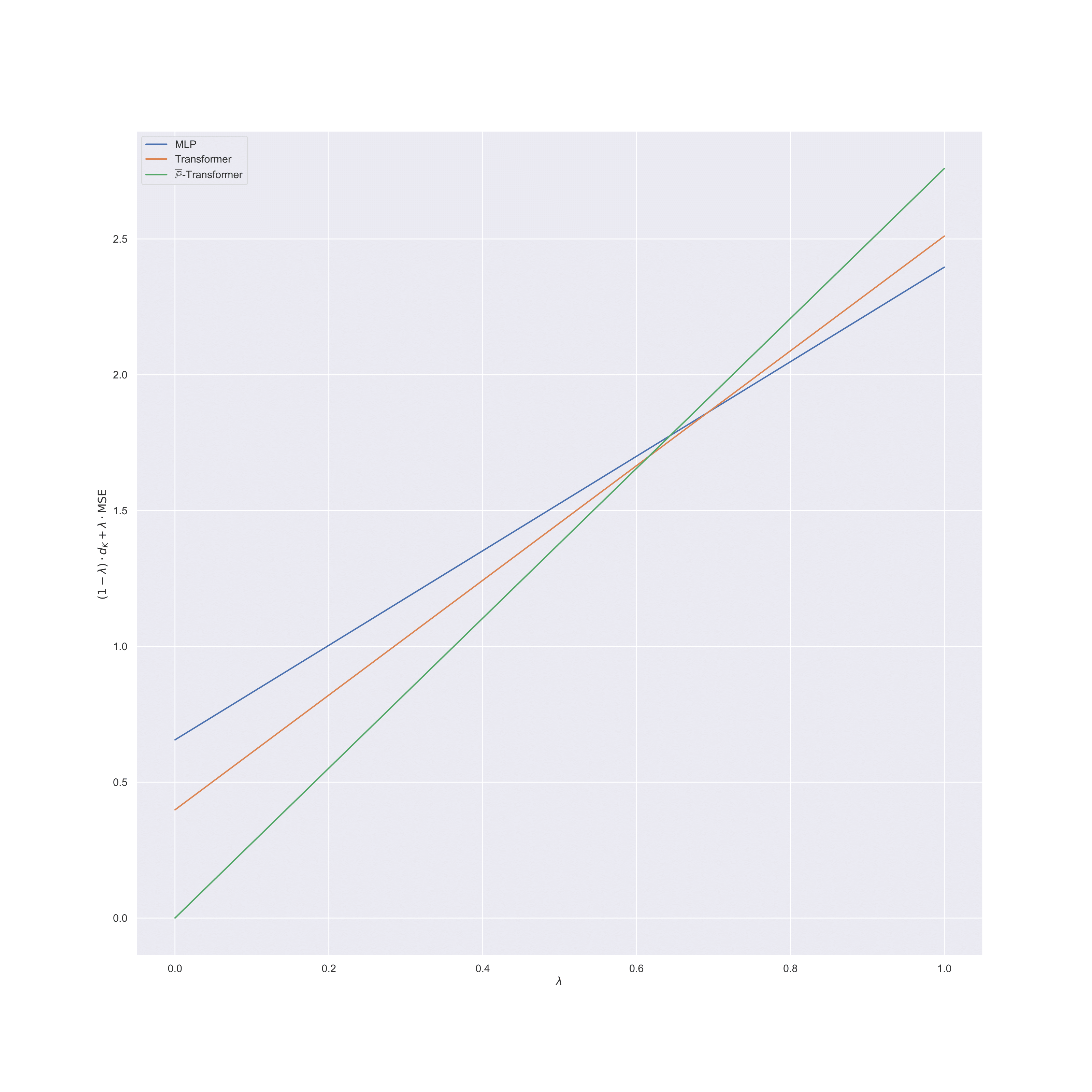}}
  \captionof{figure}{Performance: MSE vs. $d_K$.}
  \label{fig_Frontier_NC_Rose}
\end{minipage}
\begin{minipage}{0.55\textwidth}
  \centering
  \begin{adjustbox}{width=\columnwidth,center}
  \begin{tabular}{lrrr}
    \toprule
    {} &      $\frac{\mbox{MSE}}{\mbox{MSE MLP}}$ &      MSE &      $d_K$ \\
    \midrule
    {\color{darkcerulean}{MLP}}        & 1 & 2.40e+00 & 6.56e-01 \\
    {\color{BurntOrange}{Trans.}}     & 1.01 & 2.51e+00 & 3.98e-01 \\
    {\color{forestgreen}{$\pp\mbox{-Trans.}$}} & 1.05 & 2.76e+00 & 0.00e+00 \\
    \bottomrule
    \end{tabular}
    \label{tab_NC_Rose}
    \end{adjustbox}
  \captionof{table}{Performance Metrics}
\end{minipage}
}

We complete our discussion by considering an instance where {\color{darkgreen}{$K$}}'s geometry is both non-convex and it is not a differentiable manifold (due to the self-intersecting point).  This last toy example is illustrated in Figures~\ref{fig_x_fx_NC_Variety} and~\ref{fig_Geometry_and_Particels_NC_Variety} in which case {\color{darkgreen}{$K$}}'s geometry is the image of the map $\rho(y)\triangleq \Phi(\operatorname{sinc}(y+1)(\cos(y/2),\sin(y/2)))$ where $\Phi$ is a randomly generated invertible feedforward network with invertible square weight matrices and $\tanh$ activation function (i.e.: a random homeomorphism on $\rr^2$).  

{\centering
\begin{minipage}{0.5\textwidth}
  \centering
  \centerline{\includegraphics[scale=0.15]{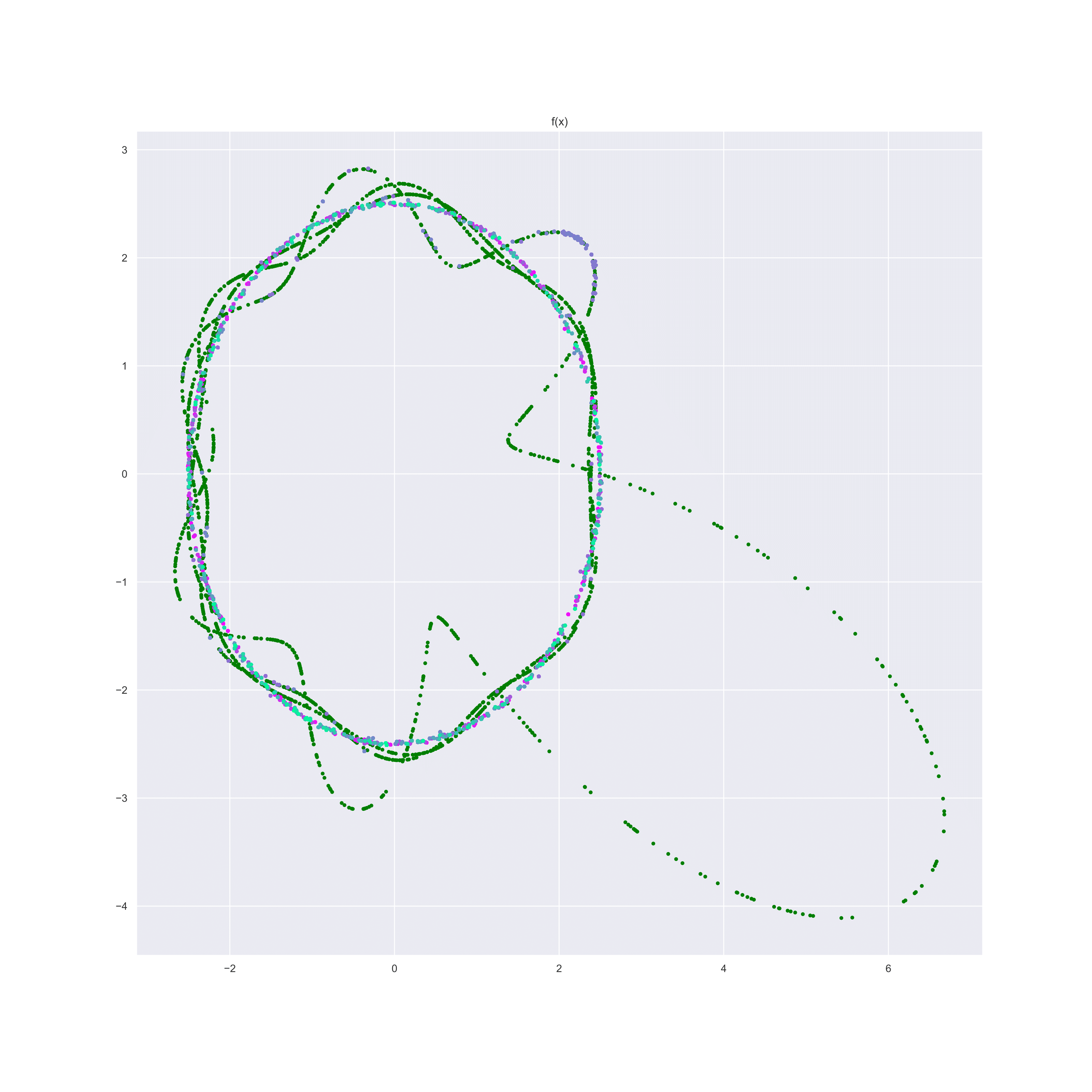}}
  \captionof{figure}{$x\mapsto f(x)$.}
  \label{fig_x_fx_NC_Variety}
\end{minipage}
\begin{minipage}{0.5\textwidth}
  \centering
  \centerline{\includegraphics[scale=0.15]{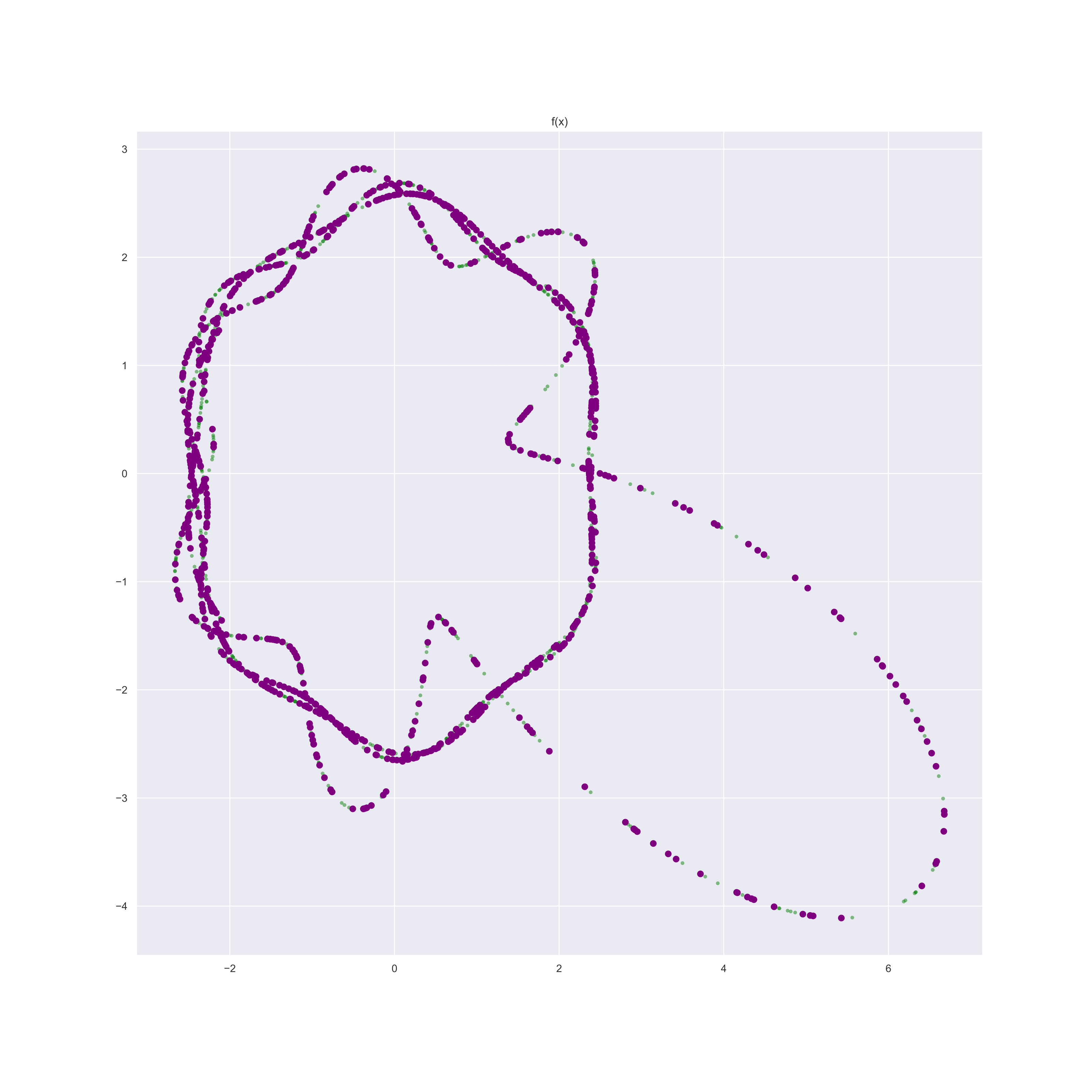}}
  \captionof{figure}{Particles $(Y)$ on Constraint Set $(K)$.}
  \label{fig_Geometry_and_Particels_NC_Variety}
\end{minipage}
}

We conclude our study examining the impact of {\color{darkgreen}{$K$}}'s geometry on the probabilistic transformer's performance by noting that the probabilistic transformer's relative performance is analogous to its performance in the previous experiments. Figure~\ref{fig_Frontier_NC_Variety} and Table~\ref{tab_NC_Variety} reaffirm that the probabilistic transformer outperforms the MLP and the transformer network when the mixed objective of optimizing the MSE and the distance to the {\color{darkgreen}{constraint set}}.

{\centering
\begin{minipage}{0.45\textwidth}
  \centering
  \centerline{\includegraphics[scale=0.11]{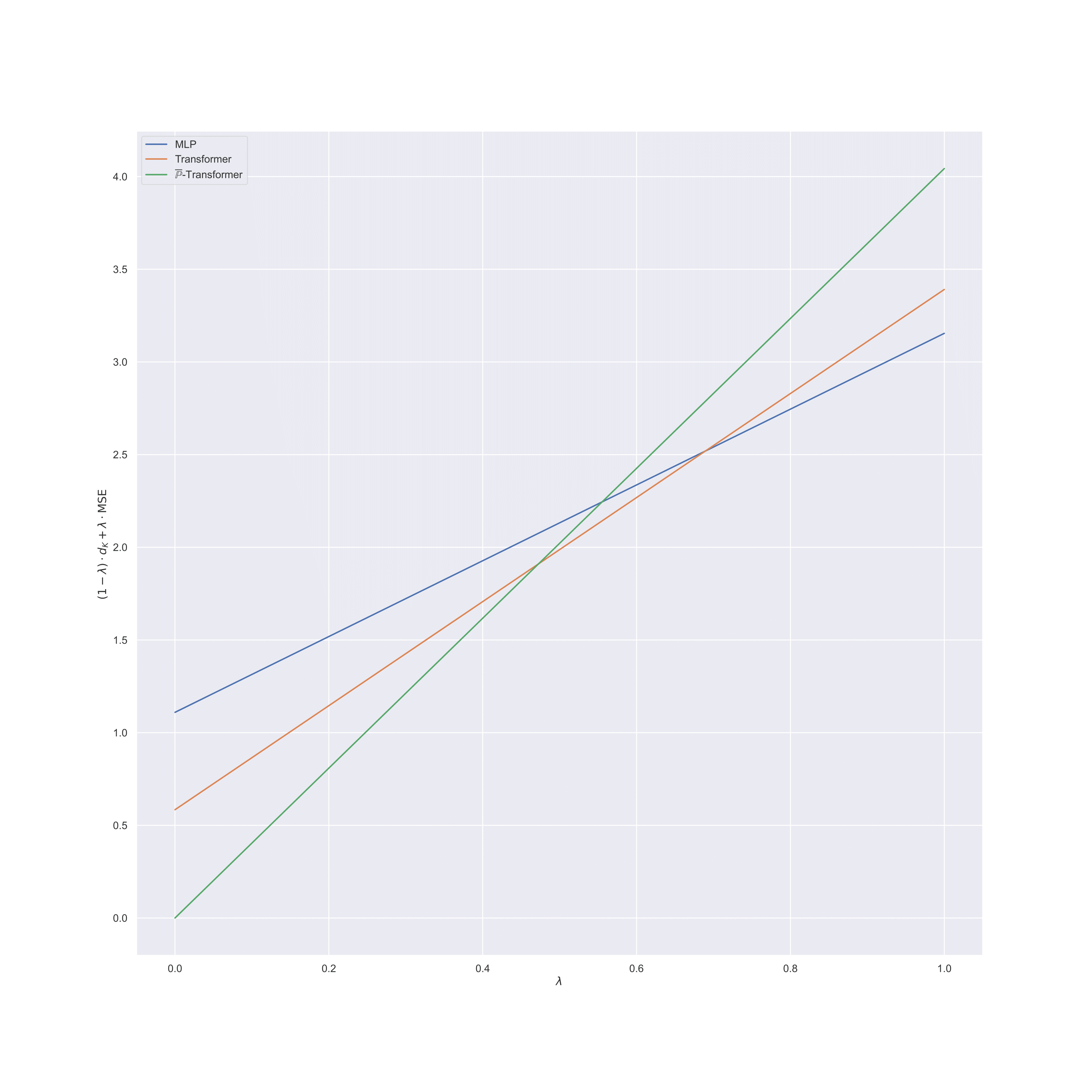}}
  \captionof{figure}{Performance: MSE vs. $d_K$.}
  \label{fig_Frontier_NC_Variety}
\end{minipage}
\begin{minipage}{0.55\textwidth}
  \centering
  \begin{adjustbox}{width=\columnwidth,center}
  \begin{tabular}{lrrr}
    \toprule
    {} &      $\frac{\mbox{MSE}}{\mbox{MSE MLP}}$ &      MSE &      $d_K$ \\
    \midrule
    {\color{darkcerulean}{MLP}}        & 1 & 3.15e+00 & 1.11e+00 \\
    {\color{BurntOrange}{Trans.}}     & 1.01 & 3.39e+00 & 5.84e-01 \\
    {\color{forestgreen}{$\pp\mbox{-Trans.}$}} & 1.09 & 4.04e+00 & 0.00e+00 \\
    \bottomrule
    \end{tabular}
    \label{tab_NC_Variety}
    \end{adjustbox}
  \captionof{table}{Performance Metrics}
\end{minipage}
}

This appendix showed that the probabilistic transformer is implementable, that it can indeed approximate functions while exactly encoding {\color{darkgreen}{constraints}}, and that its performance doesn't degrade for more complicated geometries.  In conclusion, probabilistic transformers can generically and canonically encode geometric priors without sacrificing the expressibility of more familiar deep learning models.

}} 

\section{Proofs}\label{s_Appendix_Proofs}
In what follows, we denote the set of couplings of two probability measures $\mu,\nu\in \mathcal{P}_1(\rr^n)$ on $\rr^n$ by $\operatorname{Cpl}(\mu,\nu)$.  I.e. these are Borel product measures $\pi$ on $\rr^n\times \rr^n$ with respective marginals $\mu$ and $\nu$.  We begin by deriving some useful lemmata.
\subsection{Lemmata}\label{a_ss_lemmata}
This section records lemmata that will be frequently be used throughout this paper's proofs.  The lemmata's proofs are deferred until Section~\ref{a_sss_proofs_of_lemmata} of this Appendix.  
\begin{notee}
\label{remark_GDL_Skip_2}
{\textbf{For the reader interested in convex constraints:}}
We recognize that the results where $K$ is convex follow the more general results where $K$ is a geodesically convex subset of some embedded submanifold of $\rr^m$.  Nevertheless, so as to provide a self-contained reading to those focused on classical transformers or on convex constraints, independent proofs for both of these two cases.
\end{notee}
\subsubsection{{Lemmata in the case where K is convex}}
\label{a_ss_lemmata_sss_convex}
The results are especially useful for results pertaining to convex constraint sets.  
\begin{lemma}[Collapsing Measure-Valued Estimates for Convex Constraint Sets]\label{lem_convex_barycenter_map}
Let $K\subseteq \rr^m$ be non-empty, compact, and convex. Let $F\in C(\rr^n,\mathcal{P}_1(K))$ and $f\in C(\rr^n,K)$.  For every $x \in \rr^n$, the following hold:
\begin{enumerate}
    \item[(i)]\textbf{Convex Constraints Hold:}
    $
    \mathbb{E}_{Y\sim F(x)}[Y]\in K
    ,
    $
    \item[(ii)]\textbf{Non-Expansive Distance:} $
    \left\|
        f(x)
            -
        \mathbb{E}_{Y\sim F(x)}[Y]
    \right\|
        \leq 
    \www_1\left(\delta_{f(x)},F(x)\right)
    .
    $
\end{enumerate}
Moreover, let $\epsilon>0$ and some non-empty compact $C\subset \rr^n$ be non-empty and compact.  If $\max_{x\in C}\,\www_1(\delta_{f(x)},F(x))\leq \epsilon$ then, in addition we have that:
\begin{equation}
    \underset{x\in C}{\max}\,
    \left\|
        f(x)
            -
        \mathbb{E}_{Y\sim F(x)}[Y]
    \right\|
        \leq
    \epsilon
\label{eq_lem_convex_barycenter_map_estimate_derivation}
.
\end{equation}
\end{lemma}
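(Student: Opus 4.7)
The plan is to prove (i) by a standard separating-hyperplane argument for the barycenter of a probability measure supported on a convex set, then obtain (ii) via Jensen's inequality together with the exceptional closed form for Wasserstein distance to a Dirac mass (the same identity the authors rely on in Remark~\ref{remark_derivation} and prove in Lemma~\ref{lem_closedform_wasserstein_dirac}). The final estimate will be an immediate consequence of (ii).

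For (i), since $K$ is compact, $F(x)\in\ppp{K}$ has bounded support and the expectation $\mathbb{E}_{Y\sim F(x)}[Y]=\int_K y\,dF(x)(y)$ is a well-defined element of $\rr^m$. The key observation is that $K$, being closed and convex, equals the intersection of all closed half-spaces containing it. So fix any closed half-space $H=\{z\in\rr^m:\langle v,z\rangle\le c\}\supseteq K$. Then $\langle v,Y\rangle\le c$ holds $F(x)$-almost surely, and taking expectations (using linearity and monotonicity of the integral) gives $\langle v,\mathbb{E}_{Y\sim F(x)}[Y]\rangle\le c$, i.e.\ $\mathbb{E}_{Y\sim F(x)}[Y]\in H$. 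Intersecting over all such $H$ yields $\mathbb{E}_{Y\sim F(x)}[Y]\in K$, proving (i).

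For (ii), the main point is that $\operatorname{Cpl}(\delta_{f(x)},F(x))$ is a singleton: any coupling $\pi$ must have first marginal $\delta_{f(x)}$, so $\pi$ is concentrated on $\{f(x)\}\times K$ and therefore equals the product measure $\delta_{f(x)}\otimes F(x)$. Consequently
\[
\www_1\left(\delta_{f(x)},F(x)\right)
=
\int_K \|f(x)-y\|\,dF(x)(y)
=
\mathbb{E}_{Y\sim F(x)}[\|f(x)-Y\|].
\]
Now the function $z\mapsto \|f(x)-z\|$ is convex on $\rr^m$, so Jensen's inequality (applied coordinatewise, or via duality with linear functionals) yields
\[
\left\|f(x)-\mathbb{E}_{Y\sim F(x)}[Y]\right\|
=
\left\|\mathbb{E}_{Y\sim F(x)}[f(x)-Y]\right\|
\le
\mathbb{E}_{Y\sim F(x)}[\|f(x)-Y\|]
=
\www_1\!\left(\delta_{f(x)},F(x)\right),
\]
establishing (ii).

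Finally, the uniform estimate~\eqref{eq_lem_convex_barycenter_map_estimate_derivation} follows by taking the supremum of (ii) over $x\in C$: if $\max_{x\in C}\,\www_1(\delta_{f(x)},F(x))\le\epsilon$, then pointwise $\|f(x)-\mathbb{E}_{Y\sim F(x)}[Y]\|\le\epsilon$ on $C$, so the maximum is bounded by $\epsilon$. The only mild care required is to note that continuity of $F$ in $\www_1$ and of $f$ ensures the map $x\mapsto\mathbb{E}_{Y\sim F(x)}[Y]$ is continuous (so the max on compact $C$ is attained), but this plays no role in the inequality itself. The main obstacle, if any, is purely notational/measure-theoretic bookkeeping around the fact that $F(x)$ need not be finitely supported; no deep ingredient is needed beyond Jensen and the trivial coupling identity.
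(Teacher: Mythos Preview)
Your proof is correct and complete. It differs from the paper's in both parts, though the differences are of flavor rather than depth.

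For (i), the paper invokes the general Jensen inequality for convex sets from \cite{DudleyRealAnalysisandPRobabilityBook1989Revised2002}, whereas you give the underlying separating-hyperplane argument directly; these are essentially the same fact, with your version being self-contained. For (ii), the paper takes a more abstract route: it cites \cite{BruHeinicheLootgieter1993} for the existence of a $1$-Lipschitz barycenter map $\beta_{\rr^n}:\mathcal{P}_1(\rr^n)\to\rr^n$ sending $\delta_y\mapsto y$, and then reads off $\|f(x)-\mathbb{E}_{Y\sim F(x)}[Y]\|=\|\beta_{\rr^n}(\delta_{f(x)})-\beta_{\rr^n}(F(x))\|\le\www_1(\delta_{f(x)},F(x))$ from the Lipschitz bound. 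Your approach instead combines the explicit identity $\www_1(\delta_{f(x)},F(x))=\mathbb{E}_{Y\sim F(x)}[\|f(x)-Y\|]$ (the paper's Lemma~\ref{lem_closedform_wasserstein_dirac}) with Jensen applied to the convex norm. Your argument is more elementary and avoids the external reference; the paper's version has the advantage of phrasing the result as a general $1$-Lipschitz property of the barycenter map, which is conceptually reusable (and indeed is reused in the Riemannian analogue via Lemma~\ref{lem_cbm}).
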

The next lemma, though immediate, is still helpful to write down explicitly as it clearly relates $\operatorname{P-attention}$ to $\operatorname{Attention}$.  
For any $N\in \nn_+$, we denote the standard $N$-simplex by $\Delta_N$; i.e.:
\[
\Delta_N\triangleq \left\{w\in [0,1]^N:\, \sum_{n=1}^N w_n =1\right\}
.
\]
\begin{lemma}[An Identity: P-attention as implicit Attention]\label{lem_attention_to_Attention}
Let $\{y_{n,q}\}_{n=1,\dots,N,q=1,\dots,Q}\subseteq K\subset \rr^m$, let $Y$ be an $N\times 1\times m$-array with $Y_{n}=Q^{-1}\sum_{q=1}^Q y_{n,q}$, and let $f(x)\in C(\rr^n,\Delta_N)$. Then:
\[
\operatorname{Attention}\left(
f(x)|Y
\right)
 =
\mathbb{E}_{X\sim \operatorname{P-attention}(F(x),
Y)}\left[
X
\right]
.
\]
\end{lemma}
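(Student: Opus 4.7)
The proof is essentially a direct unfolding of definitions followed by linearity of expectation on a finitely supported measure, so my plan is to just carry out that unfolding carefully and check the book-keeping between the $N\times Q\times m$ tensor indexing and the $N\times m$ matrix indexing used in~\eqref{eq_definition_attention}.

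First, I would expand $\operatorname{P-attention}(f(x),Y)$ using the definition of probabilistic attention stated just before Theorem~\ref{theorem_DeepBMT}. Since here $Y$ is an $N\times 1\times m$-array, the inner sum over $q$ collapses and the definition yields
\[
\operatorname{P-attention}(f(x),Y) \;=\; \sum_{n=1}^N [\operatorname{Softmax}_N(f(x))]_n \, \delta_{Y_n},
\]
where $Y_n = Q^{-1}\sum_{q=1}^Q y_{n,q}$ is the specified row vector. The expression on the right is a Borel probability measure on $\rr^m$ with finite support, hence trivially lies in $\ppp{\rr^m}$.

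Next, I would compute the expectation of a random vector $X$ distributed according to this finite mixture. Because only finitely many atoms are involved, linearity of expectation applies without any integrability concern, and $\mathbb{E}_{X\sim\delta_{Y_n}}[X]=Y_n$ gives
\[
\mathbb{E}_{X\sim \operatorname{P-attention}(f(x),Y)}[X] \;=\; \sum_{n=1}^N [\operatorname{Softmax}_N(f(x))]_n \, Y_n.
\]
The final step is to recognize the right-hand side as $\operatorname{Attention}(f(x),Y)$ via the defining formula~\eqref{eq_definition_attention}, where $Y$ is now viewed as the $N\times m$ matrix whose $n$-th row is $Y_n$; this identification is exactly the one the paper already adopts right after equation~\eqref{eq_probabilistic_attention}.

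There is no real obstacle here: the only subtlety is notational, namely confirming that the tensor-to-matrix identification $(Y_{n,1,\cdot})_n \leftrightarrow (Y_n)_n$ is consistent with how Attention reads its second argument, and that the averaging $Y_n = Q^{-1}\sum_q y_{n,q}$ plays no role beyond defining the atoms $Y_n$. Once this is noted the chain of equalities above is the full argument.
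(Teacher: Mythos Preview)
Your proposal is correct and is essentially the paper's own proof written out in full: the paper simply states that the identity ``follows directly from the linearity of integration and the fact that integration against a pointmass is just point-evaluation,'' which is exactly the computation you carry out. Your additional care about the tensor-to-matrix identification is reasonable book-keeping but not a departure in approach.
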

\subsubsection{{Lemmata in the case where K is a closed geodesic ball}}
\label{a_ss_lemmata_sss_geodesicallyconvex}
We now consider the analogue of Lemma~\ref{lem_convex_barycenter_map}, in the case where $K$ is geodesically convex of controlled radius\footnote{
We use the terminology controlled in direct analogy with \citep[Theorem 10]{kratsios2021_GDL}.
}.  A $K$ subset of $(M,g)$ is called \textit{geodesically convex} if for every two points $y_1,y_2\in K$ there is a unique geodesic (Riemannian distance minimizing curve) joining $y_1$ to $y_2$.  
\begin{lemma}\label{lem_cbm}
Let $(M,g)$ be a connected Riemannian manifold with sectional curvatures uniformly bounded-above by $C\geq 0$ and which is complete as a metric space under $d_g$.  Fix $y_0\in M$, 
\begin{equation}
0<\rho<2^{-1}\min\left\{
    \operatorname{inj}_{g}(y_0)
            ,
    \frac{\pi}{\sqrt{C}}
\right\}
\label{eq_lem_cbm_geodesic_regularity_condition_Statement}
\end{equation}
(where, following \cite{Afsari_RiemannianCentersOfMassAMS2011}, $\frac1{\sqrt{C}}\triangleq \infty$ whenever $C\leq 0$), and let $K$ be a non-empty, compact, and geodesically convex subset of $\overline{B(y_0,\rho)}$.  Then, the ``Fr\'{e}chet mean'' function:
\begin{equation}
\begin{aligned}
\mathcal{P}_{1}\left(K\right) & \rightarrow \overline{
B(y_0,\rho)
}\\
\mathbb{P} & \mapsto \operatorname{argmin}_{y \in K}\, 
\mathbb{E}_{Y\sim \mathbb{P}}\left[
    d_g^{2}\left(
        y   
    ,
        Y
    \right)
\right],
\end{aligned}
\label{eq_lem_cbm_definition_of_intrinsic_mean_map}
\end{equation}
is a well-defined (i.e.: single-valued) and non-expansive (i.e.: $1$-Lipschitz) function.  Furthermore, if $\mathbb{P}$ is finitely-supported then:
\begin{equation}
    \overline{\mathbb{P}} \in K
    \label{eq_lem_cbm_inclusion}
    .
\end{equation}
\end{lemma}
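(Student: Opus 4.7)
The plan has three parts, one for each claim. Throughout, I will write $F_\mathbb{P}(y)\triangleq \mathbb{E}_{Y\sim \mathbb{P}}[d_g^2(y,Y)]$ for the Fr\'{e}chet functional.

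First, for well-definedness of the map~\eqref{eq_lem_cbm_definition_of_intrinsic_mean_map}, the radius bound~\eqref{eq_lem_cbm_geodesic_regularity_condition_Statement} is precisely the ``regular geodesic ball'' condition of \cite{Afsari_RiemannianCentersOfMassAMS2011}. Under this condition, the squared-distance function $d_g^2(\cdot,u)$ is uniformly strictly geodesically convex on $\overline{B(y_0,\rho)}$ for every $u\in\overline{B(y_0,\rho)}$, with a quantitative modulus of convexity independent of $u$. By Fubini's theorem, $F_\mathbb{P}$ inherits this uniform strict geodesic convexity on $\overline{B(y_0,\rho)}$, hence on the compact, geodesically convex subset $K$. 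This yields a unique minimizer of $F_\mathbb{P}|_K$, and therefore a well-defined (single-valued) map on $\mathcal{P}_1(K)$.

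Second, for non-expansiveness, I plan to combine the Sturm-type variance inequality (which follows from the uniform strict convexity above) with a coupling argument. The variance inequality gives, for every $y\in\overline{B(y_0,\rho)}$ and every $\mathbb{P}\in\mathcal{P}_1(K)$,
\[
d_g^2(y,\overline{\mathbb{P}})\leq F_\mathbb{P}(y)-F_\mathbb{P}(\overline{\mathbb{P}}).
\]
Applying this inequality with $y=\overline{\mathbb{Q}}$, adding the symmetric inequality obtained by interchanging $(\mathbb{P},\mathbb{Q})$, and then estimating the difference $d_g^2(\overline{\mathbb{Q}},Y)-d_g^2(\overline{\mathbb{P}},Y)$ pointwise along an optimal $\www_1$-coupling $\pi\in\operatorname{Cpl}(\mathbb{P},\mathbb{Q})$ via the identity $a^2-b^2=(a-b)(a+b)$ together with the triangle inequality on $(M,d_g)$, I expect to deduce $d_g(\overline{\mathbb{P}},\overline{\mathbb{Q}})\leq \mathbb{E}_\pi[d_g(X,Y)]=\www_1(\mathbb{P},\mathbb{Q})$. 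This step will be the main technical obstacle: obtaining the sharp Lipschitz constant $1$ (as opposed to some larger constant depending on $\rho$ and $C$) requires essential use of the bound $\rho<\pi/(2\sqrt{C})$ in~\eqref{eq_lem_cbm_geodesic_regularity_condition_Statement} and of the Toponogov-type comparison theorems that underlie Afsari's proof; for this I will follow the quantitative estimates already worked out in \cite{Afsari_RiemannianCentersOfMassAMS2011} rather than reproving them.

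Finally, the inclusion~\eqref{eq_lem_cbm_inclusion} is immediate once the previous steps are established: the argmin in~\eqref{eq_lem_cbm_definition_of_intrinsic_mean_map} is by construction a point of $K$, and the first paragraph guarantees this point is unique, so $\overline{\mathbb{P}}\in K$ for every $\mathbb{P}\in\mathcal{P}_1(K)$. The finite-support hypothesis plays no role in the inclusion itself and appears in the lemma only because that is the regime used in the subsequent applications of the result.
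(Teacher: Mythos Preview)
Your plan diverges from the paper's proof in two substantive ways, and the second step contains a genuine gap.

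\textbf{On the $1$-Lipschitz step.} The variance-inequality route you sketch does not deliver the sharp constant. Even granting the inequality $d_g^2(y,\overline{\mathbb{P}})\le F_{\mathbb{P}}(y)-F_{\mathbb{P}}(\overline{\mathbb{P}})$ (which in positive curvature only holds with a curvature-dependent factor in front of $d_g^2$, not with constant $1$), carrying out your ``add the symmetric inequality, write $a^2-b^2=(a-b)(a+b)$, use the triangle inequality'' computation yields
\[
2\,d_g^2(\overline{\mathbb{P}},\overline{\mathbb{Q}})\;\le\; C\cdot \operatorname{diam}(K)\cdot \www_1(\mathbb{P},\mathbb{Q}),
\]
i.e.\ H\"older-$\tfrac12$ continuity, not $1$-Lipschitz. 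The paper does not use a variance inequality at all: it invokes the injectivity-radius bound to apply \citep[Theorem~6.9.2]{jost2008riemannian}, which gives convexity of $t\mapsto d_g^2(\gamma_{[y_1,y_2]}(t),\gamma_{[y_1,y_3]}(t))$ for geodesics in the ball, and then runs Sturm's barycenter-contraction argument \citep[Theorem~6.3]{SturmOGPaper2003ProbabilityonNPCSpaces} verbatim against an arbitrary coupling $\pi$, followed by an infimum over $\pi$. That joint-geodesic convexity is the correct replacement for the NPC hypothesis here; Afsari's estimates alone do not supply the contraction.

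\textbf{On the inclusion $\overline{\mathbb{P}}\in K$.} You read the $\operatorname{argmin}$ as taken over $K$ and conclude the inclusion is tautological; but the codomain in~\eqref{eq_lem_cbm_definition_of_intrinsic_mean_map} is $\overline{B(y_0,\rho)}$, and the paper's proof treats $\overline{\mathbb{P}}$ as the (global) Fr\'echet mean on the ball, established via \citep[Theorem~2.1]{Afsari_RiemannianCentersOfMassAMS2011}. The inclusion in $K$ is then obtained from \citep[Theorem~3.4(i)]{Afsari_RiemannianCentersOfMassAMS2011}, which states that for \emph{finitely supported} $\mathbb{P}$ the Fr\'echet mean lies in the smallest closed geodesically convex set containing $\operatorname{supp}(\mathbb{P})$; since $K$ is such a set, $\overline{\mathbb{P}}\in K$. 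So the finite-support hypothesis is not vestigial in the paper's argument---it is exactly what makes Afsari's convex-hull result applicable.
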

\subsection{Exceptional Closed-Form for Wasserstein Distance}\label{a_Lemmata_ss_Auxiliary_lemmata}
The following result is folklore in the optimal transport community.  Since its statement is difficult to track down, we record the statement and derive its proof here, for a self-contained reading. 
\begin{lemma}[Closed-Form Expression for Wasserstein Distance to Pointmass]\label{lem_closedform_wasserstein_dirac}
Let $K\subseteq \rr^m$ be non-empty and compact, let $y$ be in $K$, and let $\pp\in \ppp{K}$.  Then:
\[
\www_1(\pp,\delta_y) = \ee_{Y\sim \pp}[\|Y-y\|]
.
\]
\end{lemma}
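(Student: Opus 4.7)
The plan is to observe that when one of the two marginals is a Dirac mass, the set of couplings collapses to a singleton, so the infimum in the definition of $\www_1$ is trivially attained and directly computable.

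First, I would unpack the definition of $\operatorname{Cpl}(\pp,\delta_y)$: any $\pi \in \operatorname{Cpl}(\pp,\delta_y)$ is a Borel probability measure on $K \times K$ whose second marginal equals $\delta_y$. Since $\delta_y$ assigns mass $1$ to $\{y\}$, the marginal constraint forces $\pi(K \times \{y\}) = 1$, so $\pi$ is supported on the ``horizontal slice'' $K \times \{y\}$. Using this support condition together with the first marginal being $\pp$, I would verify that for every Borel rectangle $A \times B$ one has $\pi(A \times B) = \pp(A)\mathbf{1}\{y \in B\} = (\pp \otimes \delta_y)(A \times B)$, and then invoke the standard $\pi$-$\lambda$ argument (or the uniqueness of product measures on a generating $\pi$-system) to conclude $\pi = \pp \otimes \delta_y$. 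Hence $\operatorname{Cpl}(\pp,\delta_y) = \{\pp \otimes \delta_y\}$.

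Second, with the unique coupling in hand, the infimum in $\www_1(\pp,\delta_y) = \inf_\pi \ee_{(X_1,X_2)\sim\pi}[\|X_1-X_2\|]$ reduces to a single expectation. Computing:
\[
\ee_{(X_1,X_2)\sim \pp\otimes \delta_y}[\|X_1 - X_2\|]
= \int_K \int_K \|x_1 - x_2\|\, \delta_y(dx_2)\, \pp(dx_1)
= \int_K \|x_1 - y\|\, \pp(dx_1)
= \ee_{Y\sim\pp}[\|Y - y\|],
\]
which finishes the identity. The integrand is bounded on the compact set $K$, so all integrals are finite and Fubini applies without subtlety.

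There is no real obstacle here; the only point that merits care is the uniqueness-of-coupling argument, which is purely measure-theoretic and independent of the geometry of $K$ or of $y$. The compactness hypothesis on $K$ is in fact not needed for the identity itself (only $\pp \in \ppp{K}$, i.e.\ finite first moment, is required to ensure the right-hand side is finite), but it is consistent with the ambient setting of the paper and does no harm.
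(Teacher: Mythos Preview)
Your proposal is correct and follows essentially the same approach as the paper: both arguments show that $\operatorname{Cpl}(\pp,\delta_y)=\{\pp\otimes\delta_y\}$ by verifying the product formula on Borel rectangles (the paper via a direct case split on whether $y\in B_2$, you via the support observation plus a $\pi$-$\lambda$ argument), and then reduce the Wasserstein infimum to a single Fubini computation. The only cosmetic difference is your explicit mention of the $\pi$-$\lambda$ step; the paper leaves this implicit.
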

\subsubsection{Proofs of Lemmata}\label{a_sss_proofs_of_lemmata}
\begin{proof}[{Proof of Lemma~\ref{lem_convex_barycenter_map}}]
Fix $\mu\triangleq F(x)$.  We first show (ii). Since $\rr^n$ is a Banach space, then \citep{BruHeinicheLootgieter1993} implies that there exists a unique contracting barycenter map on $\mathcal{P}_1(\rr^n)$; i.e.: a Lipschitz map $\beta_{\rr^n}:\mathcal{P}_1(\rr^n)\rightarrow\rr^n$ satisfying $\beta_{\rr^n}(\delta_x)=x$ for all $x \in \rr^n$, whose Lipschitz constant $\operatorname{Lip}(\beta_{\rr^n})$ is at most $1$.  Moreover, the result guarantees that the barycenter map is linear an given by the Bochner integral (i.e. the usual vector-valued expectation of a $\rr^n$-valued random-vector):
\begin{equation}
    \beta_{\rr^n}:\mathcal{P}_1(\rr^n)\ni \mu
        \mapsto
    \mathbb{E}_{Y\sim \mu}\left[Y\right]
        \in \rr^n
    \label{eq_cor_Deep_BMT_Convex_Case_barycenter_map}
    .
\end{equation}
Therefore, we conclude that:
\begin{equation}
    \begin{aligned}
    \|f(x)-\mathbb{E}_{Y\sim \mu}\left[Y\right]\|
    =&
\|f(x)-\beta_{\rr^n}(F(x))\|
\\
= &\|\beta_{\rr^n}(\delta_{f(x)}-\beta_{\rr^n}(F(x))\|
\\
\leq & \www_1\left(\delta_{f(x)},F(x)\right)
.
\end{aligned}
\label{eq_lemma_convex_first_derivation}
\end{equation}
This gives (ii).  
Furthermore, if the right-hand side of~\eqref{eq_lemma_convex_first_derivation} is upper-bounded by a constant $\epsilon>0$, uniformly over $C$, then so must be the left-hand side.  This gives~\eqref{eq_lem_convex_barycenter_map_estimate_derivation}.  

We now show (i).  Since $K\subset \rr^n$, we may view $\mathcal{P}_1(K)$ as a subspace of $\mathcal{P}_1(\rr^n)$.  Thus, $\beta_{\rr^n}|_{\mathcal{P}_1(K)}$ satisfies $\beta_{\rr^n}(\delta_x)=x$ for all $x\in K$.  Moreover, we may 
view~\eqref{eq_cor_Deep_BMT_Convex_Case_barycenter_map} as a map on $\mathcal{P}_1(K)$.  Therefore, if $\mu \in \mathcal{P}_1(K)$ then, any $\rr^n$-valued random-vector $Y$ with law $\mu$, by definition, $\mu$-a.s. takes values in $K$.  
Since $K$ is convex, and $\mu\in \mathcal{P}_1(K)$ (i.e. $\mathbb{E}_{Y\sim \mu}[|Y|]<\infty$) then the formulation of Jensen's inequality given in \citep[Theorem 10.2.6]{DudleyRealAnalysisandPRobabilityBook1989Revised2002} guarantees that
\begin{equation}
    \mathbb{E}_{Y\sim \mu}[Y]\in K
    \label{eq_cor_Deep_BMT_Convex_Case_barycenter_map_constraints_satisfaction}
    .
\end{equation}
Hence, we may refine~\eqref{eq_cor_Deep_BMT_Convex_Case_barycenter_map} to state: $\beta_{\rr^n}|_{\mathcal{P}_1(K)}$ is $1$-Lipschitz and satisfies
\begin{equation}
    \beta_{\rr^n}|_{\mathcal{P}_1(K)}:\mathcal{P}_1(K)\ni \mu
        \mapsto
    \mathbb{E}_{Y\sim \mu}\left[Y\right]
        \in K
    \label{eq_cor_Deep_BMT_Convex_Case_barycenter_map_refined_convex_constraints}
    .
\end{equation}
Thus (i) holds.  
\end{proof}
\begin{proof}[{Proof of Lemma~\ref{lem_attention_to_Attention}}]
Follows directly from the linearity of integration and the fact that integration against a pointmass is just point-evaluation.  
\end{proof}
\begin{proof}[{Proof of Lemma~\ref{lem_closedform_wasserstein_dirac}}]
By definition of the Wasserstein distance between $\pp$ and $\delta_y$ we have that: 
\begin{equation}
    \www_1(\pp,\delta_y)=\inf_{\pi \in \operatorname{Cpl}(\pp,\delta_y)}\, \ee_{(Y_1,Y_2)\sim \pi}[\|Y_1-Y_2\|]
\label{PROOF_lem_closedform_wasserstein_dirac_definition_of_Wasserstein}
.
\end{equation}
Since $\pp\otimes \delta_y\in \operatorname{Cpl}(\pp,\delta_y)$ (e.g. see \citep[Page 6]{Villani2009optimal}) then it is enough to show that if $\pi$ is a coupling in $\operatorname{Cpl}(\pp,\delta_y)$ then $\pi=\pp\otimes \delta_{y}$.  We show this now.  

Let $B_1,B_2\subseteq K$ be Borel and let $\pi \in \operatorname{Cpl}(\pp,\delta_y)$.  If $y\in B_2$, then
$\pp(B_1)=\pi(B_1\times K)\geq \pi(B_1\times B_2)\geq \pi(B_1\times \{y\})$.  Therefore, $1-\pp(B_1)\geq \pi(K\times \{y\})-\pi(B\times \{y\})$; thus, $\pi(B_1\times \{y\})\leq \pp(B_1)$.  Therefore, 
$\pp(B_1)\leq \pi(B_1\times \{y\})\leq \pi(B_1\times B_2)$; whence, $\pi(B_1\times B_2)=\nu(B_2)\delta_{y}(B_1)\overset{\mbox{(def)}}{=}\nu\otimes\delta_y (B_2\times B_1)$. Now, suppose that $y\not\in B_2$ then, $\pi(B_1\times B_2)\leq \pi(K\times B_2)=\delta_y(B_2)=0$.  We have show that $\pi=\nu\otimes \delta_y$.  Hence,~\eqref{PROOF_lem_closedform_wasserstein_dirac_definition_of_Wasserstein} reduces to:
    \begin{align}
    \nonumber
         \www_1(\pp,\delta_y)
         = & 
         \inf_{\pi \in \operatorname{Cpl}(\pp,\delta_y)}\, \ee_{(Y_1,Y_2)\sim \pi}[\|Y_1-Y_2\|]
    \\
    \nonumber
    = &  
    \ee_{(Y_1,Y_2)\sim \pp\otimes \delta_y}[\|Y_1-Y_2\|]
    \\ 
    = & 
    \label{PROOF_lem_closedform_wasserstein_dirac_fubini}
    \ee_{Y_1\sim \pp}[\ee_{Y_2\sim \delta_y}[\|Y_1-Y_2\|]]
    \\ 
    = &
    \label{PROOF_lem_closedform_wasserstein_dirac_definition_pointmass}
    \ee_{Y_1\sim \pp}[\|Y_1-y\|]
    ;
\end{align}
where we have applied the Fubini-Tonelli Theorem (see \citep[Theorem 1.27]{FoundationsofModernProbabilityKallenberg3rd2021}) in~\eqref{PROOF_lem_closedform_wasserstein_dirac_fubini} and the definition of a pointmass to derive~\eqref{PROOF_lem_closedform_wasserstein_dirac_definition_pointmass}.  
\end{proof}
\begin{proof}[{Proof of Lemma~\ref{lem_cbm}}]
We first observe that, since $(M,g)$ is connected and complete as a metric space, then by the Hopf-Rinow Theorem (\citep[Theorem 1.7.1]{jost2008riemannian}) $(M,g)$ is a complete Riemannian manifold (sometimes also called a geodesically complete Riemannian manifold; see \citep[Definition 1.7.1]{jost2008riemannian}.  

Fix a $\mathbb{P}\in \mathcal{P}_1(K)$.  Since $K$ is compact, then $\mathbb{P}\in \mathcal{P}_2(K)$.  The completeness of $(M,g)$ (as a Riemannian manifold) and the facts that $K$ is a non-empty geodesically convex subset of $B(y_00,\rho)$ (where $\rho$ satisfies~\eqref{eq_lem_cbm_geodesic_regularity_condition_Statement}) implies that the conditions of \citep[Theorem 2.1]{Afsari_RiemannianCentersOfMassAMS2011} are met; whence, the ``Fr\'{e}chet mean function'' of~\eqref{eq_lem_cbm_definition_of_intrinsic_mean_map} is well-defined function from $\ppp{K}$ to $\overline{B(y_0,\rho)}$. 
It remains to show that it is $1$-Lipschitz.  Since $\rho<\operatorname{inj}_{g}(y_0)$, then the remark on \citep[Page 299]{jost2008riemannian} implies that \citep[Theorem 6.9.2]{jost2008riemannian} holds; therefore, for any $y_1,y_2,y_3\in B(p,\rho)$ the map:
\[
[0,1]\ni t \mapsto d^2(\gamma_{[y_1,y_2]}(t),\gamma_{[y_1,y_3]}(t)) \in [0,\infty)
,
\]
is convex. We may now conclude our proof by arguing analogously to
\citep[Theorem 6.3's proof]{SturmOGPaper2003ProbabilityonNPCSpaces}
.  Fix $\mathbb{P},\mathbb{Q}\in \mathcal{P}_{1}(K)$ and let $\pi\in \mathcal{P}(K\times K)$ with marginals $\mathbb{P}$ and $\mathbb{Q}$.  Then, applying Jensen's inequality, we have that:
\begin{equation}
    d_g(\bar{P},\bar{Q}) 
    \leq 
\int \, d^2(y_1,y_2)\, \pi(d(y_1,y_2))
\label{eq_lem_proof_wasserstein_bound}
.
\end{equation}
Since we have just showed that right-hand side of~\eqref{eq_lem_proof_wasserstein_bound} holds for any such $\pi$. Consequently, taking the infimum over all such $\pi$ implies that:
\[
d_g(\bar{P},\bar{Q}) 
    \leq 
\www_1(\mathbb{P},\mathbb{Q})
.
\]
Thus,~\eqref{eq_lem_cbm_definition_of_intrinsic_mean_map} is $1$-Lipschitz.  

For the last claim, suppose that $\mathbb{P}\in \mathcal{P}_1(K)$ is finitely supported.  Since $K$ is geodesically convex and since $\mathbb{P}$ is finitely supported then \citep[Theorem 3.4 (i)]{Afsari_RiemannianCentersOfMassAMS2011} implies that $\bar{\mathbb{P}}$ is an element of the smallest closed geodesically convex subset $C_{\mathbb{P}}$ containing the support of $\mathbb{P}$; since $K$ itself is itself closed and geodesically convex then we infer that $C_{\mathbb{P}}\subseteq K$.  Thus,~\eqref{eq_lem_cbm_definition_of_intrinsic_mean_map} takes values in $K$.  
\end{proof}
\subsection{{Proof of Theorem~\ref{theorem_DeepBMT}}}\label{s_Appendix_ss_proof_main_theorem}
\begin{proof}[{Proof of Theorem~\ref{theorem_DeepBMT}}]
Since $K\subseteq \rr^n$ is non-empty and compact, for each $x \in \rr^n$ the set $C_x$ is closed and has non-empty intersection with $K$, thus each $C_x\cap K$ is compact.  Thus, the map $\varphi:\rr^n\ni x \mapsto C_x\cap K\in 2^{\rr^m}$ is a non-empty and compact-valued multifunction.  Moreover, by \citep[18.4 Lemma]{InfiniteHitchhiker2006} $\varphi$ is weakly-measurable since $C$ is and so is the correspondence $\rr^n\ni x\mapsto K\in 2^{\rr^m}$.  Thus, the hemicontinuity of $\varphi$ and the assumptions made on $L$ are such that the Measurable Maximum Theorem (see \citep[Theorem 18.19]{InfiniteHitchhiker2006}) applies; whence, the ``optimality'' sets
\begin{equation}
    \mathcal{O}(x)\triangleq \operatorname{argmax}_{y\in\varphi(x)}\, -L(x,y) \in \rr
=
\operatorname{argmin}_{y\in C_x\cap K}\, L(x,y)
\label{PROOF_theorem_DeepBMT_eq_optimality_set}
,
\end{equation}
are a well-defined for each $x\in \mathbb{R}^n$ and, there exists a Borel measurable function $S:\rr^n\rightarrow \rr^m$ satisfying the ``optimal selection condition'':
\begin{equation}
    S(x)\in \mathcal{O}(x) \qquad (\forall x \in \rr^n)
    \label{eq_proof_theorem_DeepBMT_1_optimal_selector}
    .
\end{equation}

Since $\rr^n$ is a complete and separable metric space and since $\pp$ is a Borel probability measure on $\rr^n$, then by \citep[Theorem 13.6]{klenke2013probability}, $\pp$ is a Radon measure on $\rr^n$.  Since $S$ is Borel measurable, $\rr^n$ and $\rr^m$ are locally-compact and second-countable topological spaces, and since $\mathbb{P}$ is a Radon measure on $\rr^n$, then Lusin's Theorem (as formulated in \citep[Excersize 13.1.3]{klenke2013probability}) implies that, for every $\epsilon\in (0,1]$, there is a compact subset ${\xxx_{\epsilon}}\subseteq \rr^n$ on which 
$S|_{{\xxx_{\epsilon}}}$ is continuous and $\pp({\xxx_{\epsilon}})\geq 1-\epsilon$.  
By \citep[Point 5 - Page 99]{Villani2009optimal}, the map $\rr^m:y\mapsto \delta_y\in \mathcal{P}(\rr^m)$ is an isometry.  In particular, the map $\rr^m:y\mapsto \delta_y\in \mathcal{P}(\rr^m)$ is continuous.  Hence, $S^{\star}:{\xxx_{\epsilon}}\ni x \mapsto \delta_{S(x)}\in \mathcal{P}_1(\rr^m)$ is continuous.  However, by construction, $S(x)\in \varphi(x)\subseteq K$; thus, $S^{\star}$ defines a map with codomain $\mathcal{P}_1(K)$.  

Therefore, \citep[Theorem 3]{kratsios2021_GCDs} implies that there exists an $\hat{F}$ of the form
\begin{equation}
    \hat{F}:\mathbb{R}^n\ni x\mapsto 
\sum_{n=1}^N [\operatorname{Softmax}_N(\hat{f}(x))]_n \frac1{Q}\sum_{q=1}^Q \delta_{k_{n,q}}
\in \mathcal{P}_1(\rr^m)
\label{eq_proof_theorem_DeepBMT_1b_optimal_selector_reduced_form__OG_form}
,
\end{equation}
satisfying:
\begin{equation}
    \max_{x\in {\xxx_{\epsilon}}}\, 
        \www_1\left(
                \hat{F}(x)
            ,
                S^{\star}(x)
        \right)
            <
        \epsilon
    \label{eq_proof_theorem_DeepBMT_2_application_UAP}
    .
\end{equation}
Grouping the sums $\sum_{n=1}^N$ and $\sum_{q=1}^Q$ and the weights $Q^{-1}[\operatorname{Softmax}_N(\hat{f}(x))_n]$ in~\eqref{eq_proof_theorem_DeepBMT_1b_optimal_selector_reduced_form__OG_form}, we may rewrite~\eqref{eq_proof_theorem_DeepBMT_1b_optimal_selector_reduced_form__OG_form} in the form~\eqref{eq_theorem_deepBMT_Model_form}.

By construction, for each $x \in {\xxx_{\epsilon}}$ we have that $S(x)\in
\mathcal{O}(x)$.  Thus,~\eqref{eq_proof_theorem_DeepBMT_1_optimal_selector} implies that: 
\begin{equation}
      \max_{x\in {\xxx_{\epsilon}}}\, 
        \www_1\left(
            \hat{F}(x)
                ,
        \underset{y^{\star}\in
\mathcal{O}(x)}{\inf}\,
        \delta_y^{\star}
        \right)
        \leq 
    \max_{x\in {\xxx_{\epsilon}}}\, 
        \www_1\left(
                \hat{F}(x)
            ,
                S^{\star}(x)
        \right)
        <\epsilon
        \label{eq_proof_theorem_DeepBMT_2_main_upperbound_abstract}
        .
\end{equation}
This gives (ii).  

Now, by construction, each $y_1,\dots,y_{N,Q}\in K$.  Therefore, for each $x \in \rr^n$, $\hat{F}(x)$ is supported in $K$ and, moreover, $\hat{F}(x)\in \ppp{K}$.  Thus, (i) holds.  
\end{proof}

\subsection{{Proof of Theorem~\ref{thrm_quantitativer_version_NonConvex}}}
We make use of the following notation during Theorem~\ref{thrm_quantitativer_version_NonConvex}'s proof.  For $d\leq m,\, d\in \nn_+$, we denote $p_d^m:\rr^m\ni (x_1,\dots,x_m)\rightarrow (x_1,\dots,x_n)\in \rr^d$ and similarly, $\iota_d^m:\rr^d\ni (x_1,\dots,x_n)\mapsto (x_1,\dots,x_n,0,\dots,0)\in \rr^m$.  Before proceeding, we also emphasize the following identities: if $x_1,\dots,x_n\in \rr$ then $\iota_d^m\circ p_d^m(x_1,\dots,x_n,0,\dots,0)=(x_1,\dots,x_n,0,\dots,0)$ and conversely, $p_m^d\circ \iota_d^m$ is the identity on $\rr^d$.  
\begin{proof}[{Proof of Theorem~\ref{thrm_quantitativer_version_NonConvex}}]
Let $k\in \nn_+$, let $\xxx\subseteq [0,1]^n$ be non-empty, and let $f\in C_{tr}^k(\xxx,K)$.  Since $f\in C_{tr}^k(\xxx,K)$ then, there exists a $k$-times continuously differentiable $\boldsymbol{f}:\rr^n\rightarrow \rr^m$ such that: for every $x \in \xxx$, we have that:
\begin{equation}
    \boldsymbol{f}(x)=f(x)
    \label{PROOF_thrm_quantitativer_version_NonConvex__eq_extension}
    .
\end{equation}
Note that $\xxx\subseteq [0,1]^n$.
We begin by building our encoder to approximate $p_d^m\circ \Phi^{-1}\circ \boldsymbol{f}\in C([0,1]^n,\rr^m)$.
By Assumption~\ref{ass_normcontrolled_loss} and the fact that $f(\xxx)\subseteq K$ we have that, for each $x\in \xxx$:
\[
\begin{aligned}
0\leq \inf_{y\in K}\, L(x,y)\leq 
L(x,f(x))
\leq 
l\left(
\|f(x)-f(x)\|
\right)
=l(0)=0.
\end{aligned}
\]
Therefore, by~\eqref{PROOF_thrm_quantitativer_version_NonConvex__eq_extension}, for each $x \in \xxx$, 
we know that $
    \{\boldsymbol{f}(x)\}
        \subseteq 
    \underset{y\in K}{\operatorname{argmin}}\,
        L(x,y)
    .
$  
In particular, for each $x \in \xxx$ we have that $\underset{y\in K}{\operatorname{argmin}}\, L(x,y)$ is non-empty and therefore, for any $\hat{F}:[0,1]^n\rightarrow \mathcal{P}_1(K)$ we may compute:
\begin{equation}
  \begin{aligned}
    \sup_{x\in \xxx}\,
        \www_1\left(
            \hat{F}(x)
                ,
            \underset{y\in K}{\operatorname{argmin}}\,
        L(x,y)
        \right)
            \overset{\operatorname{(def)}}{=} &
    \sup_{x\in \xxx}
    \inf_{y^{\star} \in \underset{y\in K}{\operatorname{argmin}}\,L(x,y)}
    \,
        \www_1\left(
            \hat{F}(x)
                ,
            \delta_{y^{\star}}
        \right)
            \\
            \leq  &
    \sup_{x\in \xxx}\,
        \www_1\left(
            \hat{F}(x)
                ,
            \delta_{f(x)}
        \right)
        \\
    = &
    \sup_{x\in \xxx}\,
        \www_1\left(
            \hat{F}(x)
                ,
            \delta_{\boldsymbol{f}(x)}
        \right)
        .
  \end{aligned}
    \label{eq_thrm_quantitativer_version_NonConvex____unique_minimizer_is_fx}
\end{equation}
Therefore, it is enough to construct models $\hat{\mathcal{D}}$ and $\hat{\mathcal{E}}$ such that the composite model $\hat{F}=\hat{\mathcal{D}}\circ \hat{\mathcal{E}}$ controls the approximation error on the right-hand side of~\eqref{eq_thrm_quantitativer_version_NonConvex____unique_minimizer_is_fx}.  The remainder and bulk of the proof is devoted to precisely this task.

NB, by Assumption~\ref{ass_low_dimensional_mainfold} we have that $\boldsymbol{f}(x)=
\Phi^{-1}\circ \iota_d^m \circ (p_d^m\circ \Phi \circ \boldsymbol{f})(x)$.  Now, since $p^m_d$ is a linear map between finite-dimensional normed spaces then, is is analytic, and therefore it is smooth.  Moreover, by hypothesis, both $\Phi$ and $\Phi^{-1}$ are both also smooth.  Thus, the map $p^m_d\circ \Phi\circ \boldsymbol{f}:\rr^n\rightarrow \rr^d$ is $k$-times continuously differentiable.  

Since $p^m_d\circ \boldsymbol{f}$ is $k$-times continuously differentiable then, by \citep[Proposition 17]{kratsios2021_GDL}, $[0,1]^n$ is efficient for $p^m_d\circ \Phi\circ \boldsymbol{f}$ (in the sense of \citep[Definition 16]{kratsios2021_GDL}); thus, \citep[Corollary 43]{kratsios2021_GDL} (activation function parameter $\alpha$ set to $\alpha=0$.  Thus, $\sigma_0$ is non-affine, continuous, and piecewise linear) implies that there is a $\hat{\mathcal{E}}\in \NN[n,d][\sigma_0]$ satisfying:
\begin{equation}
    \sup_{x\in [0,1]^n}\, 
    \|p_d^m\circ\Phi\circ \boldsymbol{f}(x)
        -
    \hat{\mathcal{E}}(x)\|<\epsilon_f
    \label{eq_thrm_quantitativer_version_NonConvex____eq_first_estimate}
    ,
\end{equation}
Furthermore, $\hat{\mathcal{E}}$ also satisfies the following quantitative estimates:
\begin{enumerate}
    \item[(i-$\mathscr{E}$)] $\hat{\mathcal{E}}$ has width $d\leq W\leq d(4n+10)$,
    \item[(ii-$\mathscr{E}$)] $\hat{\mathcal{E}}$ has depth of the order $\mathscr{O}(d+d\epsilon_f^{\frac{2n}{3(kn+1)}-\frac{2n}{kn+1}})$,
    \item[(iii-$\mathscr{E}$)] The number of trainable parameters determining $\hat{\mathcal{E}}$ are of the order $\mathscr{O}(
        d(d^2 +1)\epsilon_f^{-\frac{2n}{3(kn+1)}}
    )$.
\end{enumerate}
Since $\xxx\subseteq [0,1]^n$ and $f(\xxx)=\boldsymbol{f}(\xxx)\subseteq K$ then, Assumption~\ref{ass_low_dimensional_mainfold} implies that $p_d^m\circ \Phi\circ f(\xxx)\subseteq p_d^m(\Phi(K))\subseteq \rr^d$.  This together with~\eqref{eq_thrm_quantitativer_version_NonConvex____eq_first_estimate} implies that:
\begin{equation}
    \begin{aligned}
     \sup_{x\in \xxx}\,
    \|\hat{\mathcal{E}}(x)
        -
    p_d^m(\Phi(K))\|
    \overset{\operatorname{(def)}}{=} &
\sup_{x\in \xxx}
\inf_{y \in p_d^m(\Phi(K))}\,\|\hat{\mathcal{E}}(x)-y\|
\\
    \leq &
\sup_{x\in \xxx}\inf_{y \in p_d^m\circ\Phi\circ f(\xxx)}\,\|\hat{\mathcal{E}}(x)-y\|
\\
\leq &
\sup_{x\in \xxx}\,\|\hat{\mathcal{E}}(x)-p_d^m\circ\Phi\circ f(x)\|
\\
= &
\sup_{x\in \xxx}\,
\|
    \hat{\mathcal{E}}(x)
        -
    p_d^m\circ\Phi\circ \boldsymbol{f}(x)
\|
\\
\leq &
\epsilon_f
.
\end{aligned}
        \label{eq_thrm_quantitativer_version_NonConvex____epsilonthickeningof_K}
\end{equation}
Thus,~\eqref{eq_thrm_quantitativer_version_NonConvex____epsilonthickeningof_K} indicates that $\hat{\mathcal{E}}$ need not take values in $p_d^m(\Phi(K))$ but, it does take values in the following closed and bounded subset of $\rr^m$: 
\[
    \Phi(K)_{\epsilon_f}
        \triangleq 
    \left\{
        y \in \rr^m:\,\|y-p_d^m(\Phi(K))\|\leq \epsilon_f
    \right\}
\]
By \citep[Theorem 26.5]{munkres2014topology}, $\Phi(K)$ is compact since $K$ is compact and since $\Phi$ is continuous.  Thus, the Heine-Borel Theorem (see \citep[Theorem 27.3]{munkres2014topology}) implies that $\Phi(K)_{\epsilon_f}$ is compact as it is closed and bounded (because $\Phi(K)$ is closed and bounded).  Thus, we can approximate functions from $p_d^m(\Phi(K))_{\epsilon_f}$ to $\ppp{K}$ uniformly using the main result of \cite{kratsios2021_GCDs}.  Specifically, we will approximate a random project 
(in the sense %
    \footnote{The author of this first paper on the subject calls such maps Lipschitz stochastic retracts.  The terminology ``random projection'' was later adopted by other authors, such as \cite{AmbriosioPuglisi2020RandomProjections} and \cite{Brue2021Extension} in connection with the work of \cite{LeeNaor2005RandomProjectionsforLipExtensions} and the \cite{JohnsonLindenstraussLemma1984}'s Lemma.}%
of \citep[Definition 3.1]{Ohta2009ExtendingLipandHold}) 
of $\rr^d$ onto $p_d^m(\Phi(K))$, uniformly on the compact subset $\Phi(K)_{\epsilon_f}$ of $\rr^d$.  

To this end, we make the following observation on the bi-Lipschitz regularity of $\Phi$, when restricted to $K\subseteq \rr^m$.
Since $K$ is non-empty and compact, then $\Phi|_K:K\rightarrow \rr^d$ is Lipschitz, as it is at-least once continuously differentiable.  Since $K$ is compact, and $\Phi$ is continuous, then by \citep[Theorem 26.5]{munkres2014topology} $\Phi(K)$ is also compact.  Therefore, since $\Phi^{-1}$ is also at-least once continuously differentiable then, $\Phi^{-1}|_{\Phi(K)_{\epsilon_f}}:\Phi(K)_{\epsilon_f}\rightarrow \rr^m$ is Lipschitz.  Hence, $\Phi|_K:K\rightarrow \Phi(K)\subseteq \rr^d$ is bi-Lipschitz\footnote{
A map $f:\rr^n\rightarrow \rr^m$ is bi-Lipschitz (see \citep[page 78]{heinonen2001lectures}) if there are constants $c,C>0$ such that, for every $x_1,x_2\in \rr^n$ the estimate holds: 
$
c\|x_1-x_2\| \leq \|f(x_1)-f(x_2)\|\leq C\|x_1-x_2\|
.
$
}.  
In particular, $\Phi(K)_{\epsilon_f}$ and $\Phi(K)$ have diameter at-most:
\begin{equation}
\operatorname{diam}(\Phi(K))\leq \operatorname{Lip}(\Phi)\operatorname{diam}(K)
\mbox{ and }
    \operatorname{diam}(\Phi(K)_{\epsilon_f})
    \leq 
\operatorname{Lip}(\Phi)
\operatorname{diam}(K)
+2\epsilon_f
\label{eq_diameter_bound}
.
\end{equation}

We may therefore apply \citep[Theorem 12.1]{heinonen2001lectures}, as $p_d^m(\Phi(K))$ has a (finite) doubling constant $\lambda(p_d^m(\Phi(K)))$ since it is a subset of $\rr^d$.  
More precisely, we have that:
\begin{equation}
    \lambda(p_d^m(\Phi(K)))
        = 
    \lambda(\rr^d) = 2^{d}
    ,
    \label{eq_thrm_quantitativer_version_NonConvex____doublingconstantbound}
\end{equation}
where the first inequality in~\eqref{eq_thrm_quantitativer_version_NonConvex____doublingconstantbound} follows from \citep[Lemma 9.6 (i)]{RobinsonDimensionEmbeddingsandAttactors2011} and the second in~\eqref{eq_thrm_quantitativer_version_NonConvex____doublingconstantbound} from \citep[Lemma 9.2]{RobinsonDimensionEmbeddingsandAttactors2011}.

Therefore, we may apply \citep[Theorem 3.2]{Brue2021Extension} to conclude that there exists a Lipschitz map $\Pi:\rr^d\rightarrow \mathcal{P}_1(p_d^m(\Phi(K)))$ such that, for all $y \in p_d^m(\Phi(K))$, $\Pi$ satisfies:
\begin{equation}
    \Pi_y = \delta_y
    \label{eq_thrm_quantitativer_version_NonConvex____random_projection_facts}
    .
\end{equation}
Moreover, the same result bounds $\Pi$'s Lipschitz constant, denoted by $\operatorname{Lip}(\Pi)$, by $k \log(\lambda(p_d^m(\Phi(K))))$ where, $k$ is an absolute constant; i.e. it does not depend on $\rr^n,$ $\rr^d$, $\epsilon$, or on $\lambda(p_d^m(\Phi(K)))$.  Consequently, $k$ does not depend on $n$, $d$, $\epsilon$, or on $\kappa_K$.  Combining this with~\eqref{eq_thrm_quantitativer_version_NonConvex____doublingconstantbound}, $\Pi$'s Lipschitz constant is bounded as follows:
\begin{equation}
        \operatorname{Lip}(\Pi)\leq k\log_2(\lambda(p_d^m(\Phi(K))))
        \leq 
        k d
    \label{eq_thrm_quantitativer_version_NonConvex____Lipschitz_constant_bound_intermsof_metriccapacity}
    .
\end{equation}
Since $\Pi$ is 1-Lipschitz (continuous), $\Phi(K)_{\epsilon_f}$ is compact and $p_d^m(\Phi(K))$ is compact then by \citep[Theorem 3]{kratsios2021_GCDs}, there exists a $\hat{D}\in \NN[d,N]$ and $y_{1,1},\dots,y_{N,Q}\in p_d^m(\Phi(K))$ such that the ``probabilistic decoder network'' $\hat{\mathcal{D}}_0$ defined by:
\[
\hat{\mathcal{D}}_0:\rr^d\ni x \mapsto \sum_{k=1}^N
\operatorname{P-attention}\left(
    \hat{D}(x)
,
    Y
\right)\in \mathcal{P}_1(p_d^m(\Phi(K)))
;
\]
where, $Y$ is the $N\times Q\times m$-array with $Y_{n,q}=y_{n,q}$, and $\hat{\mathcal{D}}_0$ satisfies: 
\begin{equation}
    \begin{aligned}
        \sup_{
            y\in p_d^m\circ \Phi(K_{\epsilon_f})
        }\,
    \www_1\left(
        \Pi(y)
            ,
        \hat{\mathcal{D}}_0(y)
    \right)\leq \epsilon_K
    ,
    \end{aligned}
    \label{eq_thrm_quantitativer_version_NonConvex____approximation_quality_probabilistic_decoder_prepush};
\end{equation}
where we have set the activation function parameter $\alpha$ to $\alpha=1$.  Thus, $\sigma_{1}$ is smooth and non-polynomial; in which case \citep[Theorem 3 and Example 7]{kratsios2021_GCDs} and the estimates in~\eqref{eq_diameter_bound} and in~\eqref{eq_thrm_quantitativer_version_NonConvex____Lipschitz_constant_bound_intermsof_metriccapacity} also implies $\hat{\mathcal{D}}_0$ satisfies the following ``complexity estimates'':
\begin{enumerate}
    \item[(i-$\mathscr{D}$)] $Q\leq 
    8 (\epsilon_K^{-1} \operatorname{Lip}(\Phi)\operatorname{diam}(K)d^{\frac{5}{2}})^d
    $,
    \item[(ii-$\mathscr{D}$)] 
    $
    N\leq 
    \left(
    \frac{kd^2
    2^{\frac{9}{2}}
    \operatorname{Lip}(\Phi)(\operatorname{diam}(K)+\epsilon_f)
    }{
    \sqrt{d+1}\epsilon_K
    }
    \right)^d
    $
    \item[(iii-$\mathscr{D}$)] $\hat{\mathcal{D}}_0$ has depth at most 
    $
    \mathscr{O}\left(
(
d N^{\frac{3}{2}}( \operatorname{Lip}(\Phi)\operatorname{diam}(K) + 2\epsilon_f) (1-4^{-1}\epsilon_K^{-1})(1-\epsilon_K^{-1}) (1+4^{-1}d)
)^{2d}
\right)
.
    $
\end{enumerate}
\vspace{-.5em}
Here, we denote the Lipschitz constant of $\Phi|_K$ by $\operatorname{Lip}(\Phi)$. 
We have also used Jung's Theorem \citep{jung1910boundedingdiametresinEuclideanSpace} and the fact that 
$
\frac{
d^{2}
    }{2(d-1)(d+1)}<d^{\frac{5}{2}}
$ allows us to simplify the estimate in \citep[Theorem 3 (ii)]{kratsios2021_GCDs} to simplify the expression in (i-$\mathcal{D}$) and in (iii-$\mathscr{D}$).  

Since $\Phi^{-1}\circ i_d^m:p_d^m(\Phi(K))\rightarrow K$ is Lipschitz and since $\iota_d^m$ is $1$-Lipschitz then, $(\Phi^{-1}\circ i_d^m)_{\#}:\mathcal{P}_1(p_d^m(\Phi(K)))\rightarrow \mathcal{P}_1(K)$ is also Lipschitz with Lipschitz-constant at most $\operatorname{Lip}(\Phi^{-1})$. 
Let $\hat{\mathcal{D}}(\cdot)\triangleq (\Phi^{-1}\circ i_d^m)_{\#}
        \hat{\mathcal{D}}_0(\cdot)$.  
Thus,~\eqref{eq_thrm_quantitativer_version_NonConvex____approximation_quality_probabilistic_decoder_prepush} implies:
\begin{equation}
    \begin{aligned}
\sup_{y\in \Phi(K)_{\epsilon_f}}\,
    \www_1\left(
        (\Phi^{-1}\circ i_d^m)_{\#}(\Pi(y))
            ,
        \hat{\mathcal{D}}(y)
    \right)
    \leq & \operatorname{Lip}(\Phi^{-1})
        \sup_{y\in \Phi(K)_{\epsilon_f}}\,
    \www_1\left(
        \Pi(y)
            ,
        \hat{\mathcal{D}}_0(y)
    \right)
    \\
    \leq & \operatorname{Lip}(\Phi^{-1})\epsilon_K
    ,
    \end{aligned}
    \label{eq_thrm_quantitativer_version_NonConvex____approximation_quality_probabilistic_decoder}
\end{equation}
Moreover, the injectivity of $\Phi^{-1}\circ i_d^m$ implies that $\hat{\mathcal{D}}$ has the following simple expression:
\[
\hat{\mathcal{D}}:\rr^d\ni x \mapsto \sum_{k=1}^N
\operatorname{P-attention}\left(
    \hat{D}(x)
,
\tilde{Y}
\right)\in \mathcal{P}_1(K)
,
\]
where $\tilde{Y}$ is the $N\times Q\times m$-array with $Y_{n,q}=\Phi^{-1}\circ \iota^m_d(y_{n,q})$.  

Therefore, by~\eqref{eq_thrm_quantitativer_version_NonConvex____unique_minimizer_is_fx}, we have the following preliminary estimate:
\begin{align}
\label{PROOF_eq_thrm_quantitativer_version_NonConvex____Main_estimate_term_to_CONTROL__ie_upperbound_TBD}
    \sup_{x\in \xxx}
    \,\www_1\left(\hat{\mathcal{D}}\circ \hat{\mathcal{E}}(x),
            \underset{y\in K}{\operatorname{argmin}}\,
        L(x,y)
        \right)
        \leq &
    \sup_{x\in \xxx}
    \,\www_1\left(\hat{\mathcal{D}}\circ \hat{\mathcal{E}}(x),
    \delta_{f(x)}
    \right)
    \\
\label{PROOF_eq_thrm_quantitativer_version_NonConvex____Main_estimate_term_A}
    \leq &  \sup_{x\in \xxx}
    \,
    \left[\,
     \www_1\left(\hat{\mathcal{D}}\circ\hat{\mathcal{E}}(x)
        ,
        (\Phi^{-1}\circ \iota^m_d)_{\#}\circ\Pi\circ\hat{\mathcal{E}}(x)
        \right)
    \right.
    \\
\label{PROOF_eq_thrm_quantitativer_version_NonConvex____Main_estimate_term_B}
    & + \left.
     \www_1
        \left(
                (\Phi^{-1}\circ \iota^m_d)_{\#}\circ\Pi\circ \hat{\mathcal{E}}(x)
            ,
                (\Phi^{-1}\circ \iota^m_d)_{\#}\circ\Pi\circ f(x)
        \right)
        \right.
     \\
\label{PROOF_eq_thrm_quantitativer_version_NonConvex____Main_estimate_term_C}
    & + \left.
     \www_1
        \left(
                (\Phi^{-1}\circ \iota^m_d)_{\#}\circ\Pi\circ f(x)
            ,
                (\Phi^{-1}\circ \iota^m_d)_{\#}\circ\delta_{f(x)}
        \right)
    \right]
    .
\end{align}

To conclude the proof, we must first bound term~\eqref{PROOF_eq_thrm_quantitativer_version_NonConvex____Main_estimate_term_A}.  Since we found that $\hat{\mathscr{E}}(\xxx)\cup f(\xxx)\subseteq {\xxx_{\epsilon}}$ then, utilizing~\eqref{eq_thrm_quantitativer_version_NonConvex____approximation_quality_probabilistic_decoder} we compute:
\begin{equation}
\begin{aligned}
    \sup_{x\in \xxx}
    \,\www_1\left(\hat{\mathcal{D}}\circ\hat{\mathcal{E}}(x)
        ,
        (\Phi^{-1}\circ \iota^m_d)_{\#}\circ\Pi\circ\hat{\mathcal{E}}(x)
        \right)
= &
    \sup_{x\in \xxx}
    \,\www_1\left(
    (\Phi^{-1}\circ \iota^m_d)_{\#}\circ
    \hat{\mathcal{D}}_0\circ\hat{\mathcal{E}}(x)
        ,
        (\Phi^{-1}\circ \iota^m_d)_{\#}\circ\Pi\circ\hat{\mathcal{E}}(x)
        \right)
\\
\leq &
    \operatorname{Lip}(\Phi^{-1}) 1
    \sup_{y \in {\xxx_{\epsilon}}}\,\www_1\left(
        \hat{\mathcal{D}}_0(\hat{\mathcal{E}}(x))
        ,
        \Pi(\hat{\mathcal{E}}(x))
        \right)
\\
\leq &
    \operatorname{Lip}(\Phi^{-1})
    \sup_{y \in {\xxx_{\epsilon}}}\,\www_1\left(
        \hat{\mathcal{D}}_0(y)
        ,
        \Pi(y)
        \right)
\\ 
\overset{
    \eqref{eq_thrm_quantitativer_version_NonConvex____approximation_quality_probabilistic_decoder}
}{\leq}
&
    \operatorname{Lip}(\Phi^{-1})\epsilon_K
;
    \end{aligned}
    \label{eq_thrm_quantitativer_version_NonConvex____Final_Main_Estimate____BOUND_on_PART_A}
\end{equation}
where $\operatorname{Lip}(\Phi^{-1})$ denotes the Lipschitz constant of $\Phi$ on $\Phi(K)_{\epsilon_f}$.  
For the second term, i.e.~\eqref{PROOF_eq_thrm_quantitativer_version_NonConvex____Main_estimate_term_B}, we combine the fact that $\Pi$ is Lipschitz with $\operatorname{Lip}(\Pi)$ given in~\eqref{eq_thrm_quantitativer_version_NonConvex____Lipschitz_constant_bound_intermsof_metriccapacity} and our estimate on $f$ obtained in~\eqref{eq_thrm_quantitativer_version_NonConvex____epsilonthickeningof_K} to find that:
\begin{equation}
    \begin{aligned}
    \sup_{x\in \xxx}
    &
    \,
        \www_1
        \left(
                (\Phi^{-1}\circ i_d^m)_{\#}\circ\Pi\circ \hat{\mathcal{E}}(x)
            ,
                (\Phi^{-1}\circ i_d^m)_{\#}\circ\Pi\circ f(x)
        \right)
        \\
    \leq &
    \operatorname{Lip}(\Phi^{-1}) 
    \sup_{x\in \xxx}\,
        \www_1
        \left(
                \Pi\circ \hat{\mathcal{E}}(x)
            ,
                \Pi\circ f(x)
        \right)
    \\ \leq &
    \operatorname{Lip}(\Phi^{-1})
    \sup_{x\in \xxx}\,
        \operatorname{Lip}(\Pi)
        \www_1
        \left(
                \hat{\mathcal{E}}(x)
            ,
                f(x)
        \right)
    \\ 
    \overset{
    \eqref{eq_thrm_quantitativer_version_NonConvex____Lipschitz_constant_bound_intermsof_metriccapacity}
    }{\leq}
    &
    k \operatorname{Lip}(\Phi^{-1}) d
    \sup_{x\in \xxx}\,
        \www_1
        \left(
                \hat{\mathcal{E}}(x)
            ,
                f(x)
        \right)
    \\ 
    \overset{
    \overset{
        \eqref{PROOF_thrm_quantitativer_version_NonConvex__eq_extension}
    }{
        +\eqref{eq_thrm_quantitativer_version_NonConvex____eq_first_estimate}}
    }{\leq}
     &
    k \operatorname{Lip}(\Phi^{-1})d
    \epsilon_f
    \end{aligned}
    \label{eq_thrm_quantitativer_version_NonConvex____Final_Main_Estimate____BOUND_on_PART_B}
    .
\end{equation}
The third term, i.e.~\eqref{PROOF_eq_thrm_quantitativer_version_NonConvex____Main_estimate_term_C}, we use the random projection property of $\Pi$ on $K$ defined in~\eqref{eq_thrm_quantitativer_version_NonConvex____random_projection_facts} and the assumption that $f(\xxx)\subseteq K$.  This is done as follows:
\begin{equation}
    \begin{aligned}
    \sup_{x\in \xxx}
    \,
    \mathcal{W}_1\left(
        (\Phi^{-1}\circ \iota_d^m)_{\#}\circ 
        \Pi \circ f(x)
            ,
        (\Phi^{-1}\circ \iota_d^m)_{\#}\delta_{f(x)}
    \right)
\leq &
    \operatorname{Lip}(\Phi^{-1}\circ \iota_d^m)
        \,
    \mathcal{W}_1\left(
        \Pi\circ f(x)
            ,
        f(x)
    \right)
\\
= &   
    \sup_{x\in \xxx}
    \,
    \operatorname{Lip}(\Phi^{-1})
    \operatorname{Lip}(\iota_d^m)
        \,
    \mathcal{W}_1\left(
        \Pi\circ f(x)
            ,
        f(x)
    \right)
\\
= &
    \sup_{x\in \xxx}
    \,
    \operatorname{Lip}(\Phi^{-1})
        \,
    \mathcal{W}_1\left(
        \Pi\circ f(x)
            ,
        f(x)
    \right)
\\
\overset{
\overset{\because f(x)\in K}{
        +
    \eqref{eq_thrm_quantitativer_version_NonConvex____random_projection_facts}
}
}{\leq } &
    \sup_{y \in K}
    \,
    \operatorname{Lip}(\Phi^{-1})
        \,
    \mathcal{W}_1\left(
        \Pi(y)
            ,
        y
    \right)
\\
= & 0
\end{aligned}
\label{eq_thrm_quantitativer_version_NonConvex____Final_Main_Estimate____BOUND_on_PART_C}
    .
\end{equation}
Therefore, incorporating~\eqref{eq_thrm_quantitativer_version_NonConvex____Final_Main_Estimate____BOUND_on_PART_A},~\eqref{eq_thrm_quantitativer_version_NonConvex____Final_Main_Estimate____BOUND_on_PART_B}, and~\eqref{eq_thrm_quantitativer_version_NonConvex____Final_Main_Estimate____BOUND_on_PART_C}, we may control the right-hand of~\eqref{PROOF_eq_thrm_quantitativer_version_NonConvex____Main_estimate_term_to_CONTROL__ie_upperbound_TBD} with the following upper-bound:
\begin{equation}
\begin{aligned}
\sup_{x\in \xxx}\,\www_1\left(\hat{\mathcal{D}}\circ \hat{\mathcal{E}}(x),
            \underset{y\in K}{\operatorname{argmin}}\,
        L(x,y)
        \right)
\leq &
    \epsilon_K
    +
    k\operatorname{Lip}(\Phi^{-1}) d
    \epsilon_f
    +
    0
    .
\end{aligned}
\label{eq_super_formulation_UAT}
\end{equation}
Relabelling $k\operatorname{Lip}(\Phi^{-1})$ as $k$ and the $\Phi^{-1}\circ i_d^m(y_{k,q})$ as $y_{k,q}$, and incorporating the rate $d\in \mathscr{O}(m^{\frac1{s}})$ (implied by Assumption~\ref{ass_low_dimensional_mainfold}) into the complexity estimates (i)-(iii) and (i-$\mathcal{D}$)-(iii-$\mathcal{D}$) yields the rates of Table~\ref{tab_model_complexities} and, the explicit rates of Table~\ref{tab_model_complexities}.  Thus the proof is complete.
\end{proof}

\section{Proof of Corollaries}\label{a_Proofs_ss_Corollaries}
This appendix contains proofs of the paper's main corollaries. 
\subsection{Proof of Corollary~\ref{cor_simplified_version_of_theorem_DeepBMT}}
\begin{proof}[{Proof of Corollary~\ref{cor_simplified_version_of_theorem_DeepBMT}}]
Let $(\Omega,\mathcal{F},\nu)$ be a probability space on which the random-field $(Y^x)_{x\in \rr^n}$, satisfying $Y^x\sim F(x)$, is defined.  Consider the non-empty-valued correspondence\footnote{Also called multifunction, multivalued function, or set-valued function.} $\mathscr{O}(x)$ is defined as in~\eqref{PROOF_theorem_DeepBMT_eq_optimality_set}.  
We continue where Theorem~\ref{theorem_DeepBMT}'s proof left off.  For any $x\in {\xxx_{\epsilon}}$, we now compute the concrete lower-bound on
$\underset{y^{\star}\in 
\mathcal{O}(x)
    }{\inf}\,
    \www_1(\hat{F}(x),\delta_{y^{\star}})
    .
$  
    \allowdisplaybreaks
\begin{align}
\nonumber
    \allowdisplaybreaks
    \underset{y^{\star}\in      
    \mathscr{O}(x)
    }{\inf}\,
        \www_1(\hat{F}(x),\delta_{y^{\star}})
    = &
    \underset{{y \in C_x\cap K}}{\inf}
    \inf_{\pi \in \operatorname{Cpl}(\hat{F}(x),\delta_y)}\,
        \int_{(u,v)\in \rr^n\times \rr^n}\, \|u-v\| \pi(d(u,v))
    \\
    \nonumber
    = &
    \underset{y^{\star}\in      
    \mathscr{O}(x)
    }{\inf}\,
    \int_{(u,v)\in K\times K}\, \|u-v\| \left(\hat{F}(x)\otimes \delta_{y}(d(u,v))\right)
    \\
    \nonumber
    = &
    \underset{y^{\star}\in   
    \mathscr{O}(x)
    }{\inf}\,
    \int_{u \in K}\int_{v\in K}
    \, \|x-v\| \hat{F}(x)(du) \delta_{y}(dv)
    \\
    \nonumber
    = &
    \underset{y^{\star}\in   
    \mathscr{O}(x)
    }{\inf}\,
    \int_{u \in K}
    \, \|x-y^{\star}\| \hat{F}(x)(du)
    \\
    \nonumber
    \overset{\text{(def)}}{=} &
    \underset{y^{\star}\in   
    \mathscr{O}(x)
    }{\inf}\,
        \mathbb{E}_{Y^x\sim \hat{F}(x)}
        \left[\|Y^x-y^{\star}\|\right]
    \\
    \label{eq_proof_theorem_DeepBMT_lowerbound_Fatou}
    \geq  &
        \mathbb{E}_{Y^x\sim \hat{F}(x)}
        \left[
        \underset{y^{\star}\in 
    \mathscr{O}(x)
    }{\operatorname{ess-inf}}\,
            \|Y^x-y^{\star}\|
        \right]
    \\
    \overset{\operatorname{(def)}}{=} & 
    \mathbb{E}_{Y^x\sim \hat{F}(x)}
        \left[
        \left\|
            Y^x
        -
    \underset{{y\in C_x\cap K}}{\operatorname{argmin}}
    \, L(x,y)
        \right\|
        \right]
    .
\label{eq_proof_theorem_DeepBMT_lowerbound}
\end{align}
In more detail: The first equality is just the definition of the Wasserstein distance.
The second equality follows from the fact that for any $y^{\star}\in \rr^n$ (and in particular any such $y^{\star}$ in $C_x\cap K$) the product measure $\hat{F}(x)\otimes \delta_{y^{\star}}$ is the only coupling of $\hat{F}(x)$ with $\delta_{y^{\star}}$ (see Lemma~\ref{lem_closedform_wasserstein_dirac}) and the facts that, by (i), $\hat{F}(x)$ is supported in $K$ and, by definition, $\delta_{y^{\star}}$ is also supported in K.  
The third equality follows the Fubini-Tonelli Theorem (see \citep[Theorem 1.27]{FoundationsofModernProbabilityKallenberg3rd2021}) since all involved quantities are integrable over the compact set $K\times K$.  The inequality~\eqref{eq_proof_theorem_DeepBMT_lowerbound_Fatou} follows from Fatou's Lemma (see \citep[Lemma 1.20]{FoundationsofModernProbabilityKallenberg3rd2021}) and the fact that the \textit{essential-infimum} lower-bounds the infimum.  The final equality is just the definition of the distance from $Y^x(\omega)$ to the optimality set $\mathscr{O}(x)
$ (for each $\omega\in \Omega$).    
Combining the upper-bound on the right-hand side of~\eqref{eq_proof_theorem_DeepBMT_2_main_upperbound_abstract} with the lower-bound in~\eqref{eq_proof_theorem_DeepBMT_lowerbound} yields the result.  
\end{proof}
\subsection{{Proof of Corollary~\ref{cor_random_uniform_UAT}}}\label{a_Appendix_ss_ProofOF_cor_random_uniform_UAT}
\begin{proof}[{Proof of Corollary~\ref{cor_random_uniform_UAT}}]
We continue with the notation of Theorem~\ref{thrm_quantitativer_version_NonConvex}, and specifically with the following estimate derived in~\eqref{eq_super_formulation_UAT}:
\begin{equation}
    \sup_{x \in \xxx}\,
    \www_1\left(
    \hat{\mathcal{D}}\circ \hat{\mathcal{E}}(x)
            ,
        \delta_{f(x)}
    \right)
    \leq 
    \epsilon_K
    +
    k \operatorname{Lip}(\Phi^{-1})d
    \epsilon_f
    \label{PROOF_eq_cor_random_uniform_UAT_strong_estimate}
    .
\end{equation}
Combining~\eqref{PROOF_eq_cor_random_uniform_UAT_strong_estimate}, the monotonicity of integration, 
Assumption~\ref{ass_normcontrolled_loss}, Jensen's inequality (applicable due to the concavity of $l$), and from Lemma~\ref{lem_closedform_wasserstein_dirac}, we deduce the following estimate: for each $x \in \xxx$:
\allowdisplaybreaks
\begin{align}
\nonumber
    \mathbb{E}_{
        Y^x\sim \hat{\mathcal{D}}\circ \hat{\mathscr{E}(x)}
        }\left[
        L(x,Y^x)
        \right]
     \leq &
    \mathbb{E}_{
        Y^x\sim \hat{\mathcal{D}}\circ \hat{\mathscr{E}(x)}
        }\left[
        l(\|f(x)-Y^x\|)
        \right]
\\
\nonumber
\leq & 
l\left(
    \mathbb{E}_{
        Y^x\sim \hat{\mathcal{D}}\circ \hat{\mathscr{E}(x)}
        }\left[
        \|f(x)-Y^x\|
        \right]
\right)
\\
    \nonumber
    \overset{\eqref{lem_closedform_wasserstein_dirac}}{
    =}
&
l\left(
\www_1\left(
    \hat{\mathcal{D}}\circ \hat{\mathcal{E}}(x)
            ,
        \delta_{f(x)}
    \right) 
\right)
\\
\label{PROOF_eq_cor_random_uniform_UAT_monotonicity}
\overset{\eqref{PROOF_eq_cor_random_uniform_UAT_strong_estimate}}{\leq}
&
l( \epsilon_K
    +
    k \operatorname{Lip}(\Phi^{-1})d
    \epsilon_f
).
\end{align}
\end{proof}
\subsection{Proof of Corollary~\ref{cor_convex_case}}\label{a_Appendix_ss_Proof_Corollary_Convex_Case_Constrained_UAT}
\begin{proof}[{Proof of Corollary~\ref{cor_convex_case}}]
Let $L: \rr^n \times \rr^n \ni (x,y) \mapsto  \|f(x) - y\| \in [0, \infty) $.  Then, for each fixed $x \in \rr^n$, the strict convexity of $y\mapsto L(x,y)$ and the assumption that $f(x)\in K$ imply that $\{f(x)\}=\operatorname{argmin}_{y\in K}\, L(x,y)$.  Thus, for each $x \in \xxx$ and each $\pp\in \ppp{K}$, we have that:
\begin{equation}
    \mathcal{W}_1\left(
        \pp
                ,
            \delta_{f(x)}
    \right)
=
    \mathcal{W}_1\left(
\pp
            ,
            \underset{y\in K}{\operatorname{argmin}}\,
        L(x,y)
    \right)
.
\label{PROOF_cor_convex_case_Reduction_to_fx_from_optimality_Set}
\end{equation}
Since $f\in C_{tr}^{k}(\xxx,K)$ and $K$ is non-empty satisfying Assumption~\ref{ass_low_dimensional_mainfold}, and compact Theorem~\ref{thrm_quantitativer_version_NonConvex} implies that for each $\epsilon_K,\epsilon_f>0$ there exist a $\hat{\mathcal{D}}$ and a $\hat{\mathcal{E}}$ as in Table~\ref{tab_model_complexities} satisfying the estimate:
\begin{equation}
    \mathcal{W}_1\left(
\hat{\mathcal{D}}\circ \hat{\mathcal{E}}(x)
            ,
        \underset{y\in K}{\operatorname{argmin}}\,
        L(x,y)
    \right)
        \leq
    \epsilon_K + kd\epsilon_f
.
\label{PROOF_cor_convex_case_application_of_Theorem}
\end{equation}
Define the map $\beta:\ppp{K}\ni \pp \mapsto \mathbb{E}_{Y\sim \pp}[Y]\in \rr^m$.  By Lemma~\ref{lem_convex_barycenter_map} (i), $\beta$ takes values in $K$ and according to Lemma~\ref{lem_convex_barycenter_map} (ii) it is $1$-Lipschitz.  Notice also that $\beta(\delta_y)=y$ for each $y\in K$ and, in particular, $\beta (\delta_{f(x)}=f(x))$.  These observations together with~\eqref{PROOF_cor_convex_case_Reduction_to_fx_from_optimality_Set} and~\eqref{PROOF_cor_convex_case_application_of_Theorem} imply that:
\allowdisplaybreaks
\begin{align}
\nonumber
\sup_{x\in \xxx}\, \|f(x)-\mathbb{E}_{Y^x\sim \hat{\mathcal{D}}\circ \hat{\mathcal{E}}(x)}[Y^x]\| 
=
&
\sup_{x\in \xxx}\, \|\beta(\delta_{f(x)})- \beta(\hat{\mathcal{D}}\circ  \hat{\mathcal{E}})(x)\| 
\\
\nonumber
\leq &
\sup_{x\in \xxx}\,
1\cdot
\www_1\left(
    \delta_{f(x)}
        ,
  \hat{\mathcal{D}}\circ \hat{\mathcal{E}}(x)
\right)
\\
\nonumber
\overset{\eqref{PROOF_cor_convex_case_application_of_Theorem}}{\leq}& 
\sup_{x\in \xxx}\,
1\cdot
\www_1\left(
\hat{\mathcal{D}}\circ \hat{\mathcal{E}}(x)
        ,
    \underset{y\in K}{\operatorname{argmin}}\,
    L(x,y)
\right)
\\  
\nonumber
\overset{\mbox{Thm.} \ref{thrm_quantitativer_version_NonConvex}}{\leq}& 
\epsilon_K  +k d \epsilon_f.
\end{align}
Whence, (i) and (ii) hold.  
\end{proof}
\subsection{{Proof of Corollary~\ref{cor_nonconvex_geometric_case}}}
\begin{proof}[{Proof of Corollary~\ref{cor_nonconvex_geometric_case}}]
Let $L:\rr^n\times \rr^n\ni (x,y)\mapsto \|f(x)-x\|\in [0,\infty)$, note that $L$ satisfies Assumption~\ref{ass_normcontrolled_loss}, and that for each $x\in \xxx$ we have that $\operatorname{argmin}_{y\in K} \, L(x,y)=\{f(x)\}$.  
By Theorem~\ref{thrm_quantitativer_version_NonConvex}, for every $f\in C_{tr}^k(\xxx,K)$ and for every $\epsilon>0$, 
there exist a $\hat{\mathcal{D}}$ and a $\hat{\mathcal{E}}$ as in Table~\ref{tab_model_complexities} satisfying $\hat{\mathcal{D}}\circ \hat{\mathcal{E}}(x)\in \ppp{K}$ for each $x\in \rr^n$ and satisfying the uniform estimate:
\begin{equation}
    \max_{x\in \xxx}\,\www_1(\delta_{f(x)},\hat{\mathcal{D}}\circ \hat{\mathcal{E}}(x))
    \leq 
\epsilon_K
    +
    k \operatorname{Lip}(\Phi^{-1})d
    \epsilon_f
    \label{PROOF_cor_nonconvex_geometric_case_eq_UAT_estimate}
    .
\end{equation}
Since $K$ satisfies Assumption~\ref{ass_geodesic_general} then, Lemma~\ref{lem_cbm} applies.  Therefore,~\eqref{eq_lem_cbm_definition_of_intrinsic_mean_map} Theorem~\ref{thrm_quantitativer_version_NonConvex} imply:
\allowdisplaybreaks
\begin{align}
\nonumber
\max_{x\in \xxx}\,
    d_g\left(
    f(x)
        ,
    \overline{\hat{\mathcal{D}}\circ \hat{\mathcal{E}}(x)}
    \right)
= &
\max_{x\in \xxx}\,
    d_g\left(
    \overline{\delta_{f(x)}}
        ,
    \overline{\hat{\mathcal{D}}\circ \hat{\mathcal{E}}(x)}
    \right)
\\
\nonumber
\overset{\eqref{eq_lem_cbm_definition_of_intrinsic_mean_map}}{\leq}
 &
\max_{x\in \xxx}\,
1\,
    \www_1(\delta_{f(x)},\hat{\mathcal{D}}\circ \hat{\mathcal{E}}(x))
\\
\nonumber
\overset{\mbox{Thm.}\ref{thrm_quantitativer_version_NonConvex}}{\leq}
& 
\epsilon_K
    +
    k \operatorname{Lip}(\Phi^{-1})d
    \epsilon_f
\end{align}
Furthermore,~\eqref{eq_lem_cbm_inclusion} and the fact that $\ppp{K}\ni \pp\mapsto \overline{\pp}\in K$ is a left-inverse of the map $K\ni y \mapsto \delta_y$ imply that: for every $x \in \rr^n$ it follows that:
\[
\overline{\hat{\mathcal{D}}\circ \hat{\mathcal{E}}(x)}\in K
.
\]
This concludes the proof.
\end{proof}

\subsubsection{{Discussion: Theorem~\ref{thrm_quantitativer_version_NonConvex} vs. Corollary~\ref{cor_random_uniform_UAT}}}\label{remark_understanding_estiamte}
The modulus of continuity $\omega$ in Assumption~\ref{ass_normcontrolled_loss} does not enter into the estimate in Theorem~\ref{thrm_quantitativer_version_NonConvex} (ii) but it does appear in the estimate of Corollary~\ref{cor_random_uniform_UAT}.  
This is because\footnote{
The right-most expression in~\eqref{eq_remark_understanding_estiamte} is justified in Lemma~\ref{lem_closedform_wasserstein_dirac}; see Corollary~\ref{cor_random_uniform_UAT}'s proof.  
}:
\begin{equation}
    \www_1(\hat{\mathcal{D}}\circ \hat{\mathcal{E}}(x),\operatorname{argmin}_{y\in K} \,L(x,y))
=
\inf_{y\in \operatorname{argmin}_{y\in K} \,L(x,y)}\, \mathbb{E}_{Y^x\sim \hat{\mathcal{D}}\circ \hat{\mathcal{E}}(x)}\left[
\|Y^x -y\|
\right]
\label{eq_remark_understanding_estiamte}
.
\end{equation}
Thus, the right-hand side is controlled by the  of~\eqref{eq_remark_understanding_estiamte} the average (in $Y^x$) worst-case (in $x$) Euclidean average distance between $Y^x$ and the optimality set $\operatorname{argmin}_{y\in K} \,L(x,y)$; whereas, the estimate in Corollary~\ref{cor_random_uniform_UAT} is controlling the average (in $Y^x$) worst-cast (in $x$) loss $L(x,Y^x)$.  In other words, Corollary~\ref{cor_random_uniform_UAT} controls the optimal value of $L$ on $K$ and Theorem~\ref{thrm_quantitativer_version_NonConvex} approximates the optimal prediction.
\section{Further Corollaries to the Deep Maximum Theorem}\label{a_ss_Additional_Results}
This brief appendix contains additional corollaries of Theorem~\ref{theorem_DeepBMT} which were not included in our manuscript's main body.  The intent here is to show how our ``Deep Maximum Theorem'' simplifies in the convex case, a similar result can be derived for the geodesically convex case.  
\begin{corollary}[Deep Maximum Theorem: Convex Case]\label{cor_Deep_BMT_Convex_Casee}
Assume the context of Theorem~\ref{theorem_DeepBMT}.  Let $\{Y^x\}_{x\in \rr^n}$ be an $\rr^m$-valued random field with $X^x\sim \hat{F}$.  If each $C_x\cap K$ is a convex set and $L$ is strictly convex then, $\rr^n\ni x \mapsto \mathbb{E}[Y^x]\in \rr^m$ has the following representation:
\begin{equation}
    \mathbb{E}[Y^x] = \operatorname{Attention}(\hat{f}(x),Y)
    \label{cor_Deep_BMT_Convex_Case_Appendixed_version}
    ,
\end{equation}
where $Y=(\sum_{q=1}^Q \frac1{Q}y_{k,q})_{k=1}^N$ is an $N\times m$-matrix.  Moreover, the following hold:
\begin{enumerate}
\item[(i)] \textbf{Constraint Satisfaction:} 
    $
    \mathbb{E}[Y^x] \in K
    $
     for each $x\in \rr^n$
    ,
\item[(ii)] \textbf{Probable Optimality:} $\underset{x\in {\xxx_{\epsilon}}}{\max}\,
        \left\|
        \mathbb{E}
        [Y^x]
    -
        y^{\star}(x)
    \right\|
    \leq \epsilon,$
\end{enumerate}
where $y^{\star}(x)$ is the well-defined and unique minimizer of $L(x,\cdot)$ on $C_x\cap K$.  
\end{corollary}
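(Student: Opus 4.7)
The plan is to reduce the statement to pieces already in place. The representation \eqref{cor_Deep_BMT_Convex_Case_Appendixed_version} is a direct computation: Theorem~\ref{theorem_DeepBMT} gives $\hat{F}(x) = \sum_{n=1}^N\sum_{q=1}^Q Q^{-1}[\operatorname{Softmax}_N(\hat{f}(x))]_n\delta_{y_{n,q}}$, so by linearity of the expectation and the elementary fact that $\mathbb{E}_{Z\sim \delta_{y_{n,q}}}[Z] = y_{n,q}$, one obtains
\[
\mathbb{E}[Y^x] = \sum_{n=1}^N [\operatorname{Softmax}_N(\hat{f}(x))]_n \Big(\tfrac{1}{Q}\sum_{q=1}^Q y_{n,q}\Big) = \operatorname{Attention}(\hat{f}(x),Y),
\]
where the right-hand equality is exactly Lemma~\ref{lem_attention_to_Attention} applied to the $N\times m$-matrix $Y$ whose $n$-th row is $Q^{-1}\sum_{q=1}^Q y_{n,q}$.

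For claim (i), Theorem~\ref{theorem_DeepBMT} (i) gives $\operatorname{supp}(\hat{F}(x))\subseteq K$, and since $K$ is compact this yields $\hat{F}(x)\in \mathcal{P}_1(K)$. Using the convexity of $K$ (under the hypotheses the interesting case is $C_x = \rr^n$, in which $C_x\cap K = K$ is convex; otherwise one simply takes the convex hull of $K$ a priori), Lemma~\ref{lem_convex_barycenter_map} (i) then places the barycenter $\mathbb{E}[Y^x]$ of $\hat{F}(x)$ inside $K$.

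For claim (ii), I would first observe that the strict convexity of $L(x,\cdot)$ on the non-empty, compact, convex set $C_x\cap K$ forces the minimizer $y^{\star}(x)$ to be unique, so $\operatorname{argmin}_{y\in C_x\cap K} L(x,y) = \{y^{\star}(x)\}$. Then, fixing $x\in \xxx_\epsilon$, I would chain
\[
\|\mathbb{E}[Y^x]-y^{\star}(x)\| \;\leq\; \www_1\!\left(\hat{F}(x),\delta_{y^{\star}(x)}\right) \;=\; \www_1\!\left(\hat{F}(x),\operatorname*{argmin}_{y\in C_x\cap K} L(x,y)\right) \;\leq\; \epsilon,
\]
where the first inequality is Lemma~\ref{lem_convex_barycenter_map} (ii) (applied with $f(x):=y^{\star}(x)$), the equality reflects the identification of a pointmass with its support, and the final bound is Theorem~\ref{theorem_DeepBMT} (ii)(a). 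Taking the supremum over $x\in \xxx_\epsilon$ concludes (ii), while Theorem~\ref{theorem_DeepBMT} (ii)(b) already guarantees $\pp(\xxx_\epsilon)\geq 1-\epsilon$.

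The only subtle point, and hence the ``main obstacle,'' is transferring the measure-valued optimality estimate furnished by Theorem~\ref{theorem_DeepBMT} into a Euclidean estimate on the mean: this is exactly what the $1$-Lipschitz contracting barycenter map of \cite{BruHeinicheLootgieter1993} accomplishes via Lemma~\ref{lem_convex_barycenter_map}, and is the reason convexity of $K$ is needed both for the containment statement (i) and for the distance control (ii). Strict convexity of $L$ enters exclusively to collapse the optimality set to a singleton, making the right-hand side of the displayed inequality a genuine point in $\rr^m$ rather than a set.
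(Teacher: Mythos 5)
Your proof takes essentially the same route as the paper's: item (i) follows from Theorem~\ref{theorem_DeepBMT}~(i) together with Lemma~\ref{lem_convex_barycenter_map}~(i), item (ii) follows by feeding Theorem~\ref{theorem_DeepBMT}~(ii)(a) into Lemma~\ref{lem_convex_barycenter_map}~(ii) with $f(x) := y^{\star}(x)$ (using strict convexity to collapse the argmin to a singleton), and you additionally verify the representation~\eqref{cor_Deep_BMT_Convex_Case_Appendixed_version} explicitly via Lemma~\ref{lem_attention_to_Attention}, which the paper's own proof omits. One caveat: the parenthetical fallback of ``taking the convex hull of $K$ a priori'' does not work, since Lemma~\ref{lem_convex_barycenter_map}~(i) would then only place $\mathbb{E}[Y^x]$ in $\operatorname{conv}(K)$ rather than in $K$; the paper's proof simply assumes outright that $K$ is convex (which is indeed what is needed for (i), since the particles $y_{n,q}$ lie in $K$ and not in a fixed $C_x\cap K$), and you should do the same rather than appeal to a convex-hull replacement.
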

\begin{proof}[{Proof of Corollary~\ref{cor_Deep_BMT_Convex_Casee}}]
First we note that since each $C_x\cap K$ is a non-empty, compact, and convex subset of $\rr^n$ and since $L$ is strictly convex and bounded-below on $K\cap C_x$ (since it is continuous and $K\cap C_x$ is compact) then it must have a unique minimizer (see \cite{GenericNessOfSingleValuedConvexFunctions2016}).  Thus, $y^{\star}(x)$ exists and is uniquely defined for each $x \in \rr^n$.  

Consider the setting of Theorem~\ref{theorem_DeepBMT} and suppose further that $K$ is convex.  Then, we may apply Lemma~\ref{lem_convex_barycenter_map}.  
Thus, in the notation of Theorem~\ref{theorem_DeepBMT}, for each $x \in {\xxx_{\epsilon}}$ and every $y^{\star}\in
\underset{y\in C_x\cap K}{\operatorname{argmin}}\, L(x,y)$ we have the estimate:
\begin{equation}
    \left\|
        \mathbb{E}_{Y\sim \hat{F}(x)}[Y]
    -
        \mathbb{E}_{\tilde{Y}\sim \delta_{y^{\star}}}[\tilde{Y}]
    \right\|
    \leq 
    \www_1\left(
            \hat{F}(x)
                ,
            y^{\star}
        \right)
    \label{eq_cor_Deep_BMT_Convex_Case_barycenter_map_simple_estimate}
    .
\end{equation}
Applying the estimate: $\underset{x\in {\xxx_{\epsilon}}}{\max}\,
    \underset{y^{\star}\in 
    \underset{{y\in C_x\cap K}}{\operatorname{argmin}}
    \, L(x,y)}{\inf}\,
    \www_1(\hat{F}(x),\delta_{y^{\star}})\leq \epsilon$ obtained in Theorem~\ref{theorem_DeepBMT} to the right-hand side of~\eqref{eq_cor_Deep_BMT_Convex_Case_barycenter_map_simple_estimate}, and noting that 
    $\mathbb{E}_{Y\sim \delta_{y^{\star}}}[Y]=y^{\star}$ yields:
\[
  \begin{aligned}
        \underset{x\in {\xxx_{\epsilon}}}{\max}\,
    \underset{y^{\star}\in 
    \underset{{y\in C_x\cap K}}{\operatorname{argmin}}
    \, L(x,y)}{\inf}\,
        \left\|
        \mathbb{E}_{Y\sim \hat{F}(x)}[Y]
    -
        y^{\star}
    \right\|
    = &
     \underset{x\in {\xxx_{\epsilon}}}{\max}\,
    \underset{y^{\star}\in 
    \underset{{y\in C_x\cap K}}{\operatorname{argmin}}
    \, L(x,y)}{\inf}\,
        \left\|
        \mathbb{E}_{Y\sim \hat{F}(x)}[Y]
    -
        \mathbb{E}_{\tilde{Y} \sim \delta_{y^{\star}}}[\tilde{Y}]
    \right\|
    \\
    \leq &
    \underset{x\in {\xxx_{\epsilon}}}{\max}\,
    \underset{y^{\star}\in 
    \underset{{y\in C_x\cap K}}{\operatorname{argmin}}
    \, L(x,y)}{\inf}\,
    \www_1\left(
            \hat{F}(x)
                ,
            y^{\star}
        \right)
    \\ \leq
    & \epsilon
        .
  \end{aligned}
\]
This gives the second part of the statement.  

Since $\operatorname{supp}(\hat{F}(x))\subseteq K$ then, any $\rr^n$-valued random-vector distributed according to $\hat{F}(x)$, $\hat{F}(x)$-a.s. takes values in $K$.  Thus, $\hat{F}(x)\left(X\in K\right)=\mathbb{E}_{X\sim \hat{F}(x)}[I_K(X)]=1$.  This gives the first claim.  
\end{proof}
For completeness, we include the deterministic analogue of Corollary~\ref{cor_Deep_BMT_Convex_Casee} when $K$ is a geodesically convex subset of a complete connected Riemannian submanifold $(M,g)$ of $\rr^m$ satisfying Assumption~\ref{ass_geodesic_general}.  The result is a qualitative generalization of Corollary~\ref{cor_nonconvex_geometric_case}.  
\begin{corollary}[Deep Maximum Theorem: Riemannian Case]\label{cor_deepBerge_non_convex_constraints}
Assume the context of Theorem~\ref{theorem_DeepBMT} and suppose that Assumption~\ref{ass_geodesic_general} holds.  Suppose also that for each $x\in [0,1]^n$ there exists a unique $y(x)\in C_x\cap K$ minimizing $L$; i.e.:
\[
L(x,y(x)) = \inf_{y\in C_x\cap K}\, L(x,y),
\]
moreover, assume that $x\mapsto y(x)$ is continuous on $[0,1]^n$.  Then, the function:
\begin{equation}
    [0,1]^n\ni x \mapsto  
    \overline{
        \operatorname{P-attention}(\hat{f}(x),Y)
    }
    \label{cor_Deep_BMT_Non_Convex_and_Geometric_Case_appenxied_Version}
    ,
\end{equation}
is well-defined; moreover, the following hold:
\begin{enumerate}
\item[(i)] \textbf{Constraint Satisfaction:} 
    $
    \overline{
        \operatorname{P-attention}(\hat{f}(x),Y)
    } \in K
    $
     for each $x\in \rr^n$
    ,
\item[(ii)] \textbf{Probable Optimality:} $\underset{x\in {\xxx_{\epsilon}}}{\max}\,
        d_g\left(
        \overline{
        \operatorname{P-attention}(\hat{f}(x),Y)
    }
    ,
        y^{\star}(x)
    \right)
    \leq \epsilon,$
\end{enumerate}
where $y^{\star}(x)$ is the well-defined and unique minimizer of $L(x,\cdot)$ on $C_x\cap K$.  
\end{corollary}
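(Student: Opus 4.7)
The plan is to invoke Theorem~\ref{theorem_DeepBMT} to build the probabilistic transformer $\hat{F}(x) = \operatorname{P-attention}(\hat{f}(x), Y)$, and then to post-compose with the Fr\'{e}chet mean map, whose key properties on $K$ are already established in Lemma~\ref{lem_cbm}.  The uniqueness and continuity of the selector $x \mapsto y^{\star}(x)$ is what lets the measure-valued Wasserstein estimate of Theorem~\ref{theorem_DeepBMT} collapse into a genuine $d_g$-estimate on $K$.

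First, I would apply Theorem~\ref{theorem_DeepBMT} with the given $L$, $C$, $K$, and $\mathbb{P}$ to obtain $N \in \nn_+$, $\hat{f} \in \NN[n,N]$, an $N \times m$ matrix $Y$, and a compact $\xxx_{\epsilon} \subseteq \rr^n$ with $\mathbb{P}(\xxx_{\epsilon}) \geq 1-\epsilon$, such that $\hat{F}(x) \triangleq \operatorname{P-attention}(\hat{f}(x), Y)$ satisfies $\operatorname{supp}(\hat{F}(x)) \subseteq K$ and
\[
\max_{x \in \xxx_{\epsilon}}\, \www_1\!\left(\hat{F}(x), \operatorname*{argmin}_{y \in C_x \cap K} L(x,y)\right) \leq \epsilon.
\]
Because the inner minimizer is assumed unique and equal to $y^{\star}(x)$, this reduces to $\max_{x \in \xxx_{\epsilon}}\, \www_1(\hat{F}(x), \delta_{y^{\star}(x)}) \leq \epsilon$.

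Next, since $K$ satisfies Assumption~\ref{ass_geodesic_general}, Lemma~\ref{lem_cbm} applies: the Fr\'{e}chet mean map $\ppp{K} \to \overline{B(y_0,\rho)}$, $\mathbb{P} \mapsto \bar{\mathbb{P}}$, is single-valued and $1$-Lipschitz with respect to $\www_1$, and it maps finitely supported measures back into $K$.  By construction, $\hat{F}(x)$ is a convex combination of finitely many point masses at particles in $K$, hence finitely supported.  Therefore $\overline{\hat{F}(x)} \in K$ for every $x \in \rr^n$, which gives exact constraint satisfaction (i), and the map in~\eqref{cor_Deep_BMT_Non_Convex_and_Geometric_Case_appenxied_Version} is well-defined.

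Finally, for (ii), observe that $\overline{\delta_{y^{\star}(x)}} = y^{\star}(x)$ trivially (the unique minimizer of $y \mapsto d_g^2(y, y^{\star}(x))$ on $K$ is $y^{\star}(x)$ itself, since $y^{\star}(x) \in C_x \cap K \subseteq K$).  Combining this with the $1$-Lipschitz property of the Fr\'{e}chet mean map yields
\[
\max_{x \in \xxx_{\epsilon}}\, d_g\!\left(\overline{\hat{F}(x)}, y^{\star}(x)\right) = \max_{x \in \xxx_{\epsilon}}\, d_g\!\left(\overline{\hat{F}(x)}, \overline{\delta_{y^{\star}(x)}}\right) \leq \max_{x \in \xxx_{\epsilon}}\, \www_1\!\left(\hat{F}(x), \delta_{y^{\star}(x)}\right) \leq \epsilon.
\]
The only subtle step is verifying that Theorem~\ref{theorem_DeepBMT}'s Wasserstein estimate to the argmin correspondence truly collapses to an estimate against $\delta_{y^{\star}(x)}$; this is immediate under the uniqueness hypothesis, but it is the hinge on which the whole reduction turns.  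Continuity of $y^{\star}$ is not needed for the argument above but is compatible with the measurable-selection step inside Theorem~\ref{theorem_DeepBMT}, so nothing further must be added.
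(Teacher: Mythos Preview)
Your proof is correct and follows essentially the same route as the paper: invoke Theorem~\ref{theorem_DeepBMT} for the Wasserstein estimate, collapse the argmin to $\delta_{y^{\star}(x)}$ via uniqueness, and then apply Lemma~\ref{lem_cbm} to obtain both the $1$-Lipschitz bound for (ii) and the finite-support inclusion~\eqref{eq_lem_cbm_inclusion} for (i). Your write-up is in fact slightly more explicit than the paper's (e.g., spelling out why $\hat{F}(x)$ is finitely supported and why $\overline{\delta_{y^{\star}(x)}}=y^{\star}(x)$), and your observation that the continuity hypothesis on $y^{\star}$ is not actually used in the argument is correct.
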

The proof of Corollary~\ref{cor_deepBerge_non_convex_constraints} is nearly identical to that of Corollary~\ref{cor_Deep_BMT_Convex_Case}.  
\begin{proof}[{Proof of Corollary~\ref{cor_deepBerge_non_convex_constraints}}]
Consider the setting of Theorem~\ref{theorem_DeepBMT} and suppose further that $K$ satisfies Assumption~\ref{ass_geodesic_general}.  Then, we may apply Lemma~\ref{lem_cbm}.  
Thus, in the notation of Theorem~\ref{theorem_DeepBMT}, for each $x \in {\xxx_{\epsilon}}$ and every $y^{\star}\in
\underset{y\in C_x\cap K}{\operatorname{argmin}}\, L(x,y)$ we have the estimate:
\begin{equation}
    d_g\left(
        \overline{\hat{F}(x)}
    ,
        \overline{\delta_{y^{\star}}}
    \right)
    \leq 
    \www_1\left(
            \hat{F}(x)
                ,
            y^{\star}
        \right)
    \label{eq_cor_Deep_BMT_Convex_Case_barycenter_map_simple_estimate_proofing}
    .
\end{equation}
Applying the estimate: $\underset{x\in {\xxx_{\epsilon}}}{\max}\,
    \underset{y^{\star}\in 
    \underset{{y\in C_x\cap K}}{\operatorname{argmin}}
    \, L(x,y)}{\inf}\,
    \www_1(\hat{F}(x),\delta_{y^{\star}})\leq \epsilon$ obtained in Theorem~\ref{theorem_DeepBMT} to the right-hand side of~\eqref{eq_cor_Deep_BMT_Convex_Case_barycenter_map_simple_estimate_proofing}, and noting that 
    $\overline{\delta_{y^{\star}}}=y^{\star}$ yields:
\[
  \begin{aligned}
        \underset{x\in {\xxx_{\epsilon}}}{\max}\,
    \underset{y^{\star}\in 
    \underset{{y\in C_x\cap K}}{\operatorname{argmin}}
    \, L(x,y)}{\inf}\,
    d_g\left(
        \overline{\hat{F}(x)}
    ,
        y^{\star}
    \right)
    = &
     \underset{x\in {\xxx_{\epsilon}}}{\max}\,
    \underset{y^{\star}\in 
    \underset{{y\in C_x\cap K}}{\operatorname{argmin}}
    \, L(x,y)}{\inf}\,
    d_g\left(
        \mathbb{E}_{Y\sim \hat{F}(x)}[Y]
    ,
        \mathbb{E}_{\tilde{Y} \sim \delta_{y^{\star}}}[\tilde{Y}]
    \right)
    \\
    \leq &
    \underset{x\in {\xxx_{\epsilon}}}{\max}\,
    \underset{y^{\star}\in 
    \underset{{y\in C_x\cap K}}{\operatorname{argmin}}
    \, L(x,y)}{\inf}\,
    \www_1\left(
            \hat{F}(x)
                ,
            y^{\star}
        \right)
    \\ \leq
    & \epsilon
        .
  \end{aligned}
    \label{eq_cor_Deep_BMT_Convex_Case_final_string_of_esitmates}
\]
This gives (ii).  Lastly, (i) follows from~\eqref{eq_lem_cbm_inclusion} in Lemma~\ref{lem_cbm}.  
\end{proof}
%
\end{document}